\begin{document}

%

%

\twocolumn[

\aistatstitle{Interactive Learning with Pricing for \\ Optimal and Stable Allocations in Markets}

\aistatsauthor{ Yigit Efe Erginbas \And Soham Phade \And  Kannan Ramchandran }

\aistatsaddress{ UC Berkeley \And Salesforce Research \And UC Berkeley}]


\begin{abstract}
Large-scale online recommendation systems must facilitate the allocation of a limited number of items among competing users while learning their preferences from user feedback. As a principled way of incorporating market constraints and user incentives in the design, we consider our objectives to be two-fold: maximal social welfare with minimal instability. To maximize social welfare, our proposed framework enhances the quality of recommendations by exploring allocations that optimistically maximize the rewards. To minimize instability, a measure of users' incentives to deviate from recommended allocations, the algorithm prices the items based on a scheme derived from the Walrasian equilibria. Though it is known that these equilibria yield stable prices for markets with known user preferences, our approach accounts for the inherent uncertainty in the preferences and further ensures that the users accept their recommendations under offered prices. To the best of our knowledge, our approach is the first to integrate techniques from combinatorial bandits, optimal resource allocation, and collaborative filtering to obtain an algorithm that achieves sub-linear social welfare regret as well as sub-linear instability. Empirical studies on synthetic and real-world data also demonstrate the efficacy of our strategy compared to approaches that do not fully incorporate all these aspects.
\end{abstract}

\section{INTRODUCTION}

Online recommendation systems have become an integral part of our socioeconomic life with the rapid increases in online services that help users discover options matching their preferences. 
Despite providing efficient ways to discover information about the preferences of users, they have played a largely complementary role in searching and browsing with little consideration of the accompanying \emph{markets} within which items are offered to users at certain prices. Indeed, in many real-world scenarios, the recommendation of items
that have associated notions of limited capacities naturally give rise to a market setting where the users compete for the \emph{allocation} of these items.

Allocation constraints and underlying marketplaces are common in numerous recommendation contexts such as
point-of-interest recommendation (Yelp), 
accommodation (Airbnb, Booking.com),
e-commerce (Amazon, eBay),
ride-share (Uber, Lyft), or
job search (TaskRabbit, LinkedIn). 
As similar systems become more ubiquitous and impactful in the broader aspects of daily life, there is a huge application drive and potential for delivering recommendations that respect the requirements of the market. Therefore, it is crucial to consider market-aware recommendation systems to maximize the user experience.

\begin{figure}[t] 
\centering
\includegraphics[width= 0.4 \textwidth]{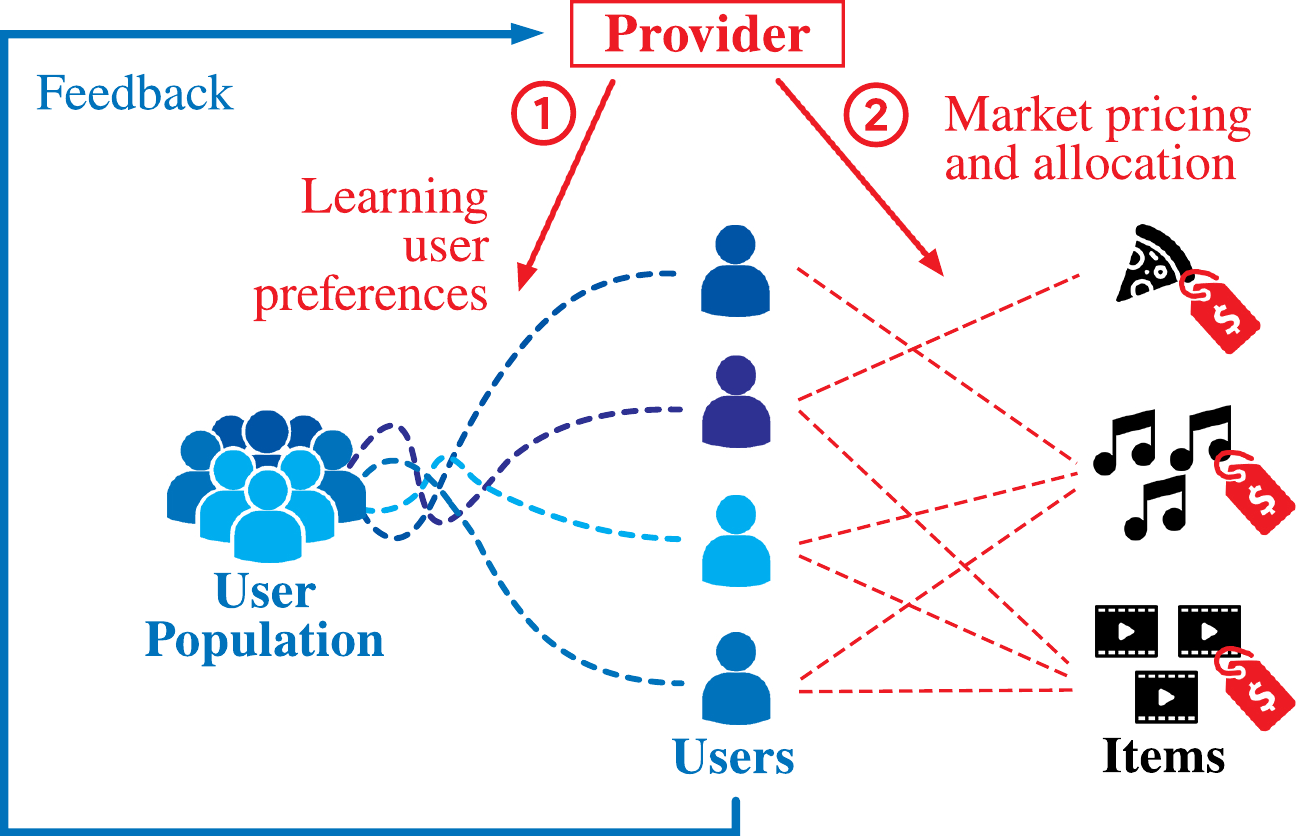}
\caption{The provider interactively learns the user preferences to achieve socially optimal and stable allocations.}
\label{system_diagram}
\vspace{-19pt}
\end{figure}

\textbf{Main Challenges: }
We model user preferences as rewards that users obtain by consuming different items, while social welfare is the aggregate reward over the entire system comprising multiple users with heterogeneous preferences. The \emph{provider}, who continually recommends items to the users, receives interactive feedback from them. The provider's goal is to maximize social welfare at all times while respecting the (time-varying) allocation constraints: indeed we consider system dynamics in terms of user demands and item capacities that change with time to be an important aspect of our problem. In the process of identifying the best allocation of items to target users, the provider encounters challenges regarding two separate aspects of the problem. The first challenge is to learn user preferences from interactive feedback, and the second is to find allocations and pricing that will result in maximal social welfare within the constraints of the market.

The first challenge of \emph{learning} arises from the fact that the provider does not have exact knowledge of the user preferences ahead of time, and hence has to learn them while continually taking action. In order to learn user preferences efficiently, previous works have established interactive recommender systems that query the users with well-chosen recommendations \citep{kawale_2015, zhao_2013}. Typically, these works consider a setting where a single user arrives at each round, and the system makes a recommendation that will match the user's preferences.
However, this approach is no longer applicable in settings having an associated market structure, as the simultaneous recommendations made to different users must also respect the constraints of the market.

This brings us to the second challenge that relates to the \emph{allocation} aspect of the problem induced by market constraints. Even if providing the users with their most preferred items would result in high rewards, such an allocation may not respect the constraints of the market in general. Therefore, to achieve better outcomes, the provider needs to take the constraints into account and only make allocations within the limits of these constraints. However, since the users care only to maximize their own gains, these allocations may be at odds with the incentives of the users. 

One common solution to this problem is to employ \emph{pricing} mechanisms. As the prices provide users with incentives to change their behavior, they can be used to induce \emph{stable} outcomes in which no user has an incentive to deviate from the recommended allocation. The celebrated theory of competitive equilibrium introduced by \cite{arrow_1951} shows that stable allocation and prices can be computed when the user preferences are \emph{fully known}. However, when the preferences are not known, it is not clear whether stability is even achievable. In this work, we address this question in detail and show that it is indeed possible to be stable by learning preferences efficiently. We accomplish this by learning the relevant user preferences efficiently and showing that instability grows only sublinearly over time so as to guarantee exact stability asymptotically.

Besides achieving stability, it is important to give users the choice of determining whether to accept or reject their recommendation based on the prices. To this end, we assume that a recommendation is accepted by a user only if the user expects to obtain more reward from this recommended item than its price. Therefore, while deciding on the prices of items, the provider also needs to be aware of user choices to achieve higher social welfare. To the best of our knowledge, we are not aware of any prior work that introduces price-based choice in online recommendation systems.

\textbf{Interactive Learning for Allocation and Pricing (ILAP): } As depicted in Figure \ref{system_diagram}, the provider has two tasks: 
(1) to learn user preferences by making recommendations that will give rise to informative responses, 
(2) to find allocations and pricing that will result in maximal social welfare with minimal instability. 
With these goals, we develop a market-aware mechanism for the provider. By recommending items, the provider helps the users to narrow down their options so that users can evaluate their preferences among a smaller set of items. In addition, being aware of the market structure, the provider determines the item prices that play the role of an intermediary for stabilizing the market.

Since there exists a trade-off between maximizing social welfare using historical data and gathering new information to improve performance in the future, the provider encounters the well-known \emph{exploration-exploitation} dilemma. In the literature of interactive recommendations, this dilemma is typically formulated as a multi-armed bandit problem where each arm corresponds to the recommendation of an item to the target user \citep{zhao_2013, barraza_2017, wang_2019}. After the provider makes recommendations to the user, the user feeds their reward information back to the provider to improve future recommendations. However, in contrast to these prior works, our setting further requires that a collection of actions taken for different users does not cause instability and respects the constraints of the market. 

Based on the standard OFU (Optimism in Face of Uncertainty) principle \citep{dani_2008, abbasi_2011}, we devise a procedure that estimates the mean reward values optimistically so as to solve the system problem of allocation and pricing. 
As is standard with OFU-based methods, our algorithm maintains a confidence set of the mean rewards for all user-item pairs. If it has less information about some user-item allocation pair, the confidence set becomes larger in the corresponding direction. Then, due to optimism, the algorithm becomes more inclined to attempt the corresponding allocation pairs to explore and collect more information. To reduce instability, it chooses the prices based on the equilibrium prices resulting from the problem with the optimistic estimation of the mean rewards. However, instead of directly offering these equilibrium prices, it reduces the prices by an amount that depends on the current confidence of the algorithm, to ensure that recommendations are likelier to be accepted.

To capture the correlation between the preferences of different users for different items, we employ latent factor (also known as matrix factorization) models that have been widely applied in recommender systems \citep{resnick_1997, sarwar_01, schafer_2007, bennett_2007, koren_2009}. We associate each user and item with an unknown feature vector and assume that expected rewards can be represented through a linear interaction of these vectors. We first consider the scenario where the features of the items are available as contexts to the provider and the provider only needs to learn the hidden information regarding the users. Then, relaxing this assumption, we consider the case where all of the feature vectors are unknown, which is equivalent to assuming that the expected reward matrix is low rank.

\textbf{Our Contributions}
\vspace{-3pt}
\begin{itemize}[nosep, labelindent= 0pt, align= left, labelsep=0.4em, leftmargin=*]
    \item We pose the problem of making recommendations that will facilitate socially optimal and stable allocation of items (with time-varying capacities) to users (with time-varying demands). 
    \item We model the structure of the preferences using linear latent factors. Considering settings with and without contextual information, we propose algorithms that achieve sublinear social welfare regret and instability bounds.
    \item For the setting where features of the items are given as context, we obtain an algorithm that achieves \smash{$\widetilde{\mathcal{O}} ( (NR)^{1/4} (nMT)^{3/4})$} social welfare regret and instability for a problem with $T$ rounds, $N$ users (at most $n$ of which are simultaneously active), $M$ items and length-$R$ feature vectors. Additionally, if the users are not allowed to reject the items, we show that the algorithm can achieve $\widetilde{\mathcal{O}} ( \sqrt{NMnRT} )$ regret and instability.
    \item For the more challenging setting where the provider has no contextual information about the items, we obtain \smash{$\widetilde{\mathcal{O}} ( ((N+M)R)^{1/4} (NMT)^{3/4})$}  regret and instability.
\end{itemize}

\textbf{Experiments:} We run experiments both on synthetic and real-world datasets to show the efficacy of the proposed algorithms. We demonstrate the significance of various aspects of our framework by contrasting our strategy with algorithms that fall short of capturing all of the components. Results show that the proposed algorithm can obtain significant improvements over these naive approaches.

\subsection{Related Work}

\textbf{Combinatorial Multi-Armed Bandits (CMAB):} The semi-bandits framework of \cite{audibert_2011} and the CMAB frameworks of \cite{chen_2013} and \cite{kveton_2015} model problems where a player chooses a subset of arms in each round and observes random rewards from the played arms. Since the allocations in our scenario correspond to selecting a subset of user-item pairs, it shows parallelism with these frameworks. However, the actions of the provider in our problem are not limited to the allocation, but also include the choice of accompanying pricing. Since the selection of prices affects social welfare and instability, it adds an additional degree of complexity to the problem of the provider, as we address in this work.

\textbf{Structured Linear Bandits:} The structured linear bandit frameworks of \cite{johnson_2016} and \cite{combes_2017} as well as low-rank linear bandit framework of \cite{lu_2021} offer methods to deal with the intrinsic structures between different actions in bandit problems. However, these works only concentrate on linear reward mechanisms (which would correspond to only observing the total reward of all allocations) while our approach has the additional ability to leverage the combinatorial nature of the reward feedback in our problem setting.

\textbf{Bandits in Economics:} Approaches based on bandits have also been applied to other economic contexts. One recent line of literature studies algorithms for learning socially-optimal matching in two-sided markets without transfers or incentives \citep{liu_2020, johari_2021}. \cite{jagadeesan_2021} considers learning stable one-to-one matching with transfers, but their model does not capture the effects of transfers on whether the users accept or reject the offered match. In the context of economics, bandit methods have also been applied to dynamic pricing \citep{kleinberg_2003}, incentivizing exploration \citep{frazier_2014} learning under competition \citep{aridor_2019}. 

\textbf{Pricing Mechanisms for Stable Outcomes:} 
In the literature of microeconomics, one of the standard approaches to discovering optimal allocations with stable prices is through a process called \emph{tâtonnement} (groping, trial-and-error) \citep{walras_2014}. During this process, each user repeatedly submits its demand at prices chosen by the provider while the provider successively adjusts the prices in response to the user's demand, so that capacity constraints are satisfied in equilibrium \citep{arrow_1951}. Since this equilibrium ensures that the items are allocated to users that will obtain the largest reward, it has found applications in various contexts such as auction design \citep{smith_1991} or bandwidth allocation over networks \citep{kelly_97}. However, these mechanisms assume that the users know their preferences for all possible items and require the users to repeatedly report their potential demands at many different prices. Since it might take many iterations until convergence, asking the users to repeatedly respond would create an unacceptable level of burden for them. Our approach circumvents both of these obstacles by learning the user preferences and making relevant recommendations.



\textbf{Recommendation with Capacity Constraints:} \cite{christakopoulou_2017} and \cite{makhijani_2019} use the notion of constrained resources to model and solve the problem of recommendation with capacity constraints. However, these works only consider optimizing the recommendation accuracy subject to capacity constraints without any consideration of the interactive learning mechanisms that discover user preferences through recommendations. The rotting bandit framework of \cite{levine_2017} can also model congestion in recommendation settings by allowing each arm’s reward to depend on the number of times it was played in the past. However, their model makes recommendations to sequentially arriving users instead of considering a market that contains multiple users that simultaneously interact with the provider.

\section{PROBLEM SETTING}
\label{sec:setting}

We use bold font for vectors $\mathbf{x}$ and matrices $\mathbf{X}$, and calligraphic font $\mathcal{X}$ for sets. For a vector $\mathbf{x}$, we denote its $i$-th entry by $x_i$. For a matrix $\mathbf{X}$, we denote its $(i,j)$-th entry by $X_{ij}$, $i$-th row by $[\vect{X}]_{i,:}$ and $j$-th column by $[\vect{X}]_{:,j}$. We denote the inner product of two matrices by $\langle \vect{A}, \vect{B} \rangle = \tr (\vect{A}^\mathrm{T} \vect{B})$ and the Hadamard (element-wise) product of two matrices by $\vect{A} \circ \vect{B}$.  We denote the $L_{2,2}$ (Frobenius) norm of a matrix by $\|\vect{A}\|_\text{F} = \sqrt{\langle \vect{A}, \vect{A} \rangle}$, and the $L_{2,\infty}$ norm of a matrix by $\|\vect{A}\|_{2, \infty} = \max_{i} \|[\vect{A}]_{i, :}\|_2$.

Suppose the market consists of a set of users $\mathcal{N}$ of size $N$ and a set of items $\mathcal{I}$ of size $M$. The items are allocated to the users in multiple rounds denoted by $t \in \mathbb{N}$. Allocation of an item $i \in \mathcal{I}$ to a user $u \in \mathcal{N}$ results in a random reward that has a distribution unknown to the system provider. The expected reward obtained from allocating item $i$ to user $u$ is denoted by $\Theta^*_{ui}$, and these values are collected into the reward matrix denoted by $\mathbf{\Theta}^* \in \mathbb{R}^{N \times M}$. 

We assume that each item has (time-varying) capacity that corresponds to the maximum number of users it can be allocated to. We denote the capacity of item $i \in \mathcal{I}$ at round $t$ by $c^{t}_{i}$, and collect them into vectors $\vect{c}_t \in \mathbb{R}^{M}$. Similarly, each user has a (time-varying) demand that corresponds to the maximum number of different items it can get allocated. We denote the demand of user $u \in \mathcal{N}$ at round $t$ by $d^{t}_{u}$, and collect them into vectors $\vect{d}_t \in \mathbb{R}^{N}$. Therefore, each item can only be allocated to at most $c^{t}_{i}$ different users, while each user can only get allocated at most $d^{t}_{u}$ different types of items in the round $t$. We shall call these the \emph{allocation constraints}. We also denote the set of active users at round $t$ by $\mathcal{N}_t := \{u \in \mathcal{N} : d^{t}_{u} > 0\}$ and denote the maximum number of simultaneously active users by $n := \max_{t} |\mathcal{N}_t|$.

Let $\mathbf{X}_{t}$ denote the \emph{allocation matrix} for round $t$ where the $(u, i)$-th entry is one if user $u$ is allocated item $i$ at round $t$, and zero otherwise. Due to the allocation constraints, all $\mathbf{X}_{t}$ must belong to the set of valid allocations defined as
\begin{equation*}
    \mathcal{X}_t := \{ \mathbf{X} \in \{0, 1\}^{N \times M} : \mathbf{X} \mathds{1} \leq \mathbf{d}_t \text{ and } \mathbf{X}^\textrm{T} \mathds{1} \leq \mathbf{c}_t\},
\end{equation*}
where the inequalities are entry-wise and $\mathds{1}$ denotes the all-ones vector of appropriate size.

\subsection{Optimal and Stable Allocations}
\label{sect_opt_allocations}

An allocation $\vect{X}$ is optimal if it maximizes the social welfare $\mathcal{W}(\vect{X}) := \langle \vect{X}, \mathbf{\Theta}^* \rangle$. 
Hence, if the provider had 
known the mean reward matrix $\vect{\Theta}^*$, the optimal allocation $\vect{X}^*_t$ at time $t$ would be obtained by solving the integer program
\begin{equation}
    \vect{X}^*_t \in \argmax_{\vect{X} \in \mathcal{X}_t} \; \langle \vect{X}, \vect{\Theta}^* \rangle .
    \label{integer_num}
\end{equation}
This integer program can be relaxed to a linear program by dropping the integral constraints (setting $X_{ui} \in [0, 1]$). Since the integrality gap of this relaxation is one \footnote{The integrality gap is the maximum ratio between the optimal values of the relaxation and integer program. (see Appendix \ref{appendix_num})}, the relaxed problem can be used to obtain a solution for \eqref{integer_num}. 

On the other hand, an allocation $\vect{X}$ is stable with prices $\vect{p}$ if for each individual user, the allocated set of items offers the maximum gains among all possible allocations.
Here, gains are measured in terms of the surplus they receive from consuming the allocated items at the given prices.
Thus, the notion of stability corresponds to the alignment between the market outcome and each user's preferences. Formally,
\begin{definition}
Let $\vect{x} \in \{0,1\}^M$ denote the allocation made to a user and let $\vect{\theta}_u^* = [\vect{\Theta}^*]_{u, :}$ denote the $u$-th row of the mean reward matrix. Then, the surplus of user $u$ from allocation $\vect{x}$ under prices $\vect{p}$ is given by
\begin{equation*}
    \psi_{u}(\vect{x}, \vect{p}) := \langle \vect{x}, \vect{\theta}_u^* - \vect{p} \rangle.
\end{equation*}
An allocation $\vect{X} \in \mathcal{X}_t$ with prices $\vect{p}$ is stable if every user's allocation maximizes their own surplus, i.e, 
\begin{equation}
\vect{x}_u \in \argmax_{\vect{x} \in \mathcal{X}^t_u} \psi_{u}(\vect{x}, \vect{p}),
\label{max_surplus}
\end{equation}
for all $u \in \mathcal{N}$, where $\vect{x}_u = [\vect{X}]_{u,:}$ is the $u$-th row of $\vect{X}$ and $\mathcal{X}_u^t := \{\vect{x} \in \{0,1\}^{M} : \mathds{1}^\mathrm{T} \vect{x} \leq d^t_u\}$ is the set of allocation vectors that satisfy the constraints of user $u$ at time $t$. 
\label{def:stability}
\end{definition}


A fundamental property of this market model shows that if the allocation and prices are set as the primal and dual optimal variables of the relaxed allocation problem, the market outcome becomes stable \citep{walras_2014}. To see this, we can consider the (partial) Lagrangian 
\begin{equation*}
\mathcal{L}(\vect{X}, \vect{p}; \vect{\Theta}) :=  \langle \vect{X}, \vect{\Theta} \rangle + \vect{p}^\textrm{T} ( \vect{c}_t - \vect{X}^\textrm{T} \mathds{1}),
\end{equation*}
where $\vect{p} \geq 0$ are the Lagrange multipliers associated with the capacity constraints. Then, the corresponding dual optimization problem is given by $\min_{\vect{p}  \geq 0} g(\vect{p}; \vect{\Theta}^*)$ where 
\begin{equation}
    g(\vect{p}; \vect{\Theta}) := \max_{\mathbf{X} \in [0, 1]^{N \times M}} \left \{ \mathcal{L}(\vect{X}, \vect{p}; \vect{\Theta}) \middle| \mathbf{X} \mathds{1} \leq \mathbf{d}_t \right\}
\label{eq:dual_function}
\end{equation}
is the dual function. Then, due to linear programming duality, the primal-dual optimal variables $(\vect{X}^*_t, \vect{p}^*_t)$ satisfy the stability condition given in \eqref{max_surplus}. Therefore, if the provider chooses the allocation $\vect{X}^*_t$ with prices $\vect{p}^*_t$, the market outcome is both optimum and stable. However, since the mean reward matrix $\vect{\Theta}^*$ is unknown to the provider, it cannot directly solve the allocation and pricing problem as given.

Thus, we consider an online framework where the provider makes sequential recommendations to users and the users provide feedback about their recommendations so that the provider can learn user preferences. For clarity of exposition, we first present the problem without allowing the users to reject their offers in Section \ref{section_formulate} and then extend our formulation to capture the user choices in Section \ref{section_ar}. 


\subsection{Problem Formulation}
\label{section_formulate}

In this section, we formulate the provider's problem and its objective. At each round $t$, the provider determines an allocation $\vect{X}_t \in \mathcal{X}_t$ along with a price vector $\vect{p}_t \in \mathbb{R}^{M}_{+}$. For convenience, we also define for each allocation $\vect{X}_t \in \mathcal{X}_t$ an equivalent set representation $\mathcal{A}_t \subseteq \mathcal{N} \times \mathcal{I}$ such that $(u,i) \in \mathcal{A}_t$ if and only if user $u$ is allocated item $i$. 

Since exact stability is unattainable when preferences are unknown, we need to formalize a measure of instability that captures how far we are from exact stability. Intuitively, the users want to obtain the largest possible surplus under the current condition of the market described by the prices $\vect{p}_t$. Therefore, the level of instability under prices $\vect{p}_t$ is related to the gap between the surplus the user obtained from allocation $\vect{X}_t$ and the maximum surplus the user could have obtained from any allocation. For this reason, we define the instability of a user as follows.
\begin{definition}
Given an allocation $\vect{X} \in \mathcal{X}_t$ with prices $\vect{p}$, let $\vect{x}_u = [\vect{X}]_{u,:}$ denote the allocation made to user $u$. Then, the maximum surplus of user $u$ at time $t$ is
\begin{equation*}
    \psi^{*}_{t, u}(\vect{p}) := \max_{\vect{x} \in \mathcal{X}_u^t} \psi_{u}(\vect{x}, \vect{p}),
\end{equation*}
and the instability of user $u$ at time $t$ is the difference
\begin{equation*}
    \mathcal{S}^{t}_{u}(\vect{x}_u, \vect{p}) := \psi^*_{t, u}(\vect{p}) - \psi_{u}(\vect{x}_u, \vect{p}).
\end{equation*}
\label{def:user_instability}
\end{definition}

\begin{remark}
Since $\psi^*_{t, u}(\vect{p}) \geq \psi_{u}(\vect{x}_u, \vect{p})$ for any $\vect{x}_u \in \mathcal{X}_u^t$, the instability $\mathcal{S}^{t}_{u}(\vect{x}_u, \vect{p})$ is always non-negative. Furthermore, we see that instability $\mathcal{S}^{t}_{u}(\vect{x}_u, \vect{p})$ is zero for all users $u \in \mathcal{N}$ if and only if the market outcome is (exactly) stable according to Definition \ref{def:stability}.
\end{remark}

After the provider decides on an allocation $\vect{X}_t$ with prices $\vect{p}_t$, it observes a random feedback $R^{t}_{ui}$ for each user-item pair $(u,i) \in \mathcal{A}_t$ that is recommended. We denote by $H_t$ the history $\{\mathbf{X}_{\tau}, \vect{p}_{\tau}, (R^{\tau}_{ui})_{(u, i) \in \mathcal{A}_{\tau}}\}_{\tau = 1}^{t-1}$ of observations available to the provider when choosing the next allocation $\mathbf{X}_{t}$ and prices $\vect{p}_t$. The provider employs a policy $\pi = \{ \pi_t | t \in \mathbb{N}\}$, which is a sequence of functions, each mapping the history $H_t$ to an action $(\mathbf{X}_{t}, \vect{p}_t)$. 

The task of the provider is to repeatedly allocate the items to the users and choose the prices such that it can achieve two goals simultaneously: maximum \emph{social welfare} and minimum \emph{instability}. To measure the performance of policies in achieving the objectives of maximum social welfare and minimum instability, we define the following metrics: 

\begin{definition}
For a policy $\pi$, its social-welfare regret in $T$ rounds is defined as
\begin{equation*} \label{cumulative_regret}
    \mathcal{R}^{SW}(T, \pi) =  \sum_{t = 1}^{T} \left[ \langle \mathbf{X}^*_t, \mathbf{\Theta}^*\rangle - \mathcal{W}_t(\vect{X}_t) \right],
\end{equation*}
and its instability in $T$ rounds is defined as
\begin{equation*} \label{cumulative_instability}
    \mathcal{R}^{I}(T, \pi) =  \sum_{t = 1}^{T} \sum_{u \in \mathcal{N}} \mathcal{S}^{t}_{u}(\vect{x}^t_u, \vect{p}_t),
\end{equation*}
where $\vect{X}^*_t \in \argmax_{\vect{X} \in \mathcal{X}_t} \; \langle \vect{X}, \vect{\Theta}^* \rangle$ denotes optimum allocation at time $t$ and $\vect{x}^t_u = [\vect{X}_t]_{u,:}$ is the $u$-th row of $\vect{X}_t$.
\label{def:regret}
\end{definition}

\subsection{Allowing Users to Accept or Reject the Offers}
\label{section_ar}

In this section, we model the users' ability to accept or reject the offers based on their valuations and the prices of the offered items. When an item $i$ is offered to user
$u$ at price $p_i$, we assume that the user accepts the item and the system obtains a positive reward if and only if the user expects to get a positive surplus from this consumption, i.e. $\Theta^*_{ui} \geq p_i$. Consequently, we modify the definition of the social welfare of an allocation $\vect{X}$ with prices $\vect{p}$ as
\begin{equation*}
     \mathcal{W}_t(\vect{X}_t) := \langle \mathbf{X}, \mathbf{\Theta}^* \circ \mathds{1}\{ \vect{\Theta}^* \geq \mathds{1} \vect{p}^\mathrm{T} \}\rangle
\end{equation*}
and modify the definition of the surplus of user $u$ as
\begin{equation*}
    \psi_{u}(\vect{x}, \vect{p}) := \langle \vect{x}, (\vect{\theta}_u^* - \vect{p}) \circ \mathds{1}\{ \vect{\theta}_u^* \geq \vect{p} \} \rangle.
\end{equation*}
The definitions of maximum surplus and instability of each user in Definition \ref{def:user_instability} as well as the definitions of social welfare regret and instability in Definition \ref{def:regret} are also modified accordingly. Note that these modifications do not affect the feedback model and the provider is assumed to observe the random feedback $R^t_{ui}$ for each recommendation pair $(u,i) \in \mathcal{A}_t$ regardless of their acceptance.




\section{METHODOLOGY}
\label{sect_methodology}

In order to facilitate our analysis, we start with assumptions that are standard in the online learning literature.
\begin{assumption} 
For all $(u, i) \in \mathcal{N} \times \mathcal{I}$ and $t \in \mathbb{N}$, the rewards $R^{t}_{ui}$ are independent and $\eta$-sub-Gaussian with mean $\Theta^*_{ui} \in [0, 1]$, i.e., $\mathds{E}[\exp(\lambda (R^{t}_{ui} - \Theta^*_{ui}))] \leq \exp(\lambda \eta^2/2)$ almost surely for any $\lambda$.
\label{rew_assumptio}
\end{assumption}

As is common in recommendation settings, we model the correlation between users and items through a linear latent factor model. 
Each user $u$ (item $i$) is associated with a feature vector $\vect{f}_u \in \mathbb{R}^{R}$ ($\vect{\phi}_i \in \mathbb{R}^{R}$) in a shared $R$-dimensional space (typically $R \ll  N, M$), and the mean reward of each user-item allocation pair is given by a linear form.  

\begin{assumption}
\label{low_assum}
For all $(u, i) \in \mathcal{N} \times \mathcal{I}$, and $t \in \mathbb{N}$, the expected reward is given by
$\Theta^*_{ui} = \langle \vect{f}_u, \vect{\phi}_i \rangle$ with feature vectors satisfying $\|\vect{f}_u\|_2 \leq 1$ and $\|\vect{\phi}_i\|_2 \leq 1$.
\end{assumption}

In order to make use of initial historical data, we assume that the algorithm has access to an initial estimate $\vect{\Theta}_\circ$ that satisfies $\|\vect{\Theta}_\circ - \vect{\Theta}^* \|_\text{F} \leq G$. Note that one can also set $\vect{\Theta}_\circ = \vect{0}$ and let $G$ be a number satisfying $\|\vect{\Theta}^* \|_\text{F} \leq G$.

\subsection{Contextual Interactive Learning for Allocation and Pricing (CX-ILAP)}

We start our analysis by considering the case where the feature vectors $\vect{\phi}_i$ for the items $i \in \mathcal{I}$ are given as contextual information, but the feature vectors $\vect{f}_i$ for the users $u \in \mathcal{N}$ are unknown. Letting $\vect{\Phi} \in \mathbb{R}^{M \times R}$ be the matrix with rows $\vect{\phi}_i$, this assumption corresponds to asserting that the mean reward matrix $\vect{\Theta}^*$ belongs to the set
\begin{equation*}
\mathcal{F} := \{ \vect{\Theta} : \exists \vect{F} \in \mathbb{R}^{N \times R}, \vect{\Theta} = \vect{F} \vect{\Phi}^\mathrm{T}, \|\vect{F}\|_{2, \infty} \leq 1\}.
\end{equation*}

\begin{algorithm}[t]
\caption{\small{Interactive Learning for Allocation and Pricing}}
\begin{algorithmic}[1]
\Require parameters $T, \delta, \alpha, \gamma, \nu$, and initial estimate $\vect{\Theta}_\circ$
\For{$t = 1, 2, \dots, T$}
\IndentLineComment Find the regularized least squares estimate:
\State $ \widehat{\vect{\Theta}}_t = \argmin_{\vect{\Theta} \in \mathcal{F}} \left \{ L_{2,t}(\vect{\Theta}) + \gamma \|\vect{\Theta} - \vect{\Theta}_\circ \|_{\mathrm{F}}^2 \right \}$ \label{alg:line:least_squares}
\LineComment Construct the confidence set:
\State $\mathcal{C}_t := \{ \vect{\Theta} \in \mathcal{F} : \|\vect{\Theta} - \widehat{\vect{\Theta}}_t \|_{E^t_{2,\infty}} \leq \sqrt{\rho_t(\delta, \alpha, \gamma)} \}$
\LineComment Compute the OFU allocation:
\State $(\vect{X}_t, \vect{\Theta}_t) \in \argmax_{(\vect{X}, \vect{\Theta}) \in \mathcal{X}_t \times \mathcal{C}_t} \; \langle \vect{X}, \vect{\Theta} \rangle$
\LineComment Compute the prices:
\State $\bar{\vect{p}}_t \in \argmin_{\vect{p} \geq 0} g(\vect{p}; \vect{\Theta}_t)$
\State $\vect{p}_t = \bar{\vect{p}}_t - \nu \sqrt{w_t}$ for $w_t$ defined in \eqref{w_t_def}
\State Offer the allocation $\vect{X}_t$ together with prices $\vect{p}_t$
\State Observe $R^{t}_{ui}$ for all $(u, i) \in \mathcal{A}_{t}$
\EndFor
\end{algorithmic}
\label{alg_low}
\end{algorithm}

Our method, summarized in Algorithm \ref{alg_low}, follows the standard OFU (Optimism in the Face of Uncertainty) principle of  \cite{abbasi_2011}. It maintains a confidence set $\mathcal{C}_t$ which contains the true parameter $\vect{\Theta}^*$ with high probability and chooses the allocation $\vect{X}_t$ according to
\begin{equation}
(\vect{X}_t, \vect{\Theta}_t) \in \argmax_{(\vect{X}, \vect{\Theta}) \in \mathcal{X}_t \times \mathcal{C}_t} \; \langle \vect{X}, \vect{\Theta} \rangle.
\label{low_oful}
\end{equation}
Then, in order to construct its pricing strategy, it computes the optimal dual variables $\bar{\vect{p}}_t$ for the allocation problem described by $\vect{\Theta}_t$, i.e., it solves the dual problem
\begin{equation}
    \bar{\vect{p}}_t \in \argmin_{\vect{p} \geq 0} g(\vect{p}; \vect{\Theta}_t),
\label{price_selection}
\end{equation} 
where the dual function is as defined in \eqref{eq:dual_function}. Typically, the faster the confidence set $\mathcal{C}_t$ shrinks, the lower regret and instability we have. 
However, the main difficulty is to construct a series of $\mathcal{C}_t$ that leverage the combinatorial observation model 
as well as the structure of the mean reward matrix 
so that we have low regret bounds.
In this work, we consider constructing confidence sets that are centered around the regularized least square estimates. We let the cumulative squared prediction error of $\vect{\Theta}$ at time $t$ be
\begin{equation*}
    L_{2,t}(\vect{\Theta}) = \sum_{\tau=1}^{t-1} \sum_{(u, i) \in \mathcal{A}_\tau} (\Theta_{ui}  - R^{\tau}_{ui})^2,
\end{equation*}
and calculate the regularized least squares estimate as given in line~\ref{alg:line:least_squares} of Algorithm \ref{alg_low}.
Then, the confidence sets take the form $\mathcal{C}_t := \{ \vect{\Theta} \in \mathcal{F} : \|\vect{\Theta} - \widehat{\vect{\Theta}}_t \|_{E^t_{2,\infty}} \leq \sqrt{\rho_t} \}$, where $\rho_t$ is an appropriately chosen confidence parameter, and $\| \cdot \|_{E^t_{2,\infty}}$ is the regularized empirical $L_{2, \infty}$ norm, namely,
\begin{equation*}
    \| \vect{\Delta} \|_{E^t_{2,\infty}}^2 := \max_{u \in \mathcal{N}} \sum_{i\in \mathcal{I}} (n^{t}_{ui} + \gamma) \Delta_{ui}^2,
\end{equation*}
where $n^{t}_{ui} := \sum_{\tau=1}^{t-1} \mathds{1} \{(u,i) \in \mathcal{A}_\tau\}$ is the number of times item $i$ has been allocated to user $u$ before time $t$ (excluding time $t$).
Hence, the empirical norm is a measure of discrepancy that weighs the entries depending on how much they have been explored. 
Roughly speaking, since the confidence region is wider in directions that are not yet well-explored, the OFU step described in \eqref{low_oful} is more inclined to make allocations that include the corresponding user-item pairs. 
In order to obtain low-regret guarantees for the allocations, the first step is to choose the correct $\rho_t$ parameter such that $\mathcal{C}_t$ will contain the true parameter $\vect{\Theta}^*$ for all $t$ with high probability. 
The following Lemma establishes that, if we set $\rho_t = \rho_t^*(\delta, \alpha, \gamma)$, the resulting confidence sets have the desired properties.
\begin{lemma} For any $\delta, \alpha, \gamma > 0$, if the confidence sets are
\begin{equation}
    \mathcal{C}_t := \left\{ \vect{\Theta} \in \mathcal{F} : \|\vect{\Theta} - \widehat{\vect{\Theta}}_t \|_{E^t_{2,\infty}} \leq \sqrt{\rho_t^*(\delta, \alpha, \gamma)} \right\}
    \label{conf_sets_context}
\end{equation}
with the confidence parameter
\begin{align*}
    \rho_t^*(\delta, \alpha, \gamma) &:= 8 \eta^2 R \log \left(3 N / (\alpha \delta) \right) + 4 \gamma G^2 \\
    &+ 2 \alpha t \sqrt{M} \left [ 8 + \sqrt{8 \eta^2 \log(4MNt^2/\delta)} \right],  
\end{align*}
then, with probability at least $1 - 2 \delta$, $\mathcal{C}_t \ni \vect{\Theta}^*$, for all $t$ .
\label{conf_sets_lemma_context}
\end{lemma}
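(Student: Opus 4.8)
The statement is a standard ``confidence set for a regularized least‑squares estimator'' claim, and I would prove it by the usual recipe --- a basic inequality from optimality, concentration of a noise cross‑term, and a scalar quadratic solve --- but exploiting that, since $\vect{\Phi}$ is known, the whole problem decouples across users. Indeed the empirical loss $L_{2,t}(\vect{\Theta}) = \sum_u\sum_{\tau<t}\sum_{i:(u,i)\in\mathcal{A}_\tau}(\Theta_{ui}-R^\tau_{ui})^2$, the regularizer $\gamma\|\vect{\Theta}-\vect{\Theta}_\circ\|_{\mathrm F}^2 = \gamma\sum_u\|[\vect{\Theta}]_{u,:}-[\vect{\Theta}_\circ]_{u,:}\|_2^2$, and the constraint $\|\vect{F}\|_{2,\infty}\le 1$ all separate by row, so the minimizer in line~\ref{alg:line:least_squares} of Algorithm~\ref{alg_low} has rows $\widehat{\vect{\theta}}_{t,u}=\vect{\Phi}\widehat{\vect{f}}_{t,u}$ with $\widehat{\vect{f}}_{t,u}$ solving a per‑user constrained ridge problem. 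With $\vect{V}^t_u := \gamma\vect{\Phi}^{\mathrm T}\vect{\Phi}+\sum_{\tau<t}\sum_{i:(u,i)\in\mathcal{A}_\tau}\vect{\phi}_i\vect{\phi}_i^{\mathrm T}$ one has the identity $\|\vect{\Phi}\vect{f}-\vect{\Phi}\vect{f}'\|^2_{E^t_u}=\|\vect{f}-\vect{f}'\|^2_{\vect{V}^t_u}$, where $\|\vect{\delta}\|^2_{E^t_u}:=\sum_i(n^t_{ui}+\gamma)\delta_i^2$ and $\|\vect{\Delta}\|^2_{E^t_{2,\infty}}=\max_u\|[\vect{\Delta}]_{u,:}\|^2_{E^t_u}$. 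Comparing the optimal value at $\widehat{\vect{f}}_{t,u}$ against the feasible $\vect{f}_u$ (with $\vect{\theta}^*_u=\vect{\Phi}\vect{f}_u$, which exists since $\vect{\Theta}^*\in\mathcal{F}$), substituting $R^\tau_{ui}=\Theta^*_{ui}+\xi^\tau_{ui}$, and completing the square in the regularizer gives, for each $u$ and $t$,
\begin{equation*}
\|\widehat{\vect{\theta}}_{t,u}-\vect{\theta}^*_u\|^2_{E^t_u}\;\le\;2\langle\widehat{\vect{f}}_{t,u}-\vect{f}_u,\,\vect{m}^t_u\rangle+2\gamma\langle\widehat{\vect{f}}_{t,u}-\vect{f}_u,\,\vect{\Phi}^{\mathrm T}([\vect{\Theta}_\circ]_{u,:}-\vect{\theta}^*_u)\rangle,
\end{equation*}
where $\vect{m}^t_u:=\sum_{\tau<t}\sum_{i:(u,i)\in\mathcal{A}_\tau}\xi^\tau_{ui}\vect{\phi}_i$ is an $\mathbb{R}^R$‑valued martingale. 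The bias term is deterministic: Cauchy--Schwarz in the $\vect{V}^t_u$ geometry with $\vect{V}^t_u\succeq\gamma\vect{\Phi}^{\mathrm T}\vect{\Phi}$ and $\sum_u\|[\vect{\Theta}_\circ]_{u,:}-\vect{\theta}^*_u\|_2^2=\|\vect{\Theta}_\circ-\vect{\Theta}^*\|_{\mathrm F}^2\le G^2$ bound it by $2\sqrt{\gamma}G\,\|\widehat{\vect{\theta}}_{t,u}-\vect{\theta}^*_u\|_{E^t_u}$, which after the quadratic solve accounts for the $4\gamma G^2$ summand of $\rho_t^*$.

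The heart of the proof is the noise cross‑term. Cauchy--Schwarz gives $\langle\widehat{\vect{f}}_{t,u}-\vect{f}_u,\vect{m}^t_u\rangle\le\|\widehat{\vect{\theta}}_{t,u}-\vect{\theta}^*_u\|_{E^t_u}\cdot\|\vect{m}^t_u\|_{(\vect{V}^t_u)^{-1}}$, so I need a high‑probability bound on $\|\vect{m}^t_u\|_{(\vect{V}^t_u)^{-1}}$ uniform over the $N$ users, over all rounds $t$, and over the \emph{data‑dependent} $\widehat{\vect{f}}_{t,u}$. I would get this by covering: fix an $\alpha$‑net of the unit ball of $\mathbb{R}^R$, of size at most $(3/\alpha)^R$; for each fixed net direction $\vect{v}$ and each $u$, $\langle\vect{v},\vect{m}^t_u\rangle$ is a martingale whose round‑$\tau$ increment is, conditionally on the history, a linear combination of the independent $\eta$‑sub‑Gaussian noises $\{\xi^\tau_{ui}\}_{i:(u,i)\in\mathcal{A}_\tau}$ with coefficients $\langle\vect{\phi}_i,\vect{v}\rangle$, so a Hoeffding‑type inequality applies; a union bound over the net, the users, and the rounds (assigning a $\propto 1/t^2$ share of the failure budget to round $t$, which is summable and produces the $\log(\cdots t^2\cdots)$ factors) yields, on an event of probability $\ge 1-\delta$, the leading $8\eta^2 R\log(3N/(\alpha\delta))$ contribution once the net maximum is converted back to the true norm. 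Passing from the nearest net direction to $\widehat{\vect{f}}_{t,u}$ itself is controlled on a second event of probability $\ge 1-\delta$ on which the uniform envelope $|\xi^\tau_{ui}|\le\sqrt{8\eta^2\log(4MNt^2/\delta)}$ holds for all $\tau\le t,u,i$: using $|\langle\vect{\phi}_i,\widehat{\vect{f}}_{t,u}-\vect{v}\rangle|\le\alpha$ and that user $u$ has been allocated at most $\sum_{\tau<t}d^\tau_u\le Mt$ items, the change in the relevant inner product (equivalently, in the least‑squares objective) is at most $2\alpha t\sqrt{M}\,[\,8+\sqrt{8\eta^2\log(4MNt^2/\delta)}\,]$, which is exactly the third summand of $\rho_t^*$.

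Finally, on the intersection of the two events (probability $\ge 1-2\delta$) the basic inequality becomes, for every $u$ and $t$, a scalar inequality $x_u^2\le 2(a+\sqrt{\gamma}G)x_u+2b$ with $x_u:=\|\widehat{\vect{\theta}}_{t,u}-\vect{\theta}^*_u\|_{E^t_u}$, $a$ the net‑concentration bound, and $b$ the discretization term; solving it gives $x_u^2\le 4(a+\sqrt{\gamma}G)^2+4b$, which a routine arrangement of constants shows is at most $\rho_t^*(\delta,\alpha,\gamma)$, and taking the maximum over $u$ gives $\|\widehat{\vect{\Theta}}_t-\vect{\Theta}^*\|_{E^t_{2,\infty}}\le\sqrt{\rho_t^*(\delta,\alpha,\gamma)}$, i.e.\ $\vect{\Theta}^*\in\mathcal{C}_t$, for all $t$, with probability at least $1-2\delta$. \textbf{Main obstacle.} The covering argument in the second paragraph is where the work lies: one must keep the dominant dimension factor at $R$ (not $M$) by netting the $R$‑dimensional direction $\vect{f}$ and leaning on the factored structure rather than the raw $M$ coordinates, obtain the right $\log N$ and the right time‑uniformity, and --- most delicately --- track the $O(\alpha)$ discretization slack together with the sub‑Gaussian envelope precisely enough to land on the stated coefficients, in particular the linear‑in‑$t$ term; this is the price the uniform covering route pays, relative to a method‑of‑mixtures self‑normalized bound, in exchange for robustness to the constraint $\vect{\Theta}\in\mathcal{F}$ and for carrying over unchanged to the no‑context setting of the next subsection.
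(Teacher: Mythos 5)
Your overall skeleton matches the paper's: the per-user decoupling of the regularized least-squares problem (the paper's Lemma \ref{lemma_confidence_contextual}), the $\alpha$-net of the $R$-dimensional ball of size $(3/\alpha)^R$ which is what produces $8\eta^2R\log(3N/(\alpha\delta))$ after the union bound over $N$ users (Lemma \ref{lemma_confidence_contextual_user}), the second ``envelope'' event giving the $\log(4MNt^2/\delta)$ factor, and the two $\delta$-events yielding $1-2\delta$. Where you diverge is the middle of the argument, and that is exactly where your sketch has concrete problems. The paper never forms the vector martingale $\vect{m}^t_u$ or the $(\vect{V}^t_u)^{-1}$-norm: it concentrates the \emph{squared-loss differences} $L_{2,t}(\vect{\Theta}^{\alpha})-L_{2,t}(\vect{\Theta}^*)$ at each fixed net point via a time-uniform martingale exponential inequality at a fixed $\lambda$ (Lemmas \ref{lemma_lower_bound} and \ref{exp_martingale}), transfers to arbitrary $\vect{\Theta}$ by a discretization lemma on the losses (Lemma \ref{discr_lemma}), and then uses optimality of $\widehat{\vect{\Theta}}_t$ together with an add-the-regularizer/triangle-inequality step; this is what produces the $4\gamma G^2$ and the $2\alpha t\sqrt{M}\,[\,8+\sqrt{8\eta^2\log(\cdot)}\,]$ coefficients. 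Your route (basic inequality, then Cauchy--Schwarz against $\|\vect{m}^t_u\|_{(\vect{V}^t_u)^{-1}}$, then covering) needs, for each fixed net direction, a bound that is \emph{normalized by the random empirical norm} $\|\cdot\|_{\vect{V}^t_u}$, because the quadratic variation of $\langle \vect{v},\vect{m}^t_u\rangle$ depends on the (adaptive) allocation history; a plain Hoeffding bound with a $1/t^2$ budget gives a deviation growing with the worst-case variance (order $\sqrt{\eta^2 M t\log(\cdot)}$), not the $t$-free leading term $8\eta^2R\log(3N/(\alpha\delta))$. So at that point you need a Freedman/self-normalized inequality of exactly the type the paper's Lemma \ref{exp_martingale} supplies, plus care in converting a net maximum into $\|\vect{m}^t_u\|_{(\vect{V}^t_u)^{-1}}$ in a data-dependent geometry (note also that $\vect{V}^t_u$ can be singular when $\vect{\Phi}^{\mathrm T}\vect{\Phi}$ is, so the inverse norm needs a pseudo-inverse or a restriction to the relevant span). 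You flag this as the ``main obstacle,'' but it is not a bookkeeping issue: as stated, the concentration step does not deliver the claimed leading term.

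The constant accounting also does not land on the stated $\rho_t^*$. First, bounding the net-to-estimate slack by $|\langle\vect{\phi}_i,\widehat{\vect{f}}_{t,u}-\vect{v}\rangle|\le\alpha$ over at most $M$ allocations per round and the envelope gives $\alpha t M$ times the envelope, not $\alpha t\sqrt{M}$; the $\sqrt{M}$ in $\rho_t^*$ arises in the paper because the discretization is done on the squared loss, where Cauchy--Schwarz pairs the $\alpha$-small difference (whose $\ell_2$ norm over coordinates is $\le\alpha$, not $\alpha\sqrt{M}$) against the reward vector of $\ell_2$ norm $\le\sqrt{M}(1+\text{envelope})$. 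Second, your handling of the regularizer as a cross-term plus a quadratic solve gives $x_u^2\le 4(a+\sqrt{\gamma}G)^2+4b\le 8a^2+8\gamma G^2+4b$, i.e.\ $8\gamma G^2$ rather than $4\gamma G^2$ and a discretization coefficient inflated by the factor $4$, whereas the paper's trick (add $\gamma\|\cdot-\vect{\Theta}_\circ\|_{\mathrm F}^2$ to both sides and use $\|\widehat{\vect{\Theta}}_t-\vect{\Theta}_\circ\|_{\mathrm F}^2+\|\vect{\Theta}^*-\vect{\Theta}_\circ\|_{\mathrm F}^2\ge\frac12\|\vect{\Theta}^*-\widehat{\vect{\Theta}}_t\|_{\mathrm F}^2$) gives exactly $4\gamma G^2$ and $2D_t$. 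With your route one proves the lemma only for a strictly larger confidence parameter (worse by a constant in the $\gamma G^2$ term and by $\sqrt{M}$ in the $\alpha$-term); that would still suffice for the downstream regret orders after choosing $\alpha=\mathcal{O}((NMT)^{-1})$, but it is not the statement as given. The fix is to follow the paper's pattern per user: concentrate loss differences at the $(3/\alpha)^R$ net points, apply the discretization lemma, invoke least-squares optimality with the triangle-inequality step, and union bound over the $N$ rows.
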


If the users do not have the option to reject the recommendations, then the algorithm can offer the items at the estimated equilibrium price by setting $\vect{p}_t = \bar{\vect{p}}_t$, and it can provably achieve low regret and instability. However, when the users are allowed to reject the offers based on their expected surplus, such a pricing strategy cannot achieve provable optimality guarantees for the social welfare since it does not alleviate the possibility of rejections. 
Instead, our approach is to reduce the likelihood of rejections by choosing the prices as $\vect{p}_t = \bar{\vect{p}}_t - \nu \sqrt{w_t}$, 
where $w_t$ is the \emph{confidence width} of the allocation $\vect{X}_t$, defined as
\begin{equation}
    w_t := \sum_{(u,i) \in \mathcal{A}_t} (n^{t}_{ui} + \gamma)^{-1},
    \label{w_t_def}
\end{equation}
and $\nu > 0$ is a parameter that depends on the problem size. As we show in the appendix, this pricing strategy can guarantee the number of rejections to grow only sublinearly in $T$, allowing us to achieve sub-linear regret and instability even when the users are allowed to reject.

Finally, we can show that if the algorithm follows the OFU allocations in \eqref{low_oful} and prices in \eqref{price_selection}, while constructing the confidence sets according to \eqref{conf_sets_context}, it obtains the following overall regret guarantee.
\begin{theorem}\label{thm_alloc_regret}
Under Assumptions \ref{rew_assumptio} and \ref{low_assum} with known item feature matrix $\vect{\Phi}$, set $\nu = \left( 4 \rho_T^*(\delta, \alpha, \gamma)/(n M^2) \right)^{1/4}$, $\delta > 0$, $\alpha > 0$ and $\gamma \geq 1$. Then, with probability $1 - 2\delta$, the social welfare regret and instability of Algorithm \ref{alg_low} satisfy
\begin{align*}
    \mathcal{R}^{SW}(T, \pi) &\leq \sqrt{\kappa_T nM T} + (\kappa_T)^{\frac{1}{4}} (nMT)^{\frac{3}{4}}, \\
    \mathcal{R}^{I}(T, \pi) &\leq \sqrt{\kappa_T nM T} + (\kappa_T)^{\frac{1}{4}} (nMT)^{\frac{3}{4}},
\end{align*}
where $\kappa_T = 8 N \rho_T^*(\delta, \alpha, \gamma) \log (1 + T/\gamma)$. Instead, if we assume that the users are not allowed to reject the allocations and set $\nu = 0$, we have $\mathcal{R}^{SW}(T, \pi) \leq \sqrt{\kappa_T n M T}$ and $\mathcal{R}^{I}(T, \pi) \leq \sqrt{\kappa_T n M T}$, with probability $1 - 2\delta$.
\end{theorem}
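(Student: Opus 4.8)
We condition throughout on the event $\mathcal{E} = \{\vect{\Theta}^* \in \mathcal{C}_t \text{ for every } t\}$, which by Lemma~\ref{conf_sets_lemma_context} holds with probability at least $1 - 2\delta$; all estimates below are to be read on $\mathcal{E}$, so the final bounds hold with probability $1 - 2\delta$. The plan is a coupled ``optimism'' argument: we reduce both the per-round social-welfare loss and the per-round instability to the single quantity $\langle \vect{X}_t, \vect{\Theta}_t - \vect{\Theta}^* \rangle$ (up to an $\mathcal{O}(\nu\sqrt{w_t})$ correction from rejections), bound this quantity by $2\sqrt{\rho_t^*\, n\, w_t}$ using the geometry of the weighted norm $\|\cdot\|_{E^t_{2,\infty}}$ that defines $\mathcal{C}_t$, and then sum over $t$ via Cauchy--Schwarz and an elliptical-potential-type estimate that turns the $\sqrt{w_t}$ factors into $\sqrt{\kappa_T\, nMT}$. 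The rejection analysis, governed by the price shading $\vect{p}_t = \bar{\vect{p}}_t - \nu\sqrt{w_t}\,\mathds{1}$, adds the extra $(\kappa_T)^{1/4}(nMT)^{3/4}$ term and vanishes when $\nu = 0$.

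\emph{Social-welfare regret.} On $\mathcal{E}$ the pair $(\vect{X}^*_t, \vect{\Theta}^*)$ is feasible in \eqref{low_oful}, so $\langle \vect{X}_t, \vect{\Theta}_t \rangle \ge \langle \vect{X}^*_t, \vect{\Theta}^* \rangle$, and (ignoring rejections) the round-$t$ regret is at most $\langle \vect{X}_t, \vect{\Theta}_t - \vect{\Theta}^* \rangle = \sum_{(u,i)\in\mathcal{A}_t}(\vect{\Theta}_t - \vect{\Theta}^*)_{ui}$. For each active user $u$, Cauchy--Schwarz with weights $n^t_{ui}+\gamma$ bounds its part of this sum by $\big(\sum_{i:(u,i)\in\mathcal{A}_t}(n^t_{ui}+\gamma)^{-1}\big)^{1/2}\,\|\vect{\Theta}_t - \vect{\Theta}^*\|_{E^t_{2,\infty}}$, and the last factor is at most $2\sqrt{\rho_t^*}$ because $\vect{\Theta}_t, \vect{\Theta}^* \in \mathcal{C}_t$; a second Cauchy--Schwarz over the at most $n$ active users collects the square roots into $\sqrt{n\, w_t}$, so the round-$t$ regret is at most $2\sqrt{\rho_T^*\, n\, w_t}$ (as $\rho_t^* \le \rho_T^*$). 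Then $\sum_{t\le T}\sqrt{w_t} \le \sqrt{T\sum_{t\le T}w_t}$, and $\sum_{t\le T} w_t \le 2NM\log(1 + T/\gamma)$ since each of the at most $NM$ user--item pairs contributes $\sum_{k=0}^{m-1}(k+\gamma)^{-1}$ with $m \le T$, which is $\log(1+T/\gamma)$ up to an absolute constant. This gives $\mathcal{R}^{SW}(T,\pi) \le \sqrt{\kappa_T\, nMT}$ with $\kappa_T = 8N\rho_T^*\log(1+T/\gamma)$ — the $\nu = 0$ bound.

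\emph{Instability.} Since $\vect{X}_t$ and $\bar{\vect{p}}_t$ are primal- and dual-optimal for the relaxed allocation LP with reward matrix $\vect{\Theta}_t$, strong duality and the per-user decomposition of the dual function $g(\cdot;\vect{\Theta})$ in \eqref{eq:dual_function} give $\sum_u \mathcal{S}^t_u(\vect{x}^t_u, \bar{\vect{p}}_t) = g(\bar{\vect{p}}_t; \vect{\Theta}^*) - \mathcal{L}(\vect{X}_t, \bar{\vect{p}}_t; \vect{\Theta}^*)$. Let $\widetilde{\vect{X}}_t \in \argmax_{\vect{X}\in[0,1]^{N\times M},\, \vect{X}\mathds{1}\le\vect{d}_t}\mathcal{L}(\vect{X}, \bar{\vect{p}}_t; \vect{\Theta}^*)$ attain $g(\bar{\vect{p}}_t; \vect{\Theta}^*)$, integral without loss of generality. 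Adding and subtracting $\mathcal{L}(\widetilde{\vect{X}}_t, \bar{\vect{p}}_t; \vect{\Theta}_t)$ and $\mathcal{L}(\vect{X}_t, \bar{\vect{p}}_t; \vect{\Theta}_t)$, and using that $\vect{X}_t$ maximizes $\mathcal{L}(\cdot, \bar{\vect{p}}_t; \vect{\Theta}_t)$ over the feasible region of \eqref{eq:dual_function} (the primal-dual stability property of Section~\ref{sect_opt_allocations}, which makes the cross term $\le 0$), we obtain $\sum_u \mathcal{S}^t_u(\vect{x}^t_u, \bar{\vect{p}}_t) \le \langle \vect{X}_t, \vect{\Theta}_t - \vect{\Theta}^* \rangle + \langle \widetilde{\vect{X}}_t, \vect{\Theta}^* - \vect{\Theta}_t \rangle$. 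On pairs allocated by both $\vect{X}_t$ and $\widetilde{\vect{X}}_t$ the two inner products cancel; on pairs selected by $\widetilde{\vect{X}}_t$ but outside $\mathcal{A}_t$ the contribution is nonpositive, because there the OFU objective $\langle \vect{X}_t, \cdot\rangle$ does not constrain $[\vect{\Theta}_t]_{ui}$ and so $\vect{\Theta}_t$ can be taken at its upper confidence value, which is $\ge \Theta^*_{ui}$ as $\vect{\Theta}^* \in \mathcal{C}_t$. What is left is $\langle \vect{X}_t, \vect{\Theta}_t - \vect{\Theta}^* \rangle$ restricted to a subset of $\mathcal{A}_t$, hence at most $\sum_{(u,i)\in\mathcal{A}_t}|(\vect{\Theta}_t - \vect{\Theta}^*)_{ui}| \le 2\sqrt{\rho_T^*\, n\, w_t}$ — exactly the per-round regret estimate — and summing over $t$ as above gives $\mathcal{R}^I(T,\pi) \le \sqrt{\kappa_T\, nMT}$ for $\nu = 0$. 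The main obstacle is expected to be this ``extra-items'' step: rigorously excluding a positive contribution from pairs that $\widetilde{\vect{X}}_t$ wants but the algorithm never explored — equivalently, showing that dual prices $\bar{\vect{p}}_t$ built from the optimistic rewards never render a poorly explored pair strictly profitable under the true rewards — is precisely where the bare confidence-set bound, whose width on unexplored coordinates does not shrink, does not suffice and a structural argument coupling $\bar{\vect{p}}_t$ to $\vect{\Theta}_t$ is needed.

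\emph{Rejections ($\nu > 0$).} Two opposing effects follow from $\vect{p}_t = \bar{\vect{p}}_t - \nu\sqrt{w_t}\,\mathds{1}$. A recommended pair $(u,i) \in \mathcal{A}_t$ is rejected — costing at most $\Theta^*_{ui} \le 1$ in welfare — only if $\Theta^*_{ui} < [\vect{p}_t]_i$; since $\vect{x}^t_u$ is the $\bar{\vect{p}}_t$-demand under $\vect{\Theta}_t$ we have $[\vect{\Theta}_t]_{ui} \ge [\bar{\vect{p}}_t]_i$, so with $|\Theta^*_{ui} - [\vect{\Theta}_t]_{ui}| \le 2\sqrt{\rho_t^*/(n^t_{ui}+\gamma)}$ a rejection forces $(n^t_{ui}+\gamma)^{-1} > \nu^2 w_t/(4\rho_t^*)$, i.e.\ the pair is underexplored relative to the current confidence width; reading this against the definition \eqref{w_t_def} of $w_t$ bounds the per-round, and then the cumulative, number of rejections — hence the total rejection loss — by a quantity decreasing in $\nu$. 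Conversely, shading the prices downward raises each user's instability by a quantity increasing in $\nu$, itself controlled via the same confidence-width estimate. Trading these off with $\nu = (4\rho_T^*/(nM^2))^{1/4}$ makes both contributions of order $(\kappa_T)^{1/4}(nMT)^{3/4}$; added to the $\sqrt{\kappa_T\, nMT}$ terms of the previous two parts this gives the stated bounds on $\mathcal{R}^{SW}$ and $\mathcal{R}^I$, while $\nu = 0$ removes both and recovers the clean $\sqrt{\kappa_T\, nMT}$ rates. All probabilistic content is carried by $\mathcal{E}$, so every inequality holds with probability at least $1 - 2\delta$.
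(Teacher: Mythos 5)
Your confidence-event setup and the social-welfare bound for $\nu=0$ are sound and essentially reproduce the paper's argument (the row-wise Cauchy--Schwarz of Lemma~\ref{conf_width_context}, the optimism step of Lemma~\ref{lemma_regret_ub_contextual}, and the potential bound of Lemma~\ref{lemma_width_sum_ub}). The genuine gap is the instability step, and it is exactly the one you flag yourself. After telescoping through $\widetilde{\vect{X}}_t$ you are left with $\langle\widetilde{\vect{X}}_t,\vect{\Theta}^*-\vect{\Theta}_t\rangle$, and your claim that on pairs outside $\mathcal{A}_t$ ``$\vect{\Theta}_t$ can be taken at its upper confidence value'' is not available: $\vect{\Theta}_t$ is one fixed maximizer of \eqref{low_oful}, its entries off the support of $\vect{X}_t$ are coupled to the on-support entries through the structure of $\mathcal{F}$ (every row is $\vect{\Phi}\vect{f}_u$), and the prices $\bar{\vect{p}}_t$ were computed from that particular $\vect{\Theta}_t$, so you cannot swap it for a more optimistic point after the fact; moreover the per-entry width $\sqrt{\rho_t/(n^t_{ui}+\gamma)}$ does not decay on never-allocated pairs, so no confidence-width estimate closes this term. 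The paper never has to fight it: Lemma~\ref{lemma_ofu_instability_bound} starts from $\langle\vect{X}_t,\vect{\Theta}_t\rangle$ being the optimal value of the joint problem over $(\vect{X},\vect{\Theta})$, rewrites that value through the partial Lagrangian, fixes the dual variable at $\bar{\vect{p}}_t$ (using that $(\vect{X}_t,\bar{\vect{p}}_t,\vect{\Theta}_t)$ attains the optimum), and only then substitutes the feasible point $\vect{\Theta}^*\in\mathcal{C}_t$; complementary slackness then yields $\sum_u\mathcal{S}^t_u\le\langle\vect{X}_t,\vect{\Theta}_t-\vect{\Theta}^*\rangle$ (Lemma~\ref{instability_to_r}), i.e.\ precisely the ``structural argument coupling $\bar{\vect{p}}_t$ to $\vect{\Theta}_t$'' that you say is needed but do not supply. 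Without that lemma your instability bounds, for both $\nu=0$ and $\nu>0$, are unproved.

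The $\nu>0$ part is also only a sketch, and the bookkeeping you outline does not reach the stated $T^{3/4}$ term. Counting a rejection by ``$(n^t_{ui}+\gamma)^{-1}>\nu^2 w_t/(4\rho_t)$, hence at most $4\rho_t/\nu^2$ rejections per round'' gives a per-round bound that does not shrink as the allocation becomes well explored; summed over $T$ rounds with $\nu=(4\rho_T^*/(nM^2))^{1/4}$ it is of order $\sqrt{n\rho_T^*}\,MT$, linear in $T$. The paper instead uses a Markov-type count, $\sum_{(u,i)\in\mathcal{A}_t}\mathds{1}\{\Delta^t_{ui}>\nu_t\}\le\nu_t^{-1}\sum_{(u,i)\in\mathcal{A}_t}|\Delta^t_{ui}|\le 2\sqrt{n\rho_t}\,\|\vect{X}_t\|_{\vect{A}_t^{-1}}/\nu_t$, with the shading $\nu_t$ taken proportional to $\sqrt{\|\vect{X}_t\|_{\vect{A}_t^{-1}}}$, and then sums $\sum_t\sqrt{\|\vect{X}_t\|_{\vect{A}_t^{-1}}}\le\bigl(T^3\sum_t\|\vect{X}_t\|_{\vect{A}_t^{-1}}^2\bigr)^{1/4}$ to produce the $(\kappa_T)^{1/4}(nMT)^{3/4}$ term (Theorem~\ref{thm_regret_ar} and its contextual analogue); on the instability side the downward shading costs $\nu_t\sum_u d^t_u\le\nu_t\,nM$ per round (Lemma~\ref{instability_to_r_accept_reject}), which again rests on Lemma~\ref{lemma_ofu_instability_bound}. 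So to complete your proposal you would need (i) the missing instability lemma, and (ii) an explicit rejection/shading computation of the paper's form, tying the rejection count to the decaying aggregate width rather than to a flat per-round exploration count.
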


Thus, by correctly selecting the parameters of the algorithm, we can simultaneously achieve social welfare regret and instability of order \smash{$\widetilde{\mathcal{O}} ( (NR)^{1/4} (nMT)^{3/4})$}, where \smash{$\widetilde{\mathcal{O}}$} is the big-O notation ignoring the poly-logarithmic factors of $N, M, T, R$. On the other hand, if the users are not given the option to reject the offers, then we can obtain social welfare regret and instability of order $\widetilde{\mathcal{O}} ( \sqrt{NMnRT} )$. 

If we were to ignore the structure between the mean rewards of user-item pairs and apply the standard combinatorial bandit algorithms to decide on the allocations, the social welfare regret (in the case where users cannot reject the offers) would be of order \smash{$\widetilde{\mathcal{O}} ( M \sqrt{N n T} )$} \citep{kveton_2015}. Since $n, R \ll M, N$ in many applications of our consideration, our algorithm significantly outperforms this naive approach by leveraging the structure of the rewards.



\subsection{Low-Rank Interactive Learning for  Allocation and Pricing (LR-ILAP)}

\label{sect_lrcb}

In this section, we consider removing the assumption that the provider has access to the feature vectors $\vect{\phi}_i$ of the items. In order to use the structure imposed by Assumption \ref{low_assum} without having the knowledge of any of the factors $\vect{\phi}_i$ or $\vect{f}_i$, we can write that $\vect{\Theta}^*$ belongs to the set
\begin{equation}
    \mathcal{L} = \{ \vect{\Theta} \in \mathbb{R}^{N \times M} : \|\vect{\Theta}\|_{\infty} \leq 1, \text{rank}(\vect{\Theta}) \leq R\}.
    \label{low_l}
\end{equation}

To adapt Algorithm \ref{alg_low} for this setting, we modify the construction of our confidence regions. In order to take advantage of the structure of the arms, we let $\mathcal{N}(\mathcal{F}, \alpha, \| \cdot \|_{\text{F}})$ denote the $\alpha$-covering number of $\mathcal{L}$ in the Frobenious-norm $\| \cdot \|_{\text{F}}$, and define empirical $L_2$ norm at time $t$ as
\begin{equation*}
    \| \vect{\Delta} \|_{E_{2}^t}^2 := \sum_{u \in \mathcal{N}} \sum_{i\in \mathcal{I}} (n^{t}_{ui} + \gamma) \Delta_{ui}^2.
\end{equation*}
Then, similar to Lemma \ref{conf_sets_lemma_context}, we can show that, with probability $1-2\delta$, $\vect{\Theta}^*$ belongs to the confidence region 
\begin{equation}
    \mathcal{Q}_t := \left\{ \vect{\Theta} \in \mathcal{L} : \|\vect{\Theta} - \widehat{\vect{\Theta}}_t \|_{E_2^t} \leq \sqrt{\beta_t^*  (\delta, \alpha, \gamma)} \right\}
    \label{conf_sets_lr},
\end{equation}  
where $ \widehat{\vect{\Theta}}_t = \argmin_{\vect{\Theta} \in \mathcal{L}} \left \{ L_{2,t}(\vect{\Theta}) + \gamma \|\vect{\Theta} - \vect{\Theta}_\circ \|_{\mathrm{F}}^2 \right \}$ and the confidence parameter is 
\begin{align*}
    \beta_t^*(\delta, \alpha, \gamma) & := 8 \eta^2 \log \left(\mathcal{N}(\mathcal{L}, \alpha, \| \cdot \|_{\text{F}}) / \delta \right) + 4 \gamma G^2 \\
    &+ 2 \alpha t \sqrt{NM} \left [ 8 + \sqrt{8 \eta^2 \log(4NM t^2/\delta)} \right].
\end{align*}

Then, we show that the covering number for $\mathcal{L}$ given in equation \eqref{low_l} is upper bounded: $\log \mathcal{N}(\mathcal{L}, \alpha, \| \cdot \|_{\text{F}}) \leq (N + M + 1) R \log ( 9 \sqrt{NM} / \alpha )$. 
Consequently, the following theorem shows that if the algorithm constructs the confidence sets according to \eqref{conf_sets_lr}, it can achieve $\widetilde{\mathcal{O}}(((N+M)R)^{1/4} (NMT)^{3/4})$ social welfare regret and instability. Moreover, if the users are not given the option to reject our offers, then the order improves to $\widetilde{\mathcal{O}} ( \sqrt{NM(N+M)RT})$.

\begin{theorem}\label{thm_alloc_regret_lr}
Under Assumptions \ref{rew_assumptio} and \ref{low_assum} with unknown feature matrices, set $\nu = \left( 4 \beta_T^*(\delta, \alpha, \gamma)/(N^2 M^2) \right)^{1/4}$, $\delta>0$,  $\alpha>0$, $\gamma \geq 1$ and construct the confidence sets according to \eqref{conf_sets_lr}. Then, with probability $1 - 2\delta$, the social welfare regret and instability of Algorithm \ref{alg_low} satisfy
\begin{align*}
    \mathcal{R}^{SW}(T, \pi) &\leq \sqrt{\lambda_T NM T} + (\lambda_T)^{\frac{1}{4}} (NMT)^{\frac{3}{4}}, \\
    \mathcal{R}^{I}(T, \pi) &\leq \sqrt{\lambda_T NM T} + (\lambda_T)^{\frac{1}{4}} (NMT)^{\frac{3}{4}},
\end{align*}
where $\lambda_T = 8 \beta_T^*(\delta, \alpha, \gamma) \log (1 + T/\gamma)$. Instead, if we assume that the users are not allowed to reject our allocations and set $\nu = 0$, we have $\mathcal{R}^{SW}(T, \pi) \leq \sqrt{\lambda_T NM T}$ and $\mathcal{R}^{I}(T, \pi) \leq \sqrt{\lambda_T NM T}$, with probability $1 - 2\delta$.
\end{theorem}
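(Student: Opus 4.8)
The plan is to follow the proof of Theorem~\ref{thm_alloc_regret} essentially verbatim, making the three substitutions dictated by the low‑rank construction: replace the empirical norm $\|\cdot\|_{E^t_{2,\infty}}$ by $\|\cdot\|_{E^t_2}$, the confidence parameter $\rho_t^*$ by $\beta_t^*$, and the dimension count $R\log(3N/(\alpha\delta))$ appearing in $\rho_t^*$ by the covering bound $\log\mathcal{N}(\mathcal{L},\alpha,\|\cdot\|_{\mathrm F})\le(N+M+1)R\log(9\sqrt{NM}/\alpha)$; since $\|\cdot\|_{E^t_2}$ aggregates over \emph{all} users instead of taking a per‑user maximum, the ``active‑user'' factor $n$ of the contextual bound is replaced by $N$. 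First I would condition on the event $\{\vect{\Theta}^*\in\mathcal{Q}_t\ \text{for all }t\}$, which holds with probability at least $1-2\delta$ by the low‑rank analogue of Lemma~\ref{conf_sets_lemma_context} (this is precisely where the covering number of $\mathcal{L}$ enters), and argue on this event throughout.

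Next I would establish the per‑round optimism bound. By the joint maximization in \eqref{low_oful} and $\vect{\Theta}^*\in\mathcal{Q}_t$ we have $\langle\vect{X}_t,\vect{\Theta}_t\rangle\ge\langle\vect{X}^*_t,\vect{\Theta}^*\rangle$, so the instantaneous welfare gap without rejections is at most $\langle\vect{X}_t,\vect{\Theta}_t-\vect{\Theta}^*\rangle=\sum_{(u,i)\in\mathcal{A}_t}(\Theta_{t,ui}-\Theta^*_{ui})$; applying Cauchy--Schwarz with the weights $n^t_{ui}+\gamma$ and then the triangle inequality inside $\mathcal{Q}_t$ gives $\langle\vect{X}_t,\vect{\Theta}_t-\vect{\Theta}^*\rangle\le\sqrt{w_t}\,\|\vect{\Theta}_t-\vect{\Theta}^*\|_{E^t_2}\le 2\sqrt{w_t\beta_t^*}\le 2\sqrt{w_t\beta_T^*}$. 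To sum over rounds I would use $\sum_{t\le T}\sqrt{w_t}\le\sqrt{T\sum_{t\le T}w_t}$ together with the standard potential estimate $\sum_{t\le T}w_t\le 2NM\log(1+T/\gamma)$ (from $\sum_{k=0}^{n-1}(k+\gamma)^{-1}\le 2\log(1+n/\gamma)$ for $\gamma\ge1$, summed over the at most $NM$ pairs with $n^T_{ui}\le T$); this produces the $\sqrt{\lambda_T NMT}$ term and, when $\nu=0$, the entire welfare‑regret bound after substituting $\beta_T^*$.

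For the instability bound with $\nu=0$, I would use that, by LP duality and the integrality gap of one, $(\vect{X}_t,\bar{\vect p}_t)$ is a primal--dual optimal pair for the allocation problem with rewards $\vect{\Theta}_t$; complementary slackness then makes $\vect{x}^t_u$ surplus‑maximizing for every user under $\vect{\Theta}_t$, so the instability evaluated at the true $\vect{\Theta}^*$ is governed by the discrepancy between $\vect{\Theta}^*$ and $\vect{\Theta}_t$ on the pairs entering either the realized or the surplus‑maximizing allocation, which I would again bound via $\|\vect{\Theta}_t-\vect{\Theta}^*\|_{E^t_2}\le 2\sqrt{\beta_t^*}$ to recover the same $2\sqrt{w_t\beta_t^*}$ per‑round quantity, and hence the stated instability bound through the potential estimate. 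For the case $\nu>0$ I would split the welfare regret of round $t$ into the optimism gap above plus the rejection loss $\sum_{(u,i)\in\mathcal{A}_t}\Theta^*_{ui}\,\mathds{1}\{\Theta^*_{ui}<p_{t,i}\}$: since $(u,i)\in\mathcal{A}_t$ forces $\bar p_{t,i}\le\Theta_{t,ui}$ by complementary slackness of the per‑user subproblem and $p_{t,i}=\bar p_{t,i}-\nu\sqrt{w_t}$, a pair is rejected only if $\Theta_{t,ui}-\Theta^*_{ui}>\nu\sqrt{w_t}$, which --- combined with $\|\vect{\Theta}_t-\vect{\Theta}^*\|_{E^t_2}^2\le 4\beta_t^*$ and $w_t(n^t_{ui}+\gamma)\ge1$ --- limits the cumulative number of rejections, while the cushion $\nu\sqrt{w_t}$ inflates each user's surplus gap by at most $d^t_u\,\nu\sqrt{w_t}$ and hence the round‑$t$ instability by $O(\nu\sqrt{w_t}\,NM)$. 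Trading these off and invoking the potential bound once more, the choice $\nu=(4\beta_T^*/(N^2M^2))^{1/4}$ is the balancing point and produces the additional $(\lambda_T)^{1/4}(NMT)^{3/4}$ term in both quantities; a final substitution of the covering bound into $\beta_T^*$ (and thus $\lambda_T$) completes the argument.

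I expect the $\nu=0$ part (the optimism and potential steps) to be routine. The main obstacle is the $\nu>0$ analysis: one must show that the total number of rejections is sublinear in $T$ and, more delicately, extract the $3/4$ exponent from the tension between lowering prices enough to suppress rejections and not lowering them so much that instability blows up --- a trade‑off absent from the standard combinatorial and low‑rank linear bandit analyses we build on. A secondary difficulty is the instability bound itself, since the surplus‑maximizing ``counterfactual'' allocation need not respect the capacity constraints nor be supported on well‑explored pairs, so controlling it requires care with the confidence geometry (here through the $\|\cdot\|_{E^t_2}$‑ball rather than per‑entry bounds).
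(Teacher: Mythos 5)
Your treatment of the confidence sets (covering number of $\mathcal{L}$ plugged into the low-rank analogue of Lemma \ref{conf_sets_lemma_context}), of the welfare regret via joint optimism, Cauchy--Schwarz in the $\|\cdot\|_{E^t_2}$ geometry, and the potential bound $\sum_t w_t \leq 2NM\log(1+T/\gamma)$, and of the $\nu>0$ rejection-counting/price-cushion trade-off leading to the $(\lambda_T)^{1/4}(NMT)^{3/4}$ term, all match the paper's proof (Lemmas \ref{lemma_confidence}, \ref{lemma_conf_width}, \ref{lemma_regret_ub}, \ref{lemma_width_sum_ub}, Theorems \ref{thm_regret_ar}, \ref{thm_instability_ar}, \ref{thm_regret_and_instability_ar}); your slightly more elementary harmonic-sum potential argument is equivalent to the log-det telescoping used there.

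However, there is a genuine gap in your instability argument, and it is not a secondary issue but the crux of the theorem. You propose to bound the instability by the discrepancy between $\vect{\Theta}_t$ and $\vect{\Theta}^*$ ``on the pairs entering either the realized or the surplus-maximizing allocation,'' controlled through $\|\vect{\Theta}_t-\vect{\Theta}^*\|_{E^t_2}\le 2\sqrt{\beta_t^*}$, and claim this recovers the per-round quantity $2\sqrt{w_t\beta_t^*}$. This fails: the counterfactual surplus-maximizing allocation under $\vect{\Theta}^*$ may be supported on pairs with $n^t_{ui}=0$, where the $E^t_2$-ball only yields $|\Theta^t_{ui}-\Theta^*_{ui}|\lesssim\sqrt{\beta_t^*/\gamma}$, a quantity that never shrinks and, crucially, does not enter $w_t$, so the potential/telescoping argument does not apply to it; summing such terms over $T$ rounds gives instability linear in $T$. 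The paper's proof avoids the counterfactual pairs altogether via Lemma \ref{lemma_ofu_instability_bound}: since $(\vect{X}_t,\vect{\Theta}_t)$ jointly maximize over $\mathcal{X}_t\times\mathcal{Q}_t$ and $\bar{\vect{p}}_t$ is the dual optimum for the $\vect{\Theta}_t$-problem, fixing $\vect{\lambda}=\bar{\vect{p}}_t$ in the saddle-point representation and using $\vect{\Theta}^*\in\mathcal{Q}_t$ together with complementary slackness gives $\sum_u\max_{\vect{x}\in\mathcal{X}^t_u}\langle\vect{x},\vect{\theta}^*_u-\bar{\vect{p}}_t\rangle \le \sum_u\langle\vect{x}^t_u,\vect{\theta}^t_u-\bar{\vect{p}}_t\rangle$, i.e., the aggregate maximum surplus at the \emph{true} parameters is bounded by the realized surplus at the \emph{optimistic} parameters. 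Only then does the instability collapse to $\langle\vect{X}_t,\vect{\Theta}_t-\vect{\Theta}^*\rangle$ (plus $\nu_t\sum_u d^t_u$ when $\nu>0$, Lemmas \ref{instability_to_r} and \ref{instability_to_r_accept_reject}), which is supported on $\mathcal{A}_t$ and hence amenable to your $2\sqrt{w_t\beta_t^*}$ bound and the potential argument. You flag this difficulty but do not supply this duality-plus-joint-optimism step, and without it the instability half of the theorem (for both $\nu=0$ and $\nu>0$) is not established.
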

\section{EXPERIMENTS}
\label{sect_exp}

\begin{figure*}[th]
\vspace{-10pt}
\center
\includegraphics{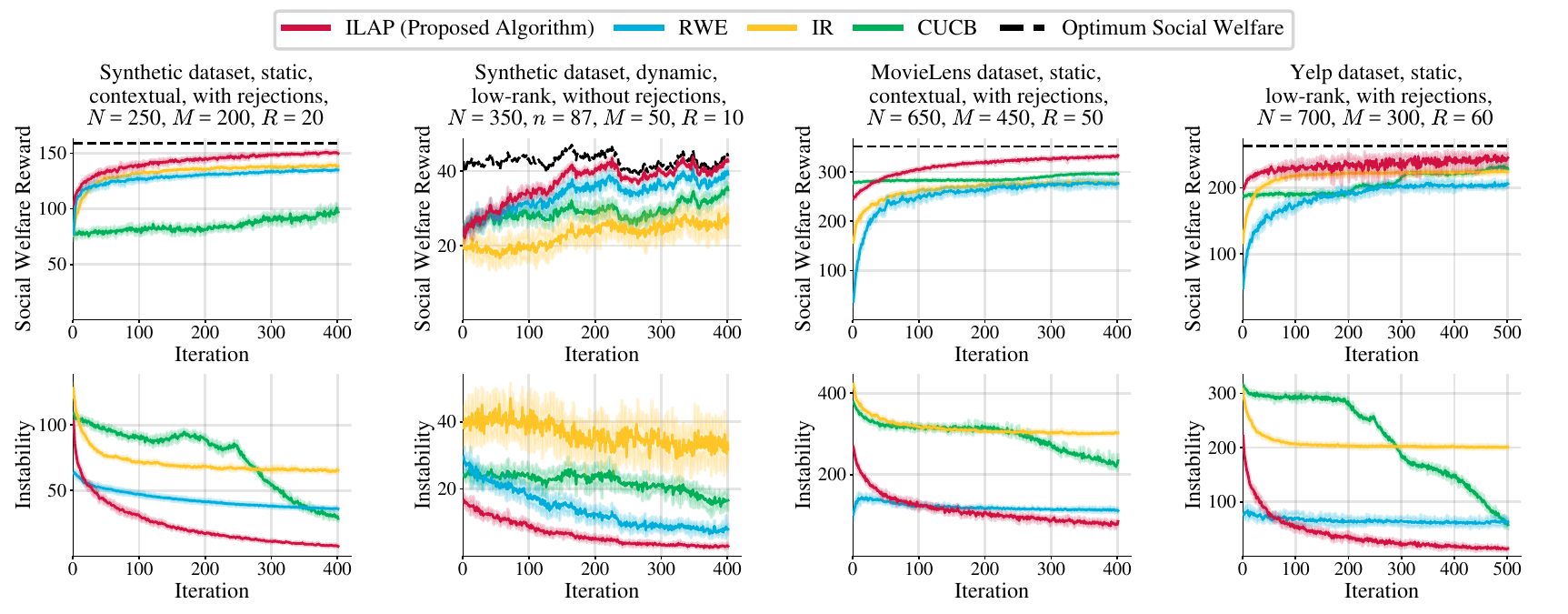}
\vspace{-20pt}
\caption{Instantaneous reward and instability at each round in different experimental settings. The experiments are run on $20$ problem instances and means are reported together with one standard deviation of uncertainty regions.}
\label{fig_regrets}
\vspace{-10pt}
\end{figure*}

In this section, we demonstrate the efficacy of our proposed algorithm through an experimental study over synthetic and real-world datasets. The goal of our experimental evaluation is to evaluate our algorithm for making online recommendations in various market settings.

\textbf{Baseline Algorithms:} To the best of our knowledge, there are no current approaches specifically designed to make online recommendations and allocations considering the constraints of the market. 
Therefore, we show the importance of different aspects in our framework by comparing it with algorithms that do not capture all these aspects: a non-bandit algorithm that only aims for momentary performance, a bandit-based algorithm that is unaware of the capacities, and another bandit-based algorithm that is unaware of the correlation between the arms.

\textbf{(1) RWE (Recommendations without Exploration):}  It solves for the least squares problem given in line~\ref{alg:line:least_squares} of Algorithm \ref{alg_low} to estimate the mean rewards of user-item allocation pairs. Then, without deliberate exploration, it decides on the allocation and prices as the optimum values for the allocation problem with the least square estimation.

\textbf{(2) IR (Interactive Recommendations):}  It runs the interactive recommendation algorithm of \cite{zhao_2013} that finds the best items for each user without considering the capacity constraints. Being oblivious of the capacities, it also does not set any price for the items. Since these recommendations do not necessarily satisfy the constraints, it needs to perform a post-processing step: If it decides to recommend an item to more users than its capacity, it only chooses a random subset of the assigned users. Heuristically, we let the algorithm observe zero rewards ($R^{t}_{ui} = 0$) for all the user-item allocations that are eliminated during this post-processing step because otherwise, the allocations would continue to violate the capacity constraints and the algorithm would not reduce its regret.

\textbf{(3) CUCB (Combinatorial-UCB):}  It runs the CUCB algorithm of \cite{chen_2013} to decide on allocations without assuming any structure between the users and items. It views the user-item allocation pairs as arms that have no correlation between them. At each round, it decides on the allocation and prices by solving the allocation problem using the upper confidence bounds of the reward parameters.

\subsection{Experimental Setup and Datasets} 

We use a synthetic dataset and two real-world datasets. For the synthetic data, we set the mean reward values $\Theta^*_{ui} = \langle \vect{f}_u, \vect{\phi}_i \rangle$ by randomly choosing length-R feature vectors that satisfy $\|\vect{f}_u\|_2 \leq 1$, for all $u \in \mathcal{N}$, and $\|\vect{\phi}_i\|_2 \leq 1$, for all $i \in \mathcal{I}$. For the real-world data, we consider the following publicly available data sets: a subset of Movielens 100k \citep{harper_2015} which includes ratings from 650 users on 450 movies and a subset of Yelp Dataset \citep{yelp_dataset} that corresponds to ratings from 700 users on 300 restaurants. The subsets of users and items are selected such that the datasets contain ratings for most of the user-item pairs. To set the problem up, the missing ratings are completed with an off-the-shelf matrix completion algorithm with matrix factorization \citep{hastie_2015}. We consider problem settings with known item features as well as with unknown features. In the setting with known features for the items (contextual), all algorithms have access to the item feature matrix $\vect{\Phi}$. In the other case (low-rank), none of the algorithms are provided with any contextual information. To the best of our knowledge, the capacity information is not available in any of the publicly available recommendation datasets. Therefore, we consider instantiating random capacities. We consider settings with static and dynamic (time-varying) capacities/demands. For the static case, we assume that all users request one item at all rounds, and the capacity of each item remains unchanged with time. In the dynamic setting, we allow both the demands $\vect{d}_t$ and capacities $\vect{c}_t$ vary with time $t$. We consider that each entry of $\vect{d}_t$ is independently sampled from a distribution over $\{0, 1\}$ so that each user $u$ is either active ($d^t_u = 1$) or inactive ($d^t_u = 0$). Similarly, each entry of $\vect{c}_t$ is independently sampled from a distribution over non-negative integers. Lastly, the rewards are assumed to have distribution $R^{t}_{ui} \sim \mathcal{N}(\Theta_{ui}^*, \eta^2)$. For additional details on the experimental setup, please refer to Appendix \ref{sect_additional_exp}. 

\subsection{Results} 

We provide a summary of our results in Figure \ref{fig_regrets}. The results for other experimental settings are left to Appendix \ref{sect_additional_exp}. The observations can be summed up into the following points: (1) ILAP algorithm (our proposed approach) is able to achieve lower regret and instability than all other baseline methods in all experimental settings. (2) Even though RWE can obtain slightly larger rewards than our method in the initial rounds, it often gets stuck at suboptimal allocations. So, it cannot achieve \emph{no-regret} because it tries to directly exploit the information it acquired so far without making any deliberate explorations. (3) Since IR does not consider the capacities while making the allocations, the recommendations exceed the capacities of the items and lead to very large regrets. Furthermore, since it does not set any prices for the items, it incurs high instability. (4) Since CUCB does not leverage the low-rank structure of the parameters, it needs to sample and learn about each user-item allocation pair separately. Hence, it takes much longer for it to learn the optimum allocations.
\section{CONCLUSION}


In this paper, we have studied the setting of interactive recommendations that achieve socially optimal and stable allocations under capacity constraints. 
We considered an online learning setting where the users are given sequential recommendations which they can accept or reject based on the prices determined by the provider. 
To leverage the correlation between different user-item pairs, we considered linear latent factor models with partially-known and unknown features. We have proposed a sequential allocation and pricing algorithm that enjoys sub-linear social welfare regret and sub-linear instability.

\bibliographystyle{apalike}
\bibliography{Bib_Database}

\clearpage
\onecolumn
\aistatstitle{Interactive Learning with Pricing for \\ Optimal and Stable Allocations in Markets:\\
Supplementary Materials}
\thispagestyle{empty}

\appendix

\section{RELAXATION OF INTEGER PROGRAM}
\label{appendix_num}

A traditional linear integer program (IP) in matrix form is formulated as
\begin{align}
\begin{split}
    \max_{\mathbf{x}} &\; \mathbf{t}^\mathrm{T} \mathbf{x}\\
    \text{s.t.} &\; \mathbf{A} \mathbf{x} \leq \mathbf{b}\\
    &\; \mathbf{x} \in \mathds{Z}_+^d\\
    \label{int_prog}
\end{split}
\end{align}

This problem can be relaxed to a linear program by dropping the integral constraints (setting $\mathbf{x} \in \mathds{R}_+^d$). The integrality gap of an integer maximization program is defined as the maximum ratio of the optimal values for the relaxed linear program and the integer program. When the vector $\mathbf{b}$ is integral and the matrix $\mathbf{A}$ is totally unimodular (all entries are 1, 0, or -1 and every square sub-minor has determinant of +1 or -1) then the integrality gap is one and the solution of the relaxed linear program is integer valued \citep{bertsekas_1991}. 

Hence, we can solve \eqref{int_prog} by instead solving the following relaxed linear program:
\begin{align}
\begin{split}
    \max_{\mathbf{x}} &\; \mathbf{t}^\mathrm{T} \mathbf{x}\\
    \text{s.t.} &\; \mathbf{A} \mathbf{x} \leq \mathbf{b}\\
    &\; \mathbf{x} \in \mathds{R}_+^d\\
\end{split}
\end{align}

For a matrix $\vect{A}$ whose rows can be partitioned into two disjoint sets $\mathcal{C}$  and $\mathcal{D}$, the following four conditions together are sufficient for $\vect{A}$ to be totally unimodular \citep{heller_1957}:
\begin{enumerate}[nosep, labelindent= 2pt, align= left, labelsep=0.4em, leftmargin=*]
    \item Every entry in $\vect{A}$ is $0$, $+1$, or $-1$.
    \item Every column of $\vect{A}$ contains at most two non-zero entries.
    \item If two non-zero entries in a column of $\vect{A}$ have the same sign, then the row of one is in $\mathcal{C}$  , and the other in $\mathcal{D}$ .
    \item If two non-zero entries in a column of $\vect{A}$ have opposite signs, then the rows of both are in $\mathcal{C}$  , or both in $\mathcal{D}$ .
\end{enumerate}

In the setting of resource allocation, we can write problem \eqref{integer_num} equivalently as problem \eqref{int_prog} where $\mathbf{x} = \text{vec}(\mathbf{X})$, $\mathbf{t} = \text{vec}(\mathbf{\Theta^*})$, $\mathbf{A}$ and $\mathbf{b}$ are given as
\begin{equation}
    \mathbf{A} = 
    \begin{bmatrix}
    \mathds{1}_N^\mathrm{T} \otimes \mathbf{I}_M\\
    \mathbf{I}_N \otimes \mathds{1}_M^\mathrm{T}\\
    \end{bmatrix}
    \qquad 
    \mathbf{b} = 
    \begin{bmatrix}
    \vect{c}_t\\
    \vect{d}_t\\
    \end{bmatrix}
\label{def_Ab}
\end{equation}

For matrix $\vect{A}$ given in \eqref{def_Ab}, we can set $\mathcal{C}$ to be the set of first $M$ rows corresponding to the capacity constraints, and $\mathcal{D}$ to be the set of remaining rows corresponding to the demand constraints. Since this $\vect{A}$ matrix satisfies the conditions of the proposition for sets $\mathcal{C}$  and $\mathcal{D}$, we obtain that $\mathbf{A}$ is totally unimodular. Finally, since the vector $\mathbf{b}$ is integral and the matrix $\mathbf{A}$ is totally unimodular, the integrality gap is one.

\section{IMPLEMENTATION VIA ALTERNATING OPTIMIZATION}

The following algorithm describes an efficient implementation of CX-ILAP algorithm.

\begin{algorithm}[H]
\caption{CX-ILAP with Alternating Optimization}
\begin{algorithmic}
\Require horizon $T$, initial estimate $\vect{\Theta}_{\circ}$ with $\|\vect{\Theta}_{\circ} - \vect{\Theta}^* \|_\mathrm{F} \leq G$, parameters $\delta, \alpha, \nu > 0$, $\gamma \geq 1$.
\For{$t = 1, 2, \dots, T$}
\State randomly initialize $\widehat{\vect{F}}$ and $\widehat{\vect{\Phi}}$
\State $\widehat{\vect{F}} \gets \argmin_{\vect{F} \in \mathbb{R}^{N \times R}} \left \{ \sum_{\tau=1}^{t-1} \sum_{(u, i) \in \mathcal{A}_\tau} (\vect{f}_u^\textrm{T} \vect{\phi}_i  - R^{\tau}_{ui})^2 + \gamma \|\vect{F} \vect{\Phi}^\textrm{T} - \vect{\Theta}_{\circ} \|_\mathrm{F}^2 \right \}$
\State $\widehat{\vect{\Theta}}_t \gets \widehat{\vect{F}} \vect{\Phi}^\textrm{T}$
\State $\vect{X} \gets \mathbb{1}_{N \times M}$, $\vect{F} \gets \widehat{\vect{F}}$, $\vect{\Phi} \gets \widehat{\vect{\Phi}}$
\While{convergence criterion not satisfied}
\State $\vect{F} \gets \argmax_{\vect{F} \in \mathbb{R}^{N \times R}} \langle \vect{X}, \vect{F} \vect{\Phi}^\textrm{T} \rangle$ s.t. $\|\vect{F} \vect{\Phi}^\textrm{T} - \widehat{\vect{\Theta}}_t \|_{E^t_{2, \infty}} \leq \sqrt{ \rho_t^*(\delta, \alpha, \gamma)}$

\State $\vect{\Theta} \gets \vect{F} \vect{\Phi}^\textrm{T}$
\State $(\vect{X}, \bar{\vect{p}}) \gets$ optimal primal-dual variables for the problem $\max_{\vect{X} \in \mathcal{X}_t} \; \langle \vect{X}, \vect{\Theta} \rangle$

\EndWhile
\State $(\vect{X}_t, \bar{\vect{p}}_t) \gets (\vect{X}, \bar{\vect{p}})$
\State $\vect{p}_t = \bar{\vect{p}}_t - \nu \sqrt{w_t}$ for $w_t$ defined in \eqref{w_t_def}
\State Offer the allocation $\vect{X}_t$ together with prices $\vect{p}_t$
\State Observe $R^{t}_{ui}$ for all $(u, i) \in \mathcal{A}_{t}$
\EndFor
\end{algorithmic}
\label{alg_mf_cx}
\end{algorithm}

The following algorithm describes an efficient implementation of LR-ILAP algorithm using matrix factorization.

\begin{algorithm}[H]
\caption{LR-ILAP with Alternating Optimization using Matrix Factorization}
\begin{algorithmic}
\Require horizon $T$, initial estimate $\vect{\Theta}_{\circ}$ with $\|\vect{\Theta}_{\circ} - \vect{\Theta}^* \|_\mathrm{F} \leq G$, parameters $\delta, \alpha, \nu > 0$, $\gamma \geq 1$.
\For{$t = 1, 2, \dots, T$}
\State randomly initialize $\widehat{\vect{F}}$ and $\widehat{\vect{\Phi}}$
\While{convergence criterion not satisfied}
    \State $\widehat{\vect{F}} \gets \argmin_{\vect{F} \in \mathbb{R}^{N \times R}} \left \{ \sum_{\tau=1}^{t-1} \sum_{(u, i) \in \mathcal{A}_\tau} (\vect{f}_u^\textrm{T} \vect{\phi}_i  - R^{\tau}_{ui})^2 + \gamma \|\vect{F} \widehat{\vect{\Phi}}^\textrm{T} - \vect{\Theta}_{\circ} \|_\mathrm{F}^2 \right \}$
    \State $\widehat{\vect{\Phi}} \gets \argmin_{\vect{\Phi} \in \mathbb{R}^{M \times R}} \left \{ \sum_{\tau=1}^{t-1} \sum_{(u, i) \in \mathcal{A}_\tau} (\vect{f}_u^\textrm{T} \vect{\phi}_i  - R^{\tau}_{ui})^2 + \gamma \|\widehat{\vect{F}} \vect{\Phi}^\textrm{T} - \vect{\Theta}_{\circ} \|_\mathrm{F}^2 \right \}$
\EndWhile
\State $\widehat{\vect{\Theta}}_t \gets \widehat{\vect{F}} \widehat{\vect{\Phi}}^\textrm{T}$
\State $\vect{X} \gets \mathbb{1}_{N \times M}$, $\vect{F} \gets \widehat{\vect{F}}$, $\vect{\Phi} \gets \widehat{\vect{\Phi}}$
\While{convergence criterion not satisfied}
\While{convergence criterion not satisfied}
\State $\vect{F} \gets \argmax_{\vect{F} \in \mathbb{R}^{N \times R}} \langle \vect{X}, \vect{F} \vect{\Phi}^\textrm{T} \rangle$ s.t. $\|\vect{F} \vect{\Phi}^\textrm{T} - \widehat{\vect{\Theta}}_t \|_{E^t_2} \leq \sqrt{ \beta_t^*(\delta, \alpha, \gamma)}$
\State $\vect{\Phi} \gets \argmax_{\vect{\Phi} \in \mathbb{R}^{M \times R}} \langle \vect{X}, \vect{F} \vect{\Phi}^\textrm{T} \rangle$ s.t. $\|\vect{F} \vect{\Phi}^\textrm{T} - \widehat{\vect{\Theta}}_t \|_{E^t_2} \leq \sqrt{ \beta_t^*(\delta, \alpha, \gamma)}$
\EndWhile

\State $\vect{\Theta} \gets \vect{F} \vect{\Phi}^\textrm{T}$
\State $(\vect{X}, \bar{\vect{p}}) \gets$ optimal primal-dual variables for the problem $\max_{\vect{X} \in \mathcal{X}_t} \; \langle \vect{X}, \vect{\Theta} \rangle$

\EndWhile
\State $(\vect{X}_t, \bar{\vect{p}}_t) \gets (\vect{X}, \bar{\vect{p}})$
\State $\vect{p}_t = \bar{\vect{p}}_t - \nu \sqrt{w_t}$ for $w_t$ defined in \eqref{w_t_def}
\State Offer the allocation $\vect{X}_t$ together with prices $\vect{p}_t$
\State Observe $R^{t}_{ui}$ for all $(u, i) \in \mathcal{A}_{t}$
\EndFor
\end{algorithmic}
\label{alg_mf_lr}
\end{algorithm}

In order to efficiently solve the least squares problem given in line~\ref{alg:line:least_squares} of Algorithm \ref{alg_low} as well as the OFU problem \eqref{low_oful} in large scales, we take advantage of the matrix factorization model. As a result, we factorize $\vect{\Theta} = \vect{F} \vect{\Phi}^\textrm{T}$ where $\vect{F} \in \mathbb{R}^{N \times R}$ and $\vect{\Phi} \in \mathbb{R}^{M \times R}$, and solve the problems by optimizing over $\vect{F}$ and $\vect{\Phi}$ rather than directly optimizing over $\vect{\Theta}$. Even if the least squares problem is not convex in the joint variable ($\vect{F}$, $\vect{\Phi}$), it is convex in $\vect{F}$ for fixed $\vect{\Phi}$ and it is convex in $\vect{\Phi}$ for fixed $\vect{F}$. Therefore, an alternating minimization algorithm becomes a feasible choice to find a reasonable solution for the least squares problem even in the case with unknown $\vect{\Phi}$ matrix. Similarly, an alternating minimization approach is also useful to solve the problem \eqref{low_oful}. We can fix an allocation $\vect{X}$ and minimize over $\vect{F}$ and $\vect{\Phi}$. Then, for fixed $\vect{F}$ and $\vect{\Phi}$, the allocation $\vect{X}$ and prices $\bar{\vect{p}}$ are determined through the linear program described in the section \ref{sect_opt_allocations}. Note that converged $\widehat{\vect{\Theta}}_t$ and $\vect{X}_t$ are not necessarily the optimum solution in the case with unknown $\vect{\Phi}$ matrix, since the optimization problems are not convex. However, the alternating optimization algorithm guarantees that the objective value only improves in each iteration of the alternating optimization.

\section{DEFINITIONS}

We start with definitions that we will use throughout the proofs.

\begin{definition}
For a symmetric positive definite matrix $\vect{W} \in \mathbb{R}^{d \times d}$, we define
\begin{itemize}
    \item $\vect{W}$-inner product of two vectors $\vect{x}, \vect{y} \in \mathbb{R}^{d}$ as $\langle \vect{x}, \vect{y} \rangle_{\vect{W}} := \langle \vect{W}  \vect{x}, \vect{y} \rangle$
    \item $\vect{W}$-norm of a vector $\vect{x} \in \mathbb{R}^{d}$ as $\|\vect{x}\|_{\vect{W}} := \sqrt{\langle \vect{x}, \vect{x} \rangle_{\vect{W}}}$.
\end{itemize}
\end{definition}

\begin{definition}
$\vect{A}_t \in \mathds{R}^{NM \times NM}$ is a diagonal matrix with $(N(i-1)+u)$th diagonal entry equal to $n^t_{ui} + \gamma$. Recall that $n^t_{ui} = \sum_{\tau=1}^{t-1} \mathds{1} \{(u,i) \in \mathcal{A}_t\}$ denotes the number of times pair $(u,i)$ has been sampled before time $t$ (excluding time $t$).
\end{definition} 

Then, the regularized empirical $L_2$-norm of a matrix $\vect{Z} \in \mathbb{R}^{N \times M}$ can be written as 
\begin{equation}
    \| \vect{Z} \|_{E^t_2} = \| \mathrm{vec} (\vect{Z}) \|_{\vect{A}_t}
\end{equation}

For the ease of exposition in the following sections, we will use the shorthand $ \| \vect{Z} \|_{\vect{A}_t} = \| \mathrm{vec} (\vect{Z}) \|_{\vect{A}_t}$.

\begin{definition}
Define $\vect{E}_{ui} := \vect{e}_u \vect{e}_i^\textrm{T}$ to be the indicator matrix for $(u,i)$ and $\vect{e}_{ui} := \mathrm{vec}(\vect{E}_{ui})$.    
\end{definition}

\begin{definition}
Define the "confidence width" of the allocations as
\begin{equation}
    w_t := \| \vect{X}_t \|_{\vect{A}_t^{-1}} \qquad \text{and} \qquad w^t_{ui} := \| \vect{E}_{ui} \|_{\vect{A}_t^{-1}}
\label{def_widths}
\end{equation} 
\end{definition}

\begin{definition}
Define the regularized empirical $L_2$-norm $\| \cdot \|_{E_{2}^t}$ as 
\begin{equation*}
    \| \vect{\Delta} \|_{E_{2}^t}^2 := \sum_{\tau=1}^{t-1} \sum_{(u, i) \in \mathcal{A}_\tau} \langle \vect{\Delta}, \vect{E}_{ui} \rangle^2 + \gamma \| \vect{\Delta} \|_{\mathrm{F}}^2 =  \sum_{u \in \mathcal{N}} \sum_{i\in \mathcal{I}} (n^t_{ui} + \gamma) (\Delta_{ui})^2
\end{equation*}
For future reference, we also define the (non-regularized) empirical $L_2$-norm $\| \cdot \|_{\widetilde{E}_{2}^t}$ by 
\begin{equation*}
    \| \vect{\Delta} \|^2_{\widetilde{E}_{2}^t} := \sum_{\tau=1}^{t-1} \sum_{(u, i) \in \mathcal{A}_\tau} \langle \vect{\Delta}, \vect{E}_{ui} \rangle^2 =  \sum_{i=1}^{d} (n^{t}_{ui}) (\Delta_{ui})^2
\end{equation*}

\end{definition}

Note that the regularized empirical $L_2$-norm is related to (non-regularized) empirical $L_2$-norm as
\begin{equation*}
    \| \vect{\Delta} \|^2_{E^t_2} = \| \vect{\Delta} \|^2_{\widetilde{E}^t_2} + \gamma \| \vect{\Delta} \|^2_{\mathrm{F}}
\end{equation*}

\begin{definition}
Define the social welfare regret $r_t$ at time $t$ as 
\begin{equation*}
    r_t := \langle \vect{X}_t^*, \vect{\Theta}^* \rangle - \langle \vect{X}_t, \vect{\Theta}^* \rangle
\end{equation*} 
\end{definition}

\begin{definition}
The OFU estimate of the allocation and the parameters are 
\begin{equation*}
(\vect{X}_t, \vect{\Theta}_t) \in \argmax_{(\vect{X}, \vect{\Theta}) \in \mathcal{X}_t \times \mathcal{C}_t} \; \langle \vect{X}, \vect{\Theta} \rangle
\end{equation*} 
The entries of $\vect{X}_t$ are denoted by $X^t_{ui}$ and the entries of $\vect{\Theta}_t$ are denoted by $\Theta^t_{ui}$.
\end{definition}

\begin{definition}
    We define  $\vect{\Delta}_t = \vect{\Theta}_{t} - \vect{\Theta}^{*}$ with entries $\Delta^t_{ui} = \Theta_{ui}^t - \Theta^*_{ui}$
\end{definition}

\section{PROOFS FOR LOW-RANK ALLOCATION AND PRICING}

\subsection{Construction of Confidence Sets}
\label{pf_conf_sets}

\begin{lemma} For any $\delta > 0$ and $\vect{\Theta} \in \mathbb{R}^{d}$,
\begin{equation}
    \mathds{P} \left( L_{2,t}(\vect{\Theta}) \geq L_{2,t}(\vect{\Theta}^*) + \frac{1}{2} \|\vect{\Theta}^* - \vect{\Theta} \|^2_{\widetilde{E}^t_2} - 4 \eta^2 \log(1/\delta) \quad ,\forall t \in \mathbb{N} \right) \geq 1 - \delta
\end{equation}

\label{lemma_lower_bound}
\end{lemma}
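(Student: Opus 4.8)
The plan is to establish a uniform-in-time lower bound on the cumulative squared prediction error of $\vect{\Theta}$ relative to that of the truth $\vect{\Theta}^*$, and the natural tool is a self-normalized martingale concentration argument. First I would expand the difference of losses: writing $R^\tau_{ui} = \Theta^*_{ui} + \xi^\tau_{ui}$ with $\xi^\tau_{ui}$ the $\eta$-sub-Gaussian noise, a direct algebraic manipulation gives
\begin{equation*}
L_{2,t}(\vect{\Theta}) - L_{2,t}(\vect{\Theta}^*) = \sum_{\tau=1}^{t-1} \sum_{(u,i)\in\mathcal{A}_\tau} \left[ (\Theta_{ui} - \Theta^*_{ui})^2 - 2(\Theta_{ui} - \Theta^*_{ui})\,\xi^\tau_{ui} \right] = \|\vect{\Theta} - \vect{\Theta}^*\|_{\widetilde{E}^t_2}^2 - 2 M_t,
\end{equation*}
where $M_t := \sum_{\tau=1}^{t-1}\sum_{(u,i)\in\mathcal{A}_\tau}(\Theta_{ui}-\Theta^*_{ui})\,\xi^\tau_{ui}$. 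So it suffices to show that, with probability at least $1-\delta$, $2M_t \le \tfrac12\|\vect{\Theta}-\vect{\Theta}^*\|_{\widetilde{E}^t_2}^2 + 4\eta^2\log(1/\delta)$ simultaneously for all $t$.

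For the martingale control I would fix $\vect{\Theta}$ and consider the process $M_t$, which is a martingale with respect to the filtration generated by the allocations and noises (note the allocation $\mathcal{A}_\tau$ is chosen based on history and is thus predictable, so each increment is conditionally $\eta|\Theta_{ui}-\Theta^*_{ui}|$-sub-Gaussian). The standard method-of-mixtures / Freedman-type argument shows that $\exp\!\big(\lambda M_t - \tfrac{\lambda^2 \eta^2}{2}\|\vect{\Theta}-\vect{\Theta}^*\|_{\widetilde{E}^t_2}^2\big)$ is a supermartingale for each fixed $\lambda$; applying it with a well-chosen $\lambda$ (here $\lambda = 1/(2\eta^2)$, which balances the two terms) together with a maximal inequality / optional stopping over all $t$ yields
\begin{equation*}
\mathds{P}\!\left( \exists t:\; M_t \ge \frac{1}{4}\|\vect{\Theta}-\vect{\Theta}^*\|_{\widetilde{E}^t_2}^2 + 2\eta^2\log(1/\delta) \right) \le \delta,
\end{equation*}
which rearranges to exactly the claimed bound once multiplied through by $2$. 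The key point that makes this clean is that $\|\cdot\|_{\widetilde{E}^t_2}^2$ is precisely the predictable quadratic variation of $2M_t$ up to the $\eta^2$ factor, so no covering over $\vect{\Theta}$ is needed — the bound holds for each fixed $\vect{\Theta}$, which is all the lemma claims (the union over a cover of $\mathcal{F}$ or $\mathcal{L}$ is performed later, in the proof of Lemma \ref{conf_sets_lemma_context}).

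The main obstacle I anticipate is making the uniform-in-$t$ statement fully rigorous: one must either invoke a stopping-time construction (define $\tau^* = \inf\{t : M_t \text{ violates the bound}\}$ and apply the optional stopping theorem to the supermartingale, checking the relevant integrability) or appeal to a ready-made self-normalized bound such as the one in \cite{abbasi_2011}, and then verify that the sub-Gaussian increments with the random scale factors $|\Theta_{ui}-\Theta^*_{ui}|$ fit its hypotheses. A minor secondary point is bookkeeping the factor of two so that the final constants match ($\tfrac12$ in front of the norm and $4\eta^2$ in front of the log) — choosing $\lambda = 1/(2\eta^2)$ does it, but one should double-check the arithmetic rather than trust it blindly. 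Everything else is the routine expansion above.
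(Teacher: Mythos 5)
Your proposal is correct and follows essentially the same route as the paper: the same algebraic decomposition of $L_{2,t}(\vect{\Theta})-L_{2,t}(\vect{\Theta}^*)$ into the empirical-norm term plus a martingale, controlled by the same exponential-supermartingale maximal inequality (which the paper packages as its martingale lemmas, using a stopping-time/Markov argument), with your choice $\lambda = 1/(2\eta^2)$ matching the paper's $\lambda = 1/(4\eta^2)$ up to the factor-of-two reparameterization, and the covering over $\vect{\Theta}$ correctly deferred to the later confidence-set lemma.
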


\begin{proof}

Let $\mathcal{H}_{t-1}$ be the $\sigma$-algebra generated by $(H_t, \mathcal{A}_t)$ and let $\mathcal{H}_0 = \sigma(\emptyset, \Omega)$. Then, define $\epsilon^{t}_{ui} := R^{t}_{ui} - \Theta^*_{ui}$ for all $t \in \mathds{N}$ and $(u, i) \in \mathcal{A}_t$. By previous assumptions, $\mathds{E} [\epsilon^{t}_{ui} | \mathcal{H}_{t-1}] = 0$ and $\mathds{E} [\exp (\lambda \epsilon^{t}_{ui}) | \mathcal{H}_{t-1} ] \leq \exp \left(\frac{\lambda^2 \eta^2}{2} \right)$ for all $t$. 

Define $Z^{t}_{ui} := (R^{t}_{ui} - \Theta^*_{ui})^2 - (R^{t}_{ui} -  \Theta_{ui})^2$. Then, we have
\begin{align*}
Z^{t}_{ui} &= - (\Theta_{ui} - \Theta^*_{ui})^2 + 2 \epsilon^{t}_{ui} (\Theta_{ui} - \Theta^*_{ui})
\end{align*}

Therefore, the conditional mean and conditional cumulant generating function satisfy
\begin{align*}
    \mu^{t}_{ui} &:= \mathds{E}[Z^{t}_{ui} | \mathcal{H}_{t-1}] = - (\Theta_{ui} - \Theta^*_{ui})^2\\
    \psi^t_{ui}(\lambda) &:= \log \mathds{E}[ \exp (\lambda [Z^{t}_{ui} - \mu^{t}_{ui}]) | \mathcal{H}_{t-1}]\\
    &= \log \mathds{E}[ \exp (2 \lambda (\Theta_{ui} - \Theta^*_{ui}) \epsilon^{t}_{ui} ) | \mathcal{H}_{t-1}]\\
    &\leq \frac{(2 \lambda (\Theta_{ui} - \Theta^*_{ui}))^2 \eta^2}{2}
\end{align*}

Then, define $Z^{t} := \sum_{(u,i) \in \mathcal{A}_t} Z^{t}_{ui}$. Since each $Z^{t}_{ui}$ term is conditionally independent, the conditional mean and conditional cumulant generating functions of $Z^{t}$ satisfy
\begin{align*}
    \mu^{t} &:= \mathds{E}[Z^{t} | \mathcal{H}_{t-1}] = -  \sum_{(u,i) \in \mathcal{A}_t} (\Theta_{ui} - \Theta^*_{ui})^2\\
    \psi^t(\lambda) &:= \sum_{(u,i) \in \mathcal{A}_t} \psi^t_{ui}(\lambda)\\
    &\leq \sum_{(u,i) \in \mathcal{A}_t} \frac{(2 \lambda (\Theta_{ui} - \Theta^*_{ui}))^2 \eta^2}{2}
\end{align*}

Applying Lemma \ref{exp_martingale} to $(Z^t)_{t \in \mathds{N}}$ shows that for all $x \geq 0$ and $\lambda \geq 0$,
\begin{equation*}
\mathds{P} \left(\sum_{\tau=1}^{t-1} Z^{\tau} \leq \frac{x}{ \lambda} + \sum_{\tau=1}^{t-1} \sum_{(u, i) \in \mathcal{A}_\tau} (\Theta_{ui} - \Theta^*_{ui})^2 (2 \lambda \eta^2 - 1) \quad ,\forall t \in \mathds{N} \right) \geq 1 - e^{-x}
\end{equation*}

Note that we have $\sum_{\tau=1}^{t-1} Z^{\tau} = L_{2,t}(\vect{\Theta}^*) - L_{2,t}(\vect{\Theta})$, and $\sum_{\tau=1}^{t-1} \sum_{(u, i) \in \mathcal{A}_\tau} (\Theta_{ui} - \Theta^*_{ui})^2 = \|\vect{\Theta}^* - \vect{\Theta} \|^2_{\widetilde{E}^t_2}$. 

Then, choosing $\lambda = \frac{1}{4 \eta^2}$ and $x = \log \frac{1}{\delta}$ gives
\begin{equation*}
    \mathds{P} \left( L_{2,t}(\vect{\Theta}) \geq L_{2,t}(\vect{\Theta}^*) + \frac{1}{2} \|\vect{\Theta}^* - \vect{\Theta} \|^2_{\widetilde{E}^t_2} - 4 \eta^2 \log(1/\delta) \quad ,\forall t \in \mathbb{N} \right) \geq 1 - \delta
\end{equation*}

\end{proof}

\begin{lemma}
If $\vect{\Theta}, \vect{\Theta}^{\alpha} \in \mathcal{L}$ satisfy $\|\vect{\Theta} - \vect{\Theta}^{\alpha}\|_\mathrm{F} \leq \alpha$, then with probability at least $1 - \delta$,
\begin{equation}
    \left |  \frac{1}{2} \|\vect{\Theta}^* - \vect{\Theta}^{\alpha} \|^2_{\widetilde{E}^t_2} - \frac{1}{2} \|\vect{\Theta}^* - \vect{\Theta} \|^2_{\widetilde{E}^t_2} + L_{2,t}(\vect{\Theta}) - L_{2,t}(\vect{\Theta}^{\alpha}) \right | \leq \alpha t \sqrt{NM} \left [ 8 + 2 \sqrt{2 \eta^2 \log(4 NM t^2/\delta)} \right]
\label{eqn_of_discr_lemma}
\end{equation}
\label{discr_lemma}
\end{lemma}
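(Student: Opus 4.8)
The plan is to bound the perturbation term
\[
D := \tfrac12 \|\vect{\Theta}^* - \vect{\Theta}^{\alpha}\|^2_{\widetilde{E}^t_2} - \tfrac12 \|\vect{\Theta}^* - \vect{\Theta}\|^2_{\widetilde{E}^t_2} + L_{2,t}(\vect{\Theta}) - L_{2,t}(\vect{\Theta}^{\alpha})
\]
by first expanding each piece as a sum over the sampled pairs $(u,i) \in \mathcal{A}_\tau$, $\tau < t$, and then grouping terms into a deterministic part and a martingale part. Writing $\epsilon^\tau_{ui} = R^\tau_{ui} - \Theta^*_{ui}$ and using the identity $(\Theta_{ui} - R^\tau_{ui})^2 = (\Theta_{ui}-\Theta^*_{ui})^2 - 2\epsilon^\tau_{ui}(\Theta_{ui}-\Theta^*_{ui}) + (\epsilon^\tau_{ui})^2$ (and likewise for $\Theta^\alpha_{ui}$), the $(\epsilon^\tau_{ui})^2$ terms cancel between $L_{2,t}(\vect{\Theta})$ and $L_{2,t}(\vect{\Theta}^\alpha)$, and each summand of $D$ becomes
\[
\tfrac12 (\Delta^\alpha_{ui})^2 - \tfrac12 (\Delta_{ui})^2 + (\Delta_{ui})^2 - (\Delta^\alpha_{ui})^2 - 2\epsilon^\tau_{ui}(\Delta_{ui} - \Delta^\alpha_{ui}) = \tfrac12\big((\Delta_{ui})^2 - (\Delta^\alpha_{ui})^2\big) - 2\epsilon^\tau_{ui}(\Delta_{ui}-\Delta^\alpha_{ui}),
\]
where $\Delta_{ui} = \Theta_{ui} - \Theta^*_{ui}$ and $\Delta^\alpha_{ui} = \Theta^\alpha_{ui} - \Theta^*_{ui}$. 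So $D = D_{\mathrm{det}} + D_{\mathrm{mart}}$ with $D_{\mathrm{det}} = \tfrac12\sum_{\tau<t}\sum_{(u,i)\in\mathcal{A}_\tau}\big((\Delta_{ui})^2 - (\Delta^\alpha_{ui})^2\big)$ and $D_{\mathrm{mart}} = -2\sum_{\tau<t}\sum_{(u,i)\in\mathcal{A}_\tau}\epsilon^\tau_{ui}(\Delta_{ui}-\Delta^\alpha_{ui})$.

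Next I would bound each part. For $D_{\mathrm{det}}$: since $\|\vect{\Theta} - \vect{\Theta}^\alpha\|_{\mathrm F} \le \alpha$ and both matrices lie in $\mathcal{L}$ (hence entries in $[-1,1]$ relative to $\vect{\Theta}^*$, so $|\Delta_{ui}|, |\Delta^\alpha_{ui}| \le 2$), we have $|(\Delta_{ui})^2 - (\Delta^\alpha_{ui})^2| = |\Delta_{ui}+\Delta^\alpha_{ui}|\,|\Delta_{ui}-\Delta^\alpha_{ui}| \le 4|\Theta_{ui}-\Theta^\alpha_{ui}|$. Summing over at most $t$ rounds each contributing at most, say, $\min(N,M)$ or $NM$ pairs and applying Cauchy–Schwarz against $\|\vect{\Theta}-\vect{\Theta}^\alpha\|_{\mathrm F}$ gives a bound of order $t\sqrt{NM}\cdot 4\alpha$ after the $\tfrac12$ and the factor $4$ (matching the "$8$" in the claimed bound). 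For $D_{\mathrm{mart}}$: this is a sum of a martingale-difference sequence (conditioned on $\mathcal{H}_{\tau-1}$ the $\epsilon^\tau_{ui}$ are mean-zero $\eta$-sub-Gaussian and the coefficients $\Delta_{ui}-\Delta^\alpha_{ui} = \Theta_{ui}-\Theta^\alpha_{ui}$ are $\mathcal{H}_{\tau-1}$-measurable), so by a Hoeffding/Azuma-type sub-Gaussian tail bound (I expect the paper has such an inequality, analogous to Lemma~\ref{exp_martingale}), conditionally on the total squared coefficient mass $V_t := 4\sum_{\tau<t}\sum_{(u,i)\in\mathcal{A}_\tau}(\Theta_{ui}-\Theta^\alpha_{ui})^2$, we get $|D_{\mathrm{mart}}| \le \sqrt{2 V_t \eta^2 \log(2/\delta)}$ with probability $1-\delta$; and $V_t \le 4 t \|\vect{\Theta}-\vect{\Theta}^\alpha\|_{\mathrm F}^2 \le 4 t \alpha^2$, giving $|D_{\mathrm{mart}}| \lesssim \alpha\sqrt{t}\sqrt{\eta^2\log(\cdot/\delta)}$. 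Combining and being a little generous with constants (and with the $\sqrt{NM}$ versus $\sqrt{t}$ comparison, using $\sqrt{t}\le t\sqrt{NM}$) yields the right-hand side $\alpha t\sqrt{NM}[8 + 2\sqrt{2\eta^2\log(4NMt^2/\delta)}]$.

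The one genuine subtlety — and the main obstacle — is that $\vect{\Theta}$ and $\vect{\Theta}^\alpha$ are meant to be applied to \emph{data-dependent} choices later (e.g. $\widehat{\vect{\Theta}}_t$ and a covering-net element near it), so the sub-Gaussian tail bound needs to hold uniformly over the relevant pair with a union bound, and one must also take a union over $t$. The clean way is to prove \eqref{eqn_of_discr_lemma} for a \emph{fixed} pair $(\vect{\Theta}, \vect{\Theta}^\alpha)$ with the stated probability (the $\log(4NMt^2/\delta)$ absorbing a union over $t$ via a $\sum 1/t^2$ argument and the covering-net cardinality handled separately in the calling lemma), and just verify that all constants line up; I would state it exactly that way and keep the covering union for Lemma~\ref{conf_sets_lemma_context}'s analogue. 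So concretely: (1) expand and cancel to get $D = D_{\mathrm{det}} + D_{\mathrm{mart}}$; (2) bound $D_{\mathrm{det}}$ deterministically via $|a^2-b^2|\le|a+b||a-b|$, Cauchy–Schwarz, and $\|\vect{\Theta}-\vect{\Theta}^\alpha\|_{\mathrm F}\le\alpha$; (3) bound $D_{\mathrm{mart}}$ with the sub-Gaussian martingale tail inequality, controlling the variance proxy by $4t\alpha^2$; (4) take a union over $t$ with the $1/t^2$ trick and collect constants into the stated bound.
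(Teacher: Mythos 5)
Your decomposition is algebraically fine, and your deterministic bound on $D_{\mathrm{det}}$ works, but the martingale half of your argument has a genuine gap that your own "clean way" does not repair. The bound with variance proxy $V_t \le 4t\alpha^2$ is only valid when the coefficients $\Theta_{ui}-\Theta^\alpha_{ui}$ are fixed (or $\mathcal{H}_{\tau-1}$-measurable), i.e.\ it proves \eqref{eqn_of_discr_lemma} for a \emph{fixed} pair. But look at how Lemma \ref{discr_lemma} is consumed in Lemma \ref{lemma_confidence}: the discretization error must be controlled simultaneously for \emph{every} $\vect{\Theta}\in\mathcal{L}$ paired with its nearest net point, and ultimately it is instantiated at $\vect{\Theta}=\widehat{\vect{\Theta}}_t$, which depends on all rewards up to time $t$. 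A union bound over the cover only handles $\vect{\Theta}^\alpha$; the first argument $\vect{\Theta}$ ranges over an uncountable set and, in the application, its entries are not predictable with respect to the noise filtration, so the martingale tail bound (your analogue of Lemma \ref{exp_martingale}) simply does not apply to the quantity you need. This is precisely why the paper's proof avoids any concentration indexed by the pair: it bounds both differences \emph{deterministically} in terms of the observed rewards, using $\|\vect{\Theta}-\vect{\Theta}^\alpha\|_{\mathrm F}\le\alpha$ and $\|\cdot\|_{1,1}\le\sqrt{NM}\,\|\cdot\|_{\mathrm F}$, and the only probabilistic ingredient is the single event $|R^\tau_{ui}|\le 1+\sqrt{2\eta^2\log(4NM\tau^2/\delta)}$ for all $\tau,u,i$ (the $1/\tau^2$ union bound). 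On that one event the inequality holds for \emph{all} admissible pairs at once, which is exactly the uniformity the calling lemma needs.

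Your route could be salvaged, but only by abandoning the fixed-pair martingale step: e.g.\ take the supremum over the $\alpha$-ball first, $|D_{\mathrm{mart}}|\le 2\alpha\,\bigl(\sum_{u,i} S_{ui}^2\bigr)^{1/2}$ with $S_{ui}=\sum_{\tau:(u,i)\in\mathcal{A}_\tau}\epsilon^\tau_{ui}$, and then bound the noise magnitudes uniformly with high probability — which is, in substance, the paper's argument again (a uniform bound on the data, then a deterministic perturbation estimate). As written, your step (3) and your proposed resolution of the data-dependence issue do not establish the statement in the form the paper uses it, so the proof is incomplete.
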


\begin{proof}

Let $\vect{E}_{ui} = \vect{e}_u \vect{e}_i^\textrm{T}$ be the indicator matrix for $(u,i)$. Since any two $\vect{\Theta}, \vect{\Theta}^{\alpha} \in \mathcal{L}$ satisfy $\|\vect{\Theta} - \vect{\Theta}^{\alpha}\|_\mathrm{F} \leq \sqrt{N M}$, it is enough to consider $\alpha \leq  \sqrt{NM}$. We find 
\begin{align*}
    \sum_{ u = 1 }^{N} \sum_{ i = 1 }^{M} | \langle \vect{\Theta}, \vect{E}_{ui} \rangle^2 - \langle \vect{\Theta}^{\alpha}, \vect{E}_{ui} \rangle^2 | &\leq \max_{ \|\vect{\Delta}\|_\mathrm{F} \leq \alpha } \left \{ \sum_{ u = 1 }^{N} \sum_{ i = 1 }^{M} \left | \Theta_{ui}^2 - (\Theta_{ui}+\Delta_{ui})^2  \right | \right \}\\
    &= \max_{ \|\vect{\Delta}\|_\mathrm{F} \leq \alpha } \left \{ \sum_{ u = 1 }^{N} \sum_{ i = 1 }^{M} \left | 2 \Theta_{ui} \Delta_{ui} + \Delta_{ui}^2  \right | \right \}\\
    &\leq \max_{ \|\vect{\Delta}\|_\mathrm{F} \leq \alpha } \left \{ 2 \sum_{ u = 1 }^{N} \sum_{ i = 1 }^{M} \left | \Theta_{ui} \Delta_{ui} \right | + \sum_{ u = 1 }^{N} \sum_{ i = 1 }^{M} \Delta_{ui}^2 \right \}\\
    &\leq \max_{ \|\vect{\Delta}\|_\mathrm{F} \leq \alpha }  \left \{ 2 \|\vect{\Delta}\|_{1, 1} + \|\vect{\Delta}\|_\textrm{F}^2 \right \}\\
    &\leq 2  \alpha \sqrt{ N M} + \alpha^2
\end{align*}
Therefore, it implies
\begin{align*}
     \sum_{ u = 1 }^{N} \sum_{ i = 1 }^{M}  | \langle \vect{\Theta} - \vect{\Theta}^*, \vect{E}_{ui} \rangle^2 - \langle \vect{\Theta}^{\alpha} - \vect{\Theta}^*, \vect{E}_{ui} \rangle^2 | &= \sum_{ u = 1 }^{N} \sum_{ i = 1 }^{M} \left| \langle \vect{\Theta}, \vect{E}_{ui} \rangle^2 - \langle \vect{\Theta}^{\alpha}, \vect{E}_{ui} \rangle^2 + 2 \langle \vect{\Theta}^*, \vect{E}_{ui} \rangle \langle \vect{\Theta}^{\alpha} - \vect{\Theta}, \vect{E}_{ui} \rangle \right| \\
     &\leq 2  \alpha \sqrt{ N M} + \alpha^2 + 2 \|\vect{\Theta} - \vect{\Theta}^{\alpha}\|_{1, 1} \\
     &\leq 4 \alpha \sqrt{ N M} + \alpha^2
\end{align*}

Similarly, for any $t$, we have
\begin{align*}
     \sum_{ u = 1 }^{N} \sum_{ i = 1 }^{M} | \left( R^t_{ui} - \langle \vect{\Theta}, \vect{E}_{ui} \rangle \right)^2 -\left( R^t_{ui} - \langle \vect{\Theta}^{\alpha}, \vect{E}_{ui} \rangle \right)^2 | &= \sum_{ u = 1 }^{N} \sum_{ i = 1 }^{M} \left| 2 R^t_{ui} \langle \vect{\Theta}^{\alpha} - \vect{\Theta}, \vect{E}_{ui} \rangle + \langle \vect{\Theta}, \vect{E}_{ui} \rangle^2 - \langle \vect{\Theta}^{\alpha}, \vect{E}_{ui} \rangle^2 \right| \\
      &\leq 2 \sum_{ u = 1 }^{N} \sum_{ i = 1 }^{M} \left| R^t_{ui} \right | \left| \langle \vect{\Theta}^{\alpha} - \vect{\Theta}, \vect{E}_{ui} \rangle \right| + 2  \alpha \sqrt{ N M} + \alpha^2\\
     &\leq 2 \| \vect{\Theta}^{\alpha} - \vect{\Theta}\|_\mathrm{F} \left( \sum_{ u = 1 }^{N} \sum_{ i = 1 }^{M} | R^t_{ui} |^2 \right)^{1/2} + 2  \alpha \sqrt{ N M} + \alpha^2\\
     &\leq 2 \alpha \left( \sum_{ u = 1 }^{N} \sum_{ i = 1 }^{M} | R^t_{ui} |^2 \right)^{1/2} + 2  \alpha \sqrt{ N M} + \alpha^2
\end{align*}

Summing over $t$ and noting that $\mathcal{A}_t \subseteq [N] \times [M]$, the left hand side of \eqref{eqn_of_discr_lemma} is bounded by
\begin{equation*}
    \sum_{\tau = 1}^{t-1} \left( \frac{1}{2} \left[ 4 \alpha \sqrt{ N M} + \alpha^2 \right] + 2 \alpha \left( \sum_{ u = 1 }^{N} \sum_{ i = 1 }^{M} | R^t_{ui} |^2 \right)^{1/2} + 2  \alpha \sqrt{ N M} + \alpha^2 \right) \leq \alpha \sum_{\tau = 1}^{t-1} \left(6 \sqrt{NM} + 2 \left( \sum_{ u = 1 }^{N} \sum_{ i = 1 }^{M} | R^t_{ui} |^2 \right)^{1/2} \right)
\end{equation*}

Because $\epsilon^{\tau}_{ui}$ is $\eta$-sub-Gaussian, $\mathds{P} \left( |\epsilon^{\tau}_{ui}| > \sqrt{2 \eta^2 \log(2/\delta) }\right) \leq \delta$. By a union bound, $\mathds{P} \left( \exists \tau \in \mathds{N}, u \in [N], i \in [M] \text{ s.t. } |\epsilon^{\tau}_{ui}| > \sqrt{2 \eta^2 \log(4 N M \tau^2/\delta) }\right) \leq \frac{\delta NM}{2} \sum_{\tau = 1}^{\infty} \frac{1}{N M \tau^2} \leq \delta$. Since $| R^{\tau}_{ui} | \leq 1 + | \epsilon^{\tau}_{ui} |$, we have $| R^{\tau}_{ui} | \leq 1 + \sqrt{2 \eta^2 \log(4 NM \tau^2/\delta)}$  with probability at least $1 - \delta$. Consequently, the bound for the discretization error becomes
\begin{equation*}
\alpha t \sqrt{NM} \left [ 8 + 2 \sqrt{2 \eta^2 \log(4 NM t^2/\delta)} \right]
\end{equation*}

\end{proof}

\begin{lemma} For any $\delta > 0$, $\alpha > 0$ and $\gamma > 0$, if
\begin{equation}
    \mathcal{Q}_t = \{ \vect{\Theta} \in \mathcal{F} : \|\vect{\Theta} - \widehat{\vect{\Theta}}_t \|_{E^t_2} \leq \sqrt{ \beta_t^*(\delta, \alpha, \gamma)}\}
\end{equation}
for all $t \in \mathbb{N}$, then
\begin{equation}
    \mathds{P} \left( \vect{\Theta}^* \in \mathcal{Q}_t \quad ,\forall t \in \mathbb{N} \right) \geq 1 - 2\delta
\end{equation}
\label{lemma_confidence}
\end{lemma}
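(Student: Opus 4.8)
The plan is the standard covering-net argument for least-squares confidence sets: combine the pointwise supermartingale lower bound of Lemma~\ref{lemma_lower_bound} with the discretization control of Lemma~\ref{discr_lemma} over an $\alpha$-net of $\mathcal{L}$, and then absorb the ridge penalty. Fix an $\alpha$-net $\mathcal{C}_\alpha\subseteq\mathcal{L}$ in Frobenius norm with $|\mathcal{C}_\alpha| = \mathcal{N}_\alpha := \mathcal{N}(\mathcal{L},\alpha,\|\cdot\|_\mathrm{F})$, so every $\vect{\Theta}\in\mathcal{L}$ has some $\vect{\Theta}^\alpha\in\mathcal{C}_\alpha$ with $\|\vect{\Theta}-\vect{\Theta}^\alpha\|_\mathrm{F}\le\alpha$. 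Applying Lemma~\ref{lemma_lower_bound} with confidence $\delta/|\mathcal{C}_\alpha|$ to each fixed $\vect{\Theta}^\alpha$ and union bounding, with probability at least $1-\delta$ we have, simultaneously for all $t$ and all $\vect{\Theta}^\alpha\in\mathcal{C}_\alpha$,
\[
 L_{2,t}(\vect{\Theta}^\alpha) \;\ge\; L_{2,t}(\vect{\Theta}^*) + \frac12\|\vect{\Theta}^*-\vect{\Theta}^\alpha\|_{\widetilde{E}^t_2}^2 - 4\eta^2\log\!\big(\mathcal{N}_\alpha/\delta\big).
\]
On a second event of probability at least $1-\delta$ — the sub-Gaussian tail event $|\epsilon^\tau_{ui}|\le\sqrt{2\eta^2\log(4NM\tau^2/\delta)}$ from the proof of Lemma~\ref{discr_lemma}, which does not depend on $\vect{\Theta}$ — the bound \eqref{eqn_of_discr_lemma} holds for every pair within Frobenius distance $\alpha$. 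Both events hold together with probability at least $1-2\delta$, and I would work on their intersection.

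Next I would instantiate at $\vect{\Theta}=\widehat{\vect{\Theta}}_t$ and let $\vect{\Theta}^\alpha$ be its nearest net point. The inequality \eqref{eqn_of_discr_lemma} controls $L_{2,t}(\widehat{\vect{\Theta}}_t)-L_{2,t}(\vect{\Theta}^\alpha)$ jointly with the gap between $\|\vect{\Theta}^*-\widehat{\vect{\Theta}}_t\|_{\widetilde{E}^t_2}^2$ and $\|\vect{\Theta}^*-\vect{\Theta}^\alpha\|_{\widetilde{E}^t_2}^2$; adding it to the displayed lower bound on $L_{2,t}(\vect{\Theta}^\alpha)$ cancels the net point and gives
\[
 L_{2,t}(\widehat{\vect{\Theta}}_t) \;\ge\; L_{2,t}(\vect{\Theta}^*) + \frac12\|\vect{\Theta}^*-\widehat{\vect{\Theta}}_t\|_{\widetilde{E}^t_2}^2 - 4\eta^2\log\!\big(\mathcal{N}_\alpha/\delta\big) - D_t,
\]
where $D_t := \alpha t\sqrt{NM}\big[8+2\sqrt{2\eta^2\log(4NMt^2/\delta)}\big]$, so that $2D_t$ equals the third term of $\beta_t^*$. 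On the other hand, since $\widehat{\vect{\Theta}}_t$ minimizes $L_{2,t}(\vect{\Theta})+\gamma\|\vect{\Theta}-\vect{\Theta}_\circ\|_\mathrm{F}^2$ over $\mathcal{L}$ and $\vect{\Theta}^*\in\mathcal{L}$ is feasible, comparing objectives and using $\|\vect{\Theta}^*-\vect{\Theta}_\circ\|_\mathrm{F}\le G$ gives $L_{2,t}(\widehat{\vect{\Theta}}_t) \le L_{2,t}(\vect{\Theta}^*) + \gamma G^2 - \gamma\|\widehat{\vect{\Theta}}_t-\vect{\Theta}_\circ\|_\mathrm{F}^2$.

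Combining the last two displays cancels $L_{2,t}(\vect{\Theta}^*)$, yielding $\tfrac12\|\vect{\Theta}^*-\widehat{\vect{\Theta}}_t\|_{\widetilde{E}^t_2}^2 \le 4\eta^2\log(\mathcal{N}_\alpha/\delta) + D_t + \gamma G^2 - \gamma\|\widehat{\vect{\Theta}}_t-\vect{\Theta}_\circ\|_\mathrm{F}^2$. To turn the non-regularized empirical norm into the regularized one defining $\mathcal{Q}_t$, I would use $\|\vect{\Theta}^*-\widehat{\vect{\Theta}}_t\|_{E^t_2}^2 = \|\vect{\Theta}^*-\widehat{\vect{\Theta}}_t\|_{\widetilde{E}^t_2}^2 + \gamma\|\vect{\Theta}^*-\widehat{\vect{\Theta}}_t\|_\mathrm{F}^2$ together with $\|\vect{\Theta}^*-\widehat{\vect{\Theta}}_t\|_\mathrm{F}^2 \le 2\|\vect{\Theta}^*-\vect{\Theta}_\circ\|_\mathrm{F}^2 + 2\|\widehat{\vect{\Theta}}_t-\vect{\Theta}_\circ\|_\mathrm{F}^2 \le 2G^2 + 2\|\widehat{\vect{\Theta}}_t-\vect{\Theta}_\circ\|_\mathrm{F}^2$; adding $\tfrac{\gamma}{2}$ times this to the previous inequality makes the $\gamma\|\widehat{\vect{\Theta}}_t-\vect{\Theta}_\circ\|_\mathrm{F}^2$ terms cancel, leaving $\|\vect{\Theta}^*-\widehat{\vect{\Theta}}_t\|_{E^t_2}^2 \le 8\eta^2\log(\mathcal{N}_\alpha/\delta) + 2D_t + 4\gamma G^2 = \beta_t^*(\delta,\alpha,\gamma)$, i.e. $\vect{\Theta}^*\in\mathcal{Q}_t$ for every $t$, which is the claim. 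The main obstacle is that Lemma~\ref{lemma_lower_bound} is only pointwise in $\vect{\Theta}$, so all the weight is in the covering step: choosing $\mathcal{C}_\alpha$, paying the $\log\mathcal{N}_\alpha$ union-bound cost, verifying that the event behind Lemma~\ref{discr_lemma} is genuinely $\vect{\Theta}$-free (so it legitimately applies to the data-dependent $\widehat{\vect{\Theta}}_t$ and its nearest net point), and tracking the regularization bookkeeping so that the $\gamma\|\widehat{\vect{\Theta}}_t-\vect{\Theta}_\circ\|_\mathrm{F}^2$ terms cancel rather than accumulate.
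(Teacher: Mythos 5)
Your proposal is correct and follows essentially the same route as the paper's proof: an $\alpha$-net union bound on Lemma~\ref{lemma_lower_bound}, the discretization control of Lemma~\ref{discr_lemma} (whose underlying noise event is indeed $\vect{\Theta}$-free, as you note), the comparison of regularized objectives at $\widehat{\vect{\Theta}}_t$ versus $\vect{\Theta}^*$, and the $(a+b)^2\le 2a^2+2b^2$ step to absorb $\gamma\|\vect{\Theta}^*-\widehat{\vect{\Theta}}_t\|_\mathrm{F}^2$ into the regularized norm, yielding exactly $\beta_t^*(\delta,\alpha,\gamma)$. The only cosmetic difference is that the paper first states the uniform lower bound over all of $\mathcal{L}$ (via a minimum over net points) before specializing to $\widehat{\vect{\Theta}}_t$, whereas you instantiate directly at $\widehat{\vect{\Theta}}_t$ and its nearest net point; the bookkeeping is equivalent.
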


\begin{proof}

Let $\mathcal{F}^{\alpha} \subset \mathcal{F}$ be an $\alpha$-cover of $\mathcal{F}$ in the Frobenius norm so that for any $\vect{\Theta} \in \mathcal{F}$, there exists $\vect{\Theta}^{\alpha} \in \mathcal{F}^{\alpha}$ such that $\|\vect{\Theta} - \vect{\Theta}^{\alpha}\|_\mathrm{F} \leq \alpha$. By a union bound applied to Lemma \ref{lemma_lower_bound}, with probability at least $1 - \delta$,
\begin{equation*}
    L_{2,t}(\vect{\Theta}^{\alpha}) - L_{2,t}(\vect{\Theta}^*) \geq \frac{1}{2} \|\vect{\Theta}^* - \vect{\Theta}^{\alpha} \|^2_{\widetilde{E}^t_2} - 4 \eta^2 \log(|\mathcal{F}^{\alpha}|/\delta) \quad ,\forall \vect{\Theta}^{\alpha} \in \mathcal{F}^{\alpha}, t \in \mathbb{N}
\end{equation*}

Therefore, with probability at least $1 - \delta$, for all $\vect{\Theta} \in \mathcal{F}, t \in \mathbb{N}$,
\begin{align*}
    L_{2,t}(\vect{\Theta}) - L_{2,t}(\vect{\Theta}^*) \geq & \frac{1}{2} \|\vect{\Theta}^* - \vect{\Theta} \|^2_{\widetilde{E}^t_2} - 4 \eta^2 \log(|\mathcal{F}^{\alpha}|/\delta) \\
    &+ \min_{\vect{\Theta}^{\alpha} \in \mathcal{F}^{\alpha}} \left \{  \frac{1}{2} \|\vect{\Theta}^* - \vect{\Theta}^{\alpha} \|^2_{\widetilde{E}^t_2} - \frac{1}{2} \|\vect{\Theta}^* - \vect{\Theta} \|^2_{\widetilde{E}^t_2} + L_{2,t}(\vect{\Theta}) - L_{2,t}(\vect{\Theta}^{\alpha}) \right \}
\end{align*}

By Lemma \ref{discr_lemma}, with probability at least $1 - 2 \delta$,
\begin{equation*}
    L_{2,t}(\vect{\Theta}) - L_{2,t}(\vect{\Theta}^*) \geq \frac{1}{2} \|\vect{\Theta}^* - \vect{\Theta} \|^2_{\widetilde{E}^t_2} - D_t
\end{equation*}
where $D_t := 4 \eta^2 \log(|\mathcal{F}^{\alpha}|/\delta) + \alpha t \sqrt{NM} \left [ 8 + 2 \sqrt{2 \eta^2 \log(4 NM t^2/\delta)} \right]$.

Adding the regularization terms to both sides, we obtain
\begin{equation*}
    L_{2,t}(\vect{\Theta}) + \gamma \|\vect{\Theta} - \overline{\vect{\Theta}} \|_\mathrm{F}^2 - L_{2,t}(\vect{\Theta}^*) - \gamma \|\vect{\Theta}^* - \overline{\vect{\Theta}} \|_\mathrm{F}^2 \geq \frac{1}{2} \|\vect{\Theta}^* - \vect{\Theta} \|^2_{\widetilde{E}^t_2} + \gamma \|\vect{\Theta} - \overline{\vect{\Theta}} \|_\mathrm{F}^2 - D_t - \gamma  \|\vect{\Theta}^* - \overline{\vect{\Theta}} \|_\mathrm{F}^2 
\end{equation*}

Note the definition of the least square estimate $\widehat{\vect{\Theta}}_t = \argmin_{\vect{\Theta} \in \mathcal{F}} \left \{ L_{2,t}(\vect{\Theta}) + \gamma \|\vect{\Theta} - \overline{\vect{\Theta}} \|_\mathrm{F}^2 \right \}$. By letting $\vect{\Theta} = \widehat{\vect{\Theta}}_t$, the left hand side becomes non-positive, and hence 
\begin{equation*}
\frac{1}{2} \|\vect{\Theta}^* - \widehat{\vect{\Theta}}_t \|^2_{\widetilde{E}^t_2} \leq D_t + \gamma \left( \|\vect{\Theta}^* - \overline{\vect{\Theta}} \|_\mathrm{F}^2 - \|\widehat{\vect{\Theta}}_t - \overline{\vect{\Theta}} \|_\mathrm{F}^2 \right)
\end{equation*}

Then, 
\begin{equation*}
\frac{1}{2} \|\vect{\Theta}^* - \widehat{\vect{\Theta}}_t \|^2_{\widetilde{E}^t_2} + \gamma \left( \|\widehat{\vect{\Theta}}_t - \overline{\vect{\Theta}} \|_\mathrm{F}^2 + \|\vect{\Theta}^* - \overline{\vect{\Theta}} \|_\mathrm{F}^2  \right) \leq D_t + 2 \gamma \|\vect{\Theta}^* - \overline{\vect{\Theta}} \|_\mathrm{F}^2
\end{equation*}

By triangle inequality we have $ \|\widehat{\vect{\Theta}}_t - \overline{\vect{\Theta}} \|_\mathrm{F} + \|\vect{\Theta}^* - \overline{\vect{\Theta}} \|_\mathrm{F}  \geq \|\vect{\Theta}^* - \widehat{\vect{\Theta}}_t \|_\mathrm{F}$. Taking squares on both sides, we obtain $ \|\widehat{\vect{\Theta}}_t - \overline{\vect{\Theta}} \|_\mathrm{F}^2 + \|\vect{\Theta}^* - \overline{\vect{\Theta}} \|_\mathrm{F}^2  \geq \frac{1}{2} \|\vect{\Theta}^* - \widehat{\vect{\Theta}}_t \|_\mathrm{F}^2$. Then, noting that $\| \vect{\Delta} \|^2_{2, E_t^2} = \| \vect{\Delta} \|^2_{\widetilde{E}^t_2} + \gamma \| \vect{\Delta} \|^2_\mathrm{F}$, we have
\begin{equation*}
\frac{1}{2} \|\vect{\Theta}^* - \widehat{\vect{\Theta}}_t \|^2_{E^t_2} \leq D_t + 2 \gamma \|\vect{\Theta}^* - \overline{\vect{\Theta}} \|_\mathrm{F}^2
\end{equation*}

Lastly, using the inequality $\|\vect{\Theta}^* - \overline{\vect{\Theta}} \|_\mathrm{F}^2 \leq G^2$,
\begin{equation*}
\|\vect{\Theta}^* - \widehat{\vect{\Theta}}_t \|^2_{E^t_2} \leq 8 \eta^2 \log(|\mathcal{F}^{\alpha}|/\delta) + 2 \alpha t \sqrt{NM} \left [ 8 + 2 \sqrt{2 \eta^2 \log(4 NM t^2/\delta)} \right] + 4 \gamma G^2
\end{equation*}

Taking the infimum over the size of $\alpha$-covers, we obtain the final result.

\end{proof}

\subsection{Regret Bounds}
\label{pf_regrets}

Throughout this section we will use the shorthand $\beta_t =  \beta_t^*(\delta, \alpha, \gamma)$ for the parameter defined as
\begin{equation}
    \beta_t^*(\delta, \alpha, \gamma) := 8 \eta^2 \log \left(\mathcal{N}(\mathcal{F}, \alpha, \| \cdot \|_{2}) / \delta \right) + 2 \alpha t \sqrt{N M} \left(8 + \sqrt{8 \eta^2 \log(4 N M t^2  / \delta)} \right) + 4 \gamma G^2
\end{equation}

\begin{lemma}
For any $\vect{X}_t \in \mathcal{X}_t$ and $\vect{\Theta} \in \mathcal{Q}_t$, we have
\begin{equation}
    |\langle \vect{\Theta} - \widehat{\vect{\Theta}}^t, \vect{X} \rangle| \leq w_t \sqrt{\beta_t}
\end{equation}
\label{lemma_conf_width}
\end{lemma}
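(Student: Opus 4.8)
The plan is to prove this as a direct consequence of the Cauchy--Schwarz inequality applied in the geometry induced by the diagonal matrix $\vect{A}_t$, reading the claimed bound off the two defining facts: membership of $\vect{\Theta}$ in $\mathcal{Q}_t$ and the definition of the confidence width $w_t$. Here the allocation appearing in the statement is $\vect{X} = \vect{X}_t$, so that $w_t = \|\vect{X}_t\|_{\vect{A}_t^{-1}}$.

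First I would pass to the vectorized form: since $\langle \vect{\Theta} - \widehat{\vect{\Theta}}_t, \vect{X}_t\rangle = \langle \mathrm{vec}(\vect{\Theta} - \widehat{\vect{\Theta}}_t), \mathrm{vec}(\vect{X}_t)\rangle$, and since $\vect{A}_t$ is diagonal with entries $n^t_{ui} + \gamma \geq 1 > 0$ (using $\gamma \geq 1$), it is symmetric positive definite, so $\vect{A}_t^{1/2}$ and $\vect{A}_t^{-1/2}$ are well-defined. I would then write
\begin{equation*}
\langle \mathrm{vec}(\vect{\Theta} - \widehat{\vect{\Theta}}_t), \mathrm{vec}(\vect{X}_t)\rangle = \big\langle \vect{A}_t^{1/2}\,\mathrm{vec}(\vect{\Theta} - \widehat{\vect{\Theta}}_t),\; \vect{A}_t^{-1/2}\,\mathrm{vec}(\vect{X}_t)\big\rangle ,
\end{equation*}
and apply Cauchy--Schwarz to the right-hand side. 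This bounds the absolute value of the left-hand side by $\|\vect{\Theta} - \widehat{\vect{\Theta}}_t\|_{\vect{A}_t}\,\|\vect{X}_t\|_{\vect{A}_t^{-1}}$; applying the argument to $\pm(\vect{\Theta} - \widehat{\vect{\Theta}}_t)$ handles the absolute value cleanly.

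Finally I would substitute the two definitions. By the identity $\|\vect{Z}\|_{E^t_2} = \|\mathrm{vec}(\vect{Z})\|_{\vect{A}_t}$ recorded in the Definitions section, and since $\vect{\Theta} \in \mathcal{Q}_t$ means exactly $\|\vect{\Theta} - \widehat{\vect{\Theta}}_t\|_{E^t_2} \leq \sqrt{\beta_t}$, we get $\|\vect{\Theta} - \widehat{\vect{\Theta}}_t\|_{\vect{A}_t} \leq \sqrt{\beta_t}$; and by the definition in \eqref{def_widths}, $\|\vect{X}_t\|_{\vect{A}_t^{-1}} = w_t$. Combining these yields $|\langle \vect{\Theta} - \widehat{\vect{\Theta}}_t, \vect{X}_t\rangle| \leq w_t\sqrt{\beta_t}$, as claimed. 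There is essentially no hard step here; the only thing to be careful about is confirming $\vect{A}_t \succ 0$ so the Cauchy--Schwarz split is legitimate, and matching the norm notation ($E^t_2$ versus $\vect{A}_t$, and $w_t$ as an $\vect{A}_t^{-1}$-norm) to the definitions stated earlier.
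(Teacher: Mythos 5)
Your proposal is correct and matches the paper's own proof essentially step for step: vectorize, insert $\vect{A}_t^{1/2}\vect{A}_t^{-1/2}$, apply Cauchy--Schwarz, then bound the two factors by $\sqrt{\beta_t}$ (from $\vect{\Theta}\in\mathcal{Q}_t$ and the identity $\|\vect{Z}\|_{E^t_2}=\|\mathrm{vec}(\vect{Z})\|_{\vect{A}_t}$) and $w_t$ (from the definition in \eqref{def_widths}). The only minor remark is that positive definiteness of $\vect{A}_t$ needs only $\gamma>0$, not $\gamma\geq 1$, but this does not affect correctness.
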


\begin{proof} Let $\vect{\Delta} = \vect{\Theta} - \widehat{\vect{\Theta}}^t$. Then,
\begin{align*}
    |\langle \vect{\Delta} , \vect{X}_t \rangle| &= |\textrm{vec}(\vect{\Delta})^\textrm{T} \textrm{vec}(\vect{X}_t) |\\
    &= |\textrm{vec}(\vect{\Delta})^\textrm{T} \vect{A}_t^{1/2} \vect{A}_t^{-1/2} \textrm{vec}(\vect{X}_t)|\\
    &= |(\vect{A}_t^{1/2}\textrm{vec}(\vect{\Delta}))^\textrm{T} \vect{A}_t^{-1/2} \textrm{vec}(\vect{X}_t)|\\
    &\leq \| \vect{A}_t^{1/2}\textrm{vec}(\vect{\Delta}) \| \|\vect{A}_t^{-1/2} \textrm{vec}(\vect{X}_t)\|\\
    &= \| \textrm{vec}(\vect{\Delta}) \|_{\vect{A}_t} \| \textrm{vec}(\vect{X}_t) \|_{\vect{A}_t^{-1}}\\
    &= \| \vect{\Delta} \|_{\vect{A}_t} \| \vect{X}_t \|_{\vect{A}_t^{-1}}\\
    &\leq w_t \sqrt{\beta_t} 
\end{align*}
\end{proof}

\begin{lemma}
For any $t \in \mathbb{N}$, we have the identity
\begin{equation*}
    w_t^2 = \sum_{(u, i) \in \mathcal{A}_t} (w^{t}_{ui})^2
\end{equation*}
\label{lemma_width_identity}
\end{lemma}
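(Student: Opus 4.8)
The plan is to reduce both sides to the same explicit sum over $\mathcal{A}_t$, exploiting that $\vect{A}_t$ (hence $\vect{A}_t^{-1}$) is diagonal and that $\vect{X}_t$ is a $0/1$ matrix. First I would recall the set representation of the allocation: $(u,i)\in\mathcal{A}_t$ iff $X^t_{ui}=1$, so that $\vect{X}_t=\sum_{(u,i)\in\mathcal{A}_t}\vect{E}_{ui}$ and $\mathrm{vec}(\vect{X}_t)=\sum_{(u,i)\in\mathcal{A}_t}\vect{e}_{ui}$, where the $\vect{e}_{ui}$ are distinct standard basis vectors of $\mathbb{R}^{NM}$ with pairwise disjoint support.

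Next I would use that $\vect{A}_t^{-1}$ is diagonal, so $\langle\vect{e}_{ui},\vect{e}_{u'i'}\rangle_{\vect{A}_t^{-1}}=0$ whenever $(u,i)\neq(u',i')$, while $\langle\vect{e}_{ui},\vect{e}_{ui}\rangle_{\vect{A}_t^{-1}}=(n^t_{ui}+\gamma)^{-1}=(w^t_{ui})^2$. Expanding the quadratic form,
\begin{equation*}
w_t^2=\|\mathrm{vec}(\vect{X}_t)\|_{\vect{A}_t^{-1}}^2=\sum_{(u,i)\in\mathcal{A}_t}\sum_{(u',i')\in\mathcal{A}_t}\langle\vect{e}_{ui},\vect{e}_{u'i'}\rangle_{\vect{A}_t^{-1}}=\sum_{(u,i)\in\mathcal{A}_t}(w^t_{ui})^2,
\end{equation*}
which is exactly the claimed identity. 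Equivalently, the coordinatewise version I would actually write out is: since $(X^t_{ui})^2=X^t_{ui}=\mathds{1}\{(u,i)\in\mathcal{A}_t\}$ and the $(u,i)$-diagonal entry of $\vect{A}_t^{-1}$ is $(n^t_{ui}+\gamma)^{-1}$, both $w_t^2$ and $\sum_{(u,i)\in\mathcal{A}_t}(w^t_{ui})^2$ equal $\sum_{(u,i)\in\mathcal{A}_t}(n^t_{ui}+\gamma)^{-1}$.

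There is no substantive obstacle here; the statement is essentially a Pythagorean identity for the $\vect{A}_t^{-1}$-orthogonal family $\{\vect{e}_{ui}\}_{(u,i)\in\mathcal{A}_t}$. The only points requiring care are the bookkeeping between the matrix $\vect{X}_t$ and its vectorization under the chosen indexing of $\vect{A}_t$, and the use of $X^t_{ui}\in\{0,1\}$ — it is precisely the idempotence $(X^t_{ui})^2=X^t_{ui}$ (equivalently, the vanishing of the cross terms) that makes the identity hold; it would fail for a general fractional allocation.
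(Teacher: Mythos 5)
Your proof is correct and follows essentially the same route as the paper: write $\mathrm{vec}(\vect{X}_t)=\sum_{(u,i)\in\mathcal{A}_t}\vect{e}_{ui}$, expand the $\vect{A}_t^{-1}$-quadratic form, and use the diagonality of $\vect{A}_t^{-1}$ to kill the cross terms, so only the diagonal contributions $(w^t_{ui})^2$ survive. Your added remark that the identity relies on $X^t_{ui}\in\{0,1\}$ (and would fail for fractional allocations) is a nice clarification, but the argument itself matches the paper's proof.
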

\begin{proof}
\begin{align*}
    w_t^2 &= \langle \mathrm{vec}(\vect{X}_t), \mathrm{vec}(\vect{X}_t) \rangle_{\vect{A}_t^{-1}}\\
    &= \left\langle \vect{A}_t^{-1} \sum_{(u,i) \in \mathcal{A}_t} \vect{e}_{ui}, \sum_{(u,i) \in \mathcal{A}_t} \vect{e}_{ui} \right\rangle \\
    &= \sum_{(u,i) \in \mathcal{A}_t} \sum_{(r, j) \in \mathcal{A}_t} \left\langle \vect{A}_t^{-1} \vect{e}_{ui}, \vect{e}_{rj} \right\rangle\\
    &= \sum_{(u,i) \in \mathcal{A}_t} \left\langle \vect{A}_t^{-1}\vect{e}_{ui}, \vect{e}_{ui} \right\rangle\\
    &= \sum_{(u,i) \in \mathcal{A}_t} (w^{t}_{ui})^2
\end{align*}
where the penultimate step follows because $\left\langle \vect{A}_t^{-1} \vect{e}_{ui}, \vect{e}_{rj} \right\rangle = 0$ for $(u, i) \neq (r, j)$.
\end{proof}

\begin{lemma}
 If $\vect{\Theta}^* \in \mathcal{Q}_t$, then
\begin{equation*}
    r_t \leq \langle \vect{X}_t, \vect{\Theta}_t - \vect{\Theta}^* \rangle \leq 2 w_t \sqrt{\beta_t}
\end{equation*}
\label{lemma_regret_ub}
\end{lemma}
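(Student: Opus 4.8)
The statement to prove is the per-round regret bound $r_t \le \langle \vect{X}_t, \vect{\Theta}_t - \vect{\Theta}^* \rangle \le 2 w_t \sqrt{\beta_t}$, conditioned on the event $\vect{\Theta}^* \in \mathcal{Q}_t$. I would split this into two inequalities and handle them separately.

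\textbf{First inequality (optimism).} Recall $r_t = \langle \vect{X}^*_t, \vect{\Theta}^* \rangle - \langle \vect{X}_t, \vect{\Theta}^* \rangle$, and that $(\vect{X}_t, \vect{\Theta}_t)$ is the joint maximizer of $\langle \vect{X}, \vect{\Theta} \rangle$ over $\mathcal{X}_t \times \mathcal{Q}_t$. Since $\vect{\Theta}^* \in \mathcal{Q}_t$ by assumption and $\vect{X}^*_t \in \mathcal{X}_t$, the pair $(\vect{X}^*_t, \vect{\Theta}^*)$ is feasible for the OFU optimization, so $\langle \vect{X}^*_t, \vect{\Theta}^* \rangle \le \langle \vect{X}_t, \vect{\Theta}_t \rangle$. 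Therefore $r_t \le \langle \vect{X}_t, \vect{\Theta}_t \rangle - \langle \vect{X}_t, \vect{\Theta}^* \rangle = \langle \vect{X}_t, \vect{\Theta}_t - \vect{\Theta}^* \rangle$. This is the standard OFU argument and is immediate.

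\textbf{Second inequality (confidence width).} Write $\langle \vect{X}_t, \vect{\Theta}_t - \vect{\Theta}^* \rangle = \langle \vect{X}_t, \vect{\Theta}_t - \widehat{\vect{\Theta}}_t \rangle + \langle \vect{X}_t, \widehat{\vect{\Theta}}_t - \vect{\Theta}^* \rangle$. Since $\vect{\Theta}_t \in \mathcal{Q}_t$, Lemma~\ref{lemma_conf_width} gives $|\langle \vect{X}_t, \vect{\Theta}_t - \widehat{\vect{\Theta}}_t \rangle| \le w_t \sqrt{\beta_t}$; since $\vect{\Theta}^* \in \mathcal{Q}_t$ (the conditioning event), the same lemma gives $|\langle \vect{X}_t, \widehat{\vect{\Theta}}_t - \vect{\Theta}^* \rangle| \le w_t \sqrt{\beta_t}$. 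Adding and applying the triangle inequality yields $\langle \vect{X}_t, \vect{\Theta}_t - \vect{\Theta}^* \rangle \le 2 w_t \sqrt{\beta_t}$, completing the proof.

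\textbf{Main obstacle.} There is essentially no deep obstacle here — the lemma is a routine consequence of optimism plus two applications of the already-established Cauchy–Schwarz-type bound in Lemma~\ref{lemma_conf_width}. The only point requiring a little care is ensuring the feasibility of $(\vect{X}^*_t, \vect{\Theta}^*)$ in the OFU program, which is exactly what the conditioning hypothesis $\vect{\Theta}^* \in \mathcal{Q}_t$ supplies, and confirming that Lemma~\ref{lemma_conf_width} is stated for arbitrary $\vect{\Theta} \in \mathcal{Q}_t$ so that it applies to both $\vect{\Theta}_t$ and $\vect{\Theta}^*$.
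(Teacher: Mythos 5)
Your proposal is correct and follows essentially the same route as the paper: the optimism step uses feasibility of $(\vect{X}^*_t, \vect{\Theta}^*)$ in the OFU program, and the width bound comes from decomposing $\langle \vect{X}_t, \vect{\Theta}_t - \vect{\Theta}^* \rangle$ through $\widehat{\vect{\Theta}}_t$ and applying Lemma~\ref{lemma_conf_width} twice. Nothing to add.
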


\begin{proof}
By the choice of $(\vect{X}_t, \vect{\Theta}_t)$, we have
\begin{align*}
    \langle \vect{X}_t, \vect{\Theta}_t \rangle = \max_{(\vect{X}, \vect{\Theta}) \in \mathcal{X}_t \times \mathcal{Q}_t} \langle \vect{X}, \vect{\Theta} \rangle \geq \langle \vect{X}_t^*, \vect{\Theta}^* \rangle
\end{align*}
where the inequality uses $\vect{\Theta}^* \in \mathcal{Q}_t$. Hence,
\begin{align*}
    r_t &= \langle \vect{X}_t^*, \vect{\Theta}^* \rangle - \langle \vect{X}_t, \vect{\Theta}^* \rangle\\
    &\leq \langle \vect{X}_t, \vect{\Theta}_t - \vect{\Theta}^* \rangle\\
    &= \langle \vect{X}_t, \vect{\Theta}_t - \widehat{\vect{\Theta}}^{t} \rangle + \langle \vect{X}_t, \widehat{\vect{\Theta}}^t - \vect{\Theta}^* \rangle\\
    &\leq  2 w_t \sqrt{\beta_t}
\end{align*}

where the last step follows from Lemma $\ref{lemma_conf_width}$.
\end{proof}

Next, we show that the confidence widths do not grow too fast.

\begin{lemma}
For every t,
\begin{equation}
    \log \det \vect{A}_{t+1} = N M \log \gamma + \sum_{\tau = 1}^{t} \sum_{(u, i) \in \mathcal{A}_\tau} \log(1 + (w^{\tau}_{ui})^2) 
\end{equation}
\label{lemma_logdet_multiply}
\end{lemma}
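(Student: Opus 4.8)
The plan is to telescope $\log\det\vect{A}_t$ over the rounds, exploiting that each round performs a batch of rank-one updates to the (diagonal) matrix $\vect{A}_t$, and then collapse the sum with the matrix determinant lemma.

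First I would dispatch the base case: since $n^1_{ui}=0$ for all $(u,i)$, we have $\vect{A}_1 = \gamma \vect{I}$ (the $NM\times NM$ identity scaled by $\gamma$), hence $\log\det\vect{A}_1 = NM\log\gamma$, which is exactly the first term on the right-hand side. Then I would note that passing from round $\tau$ to round $\tau+1$ increments $n^\tau_{ui}$ by one for each $(u,i)\in\mathcal{A}_\tau$ and leaves every other count unchanged, so that $\vect{A}_{\tau+1} = \vect{A}_\tau + \sum_{(u,i)\in\mathcal{A}_\tau}\vect{e}_{ui}\vect{e}_{ui}^\textrm{T}$.

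Next I would add these rank-one terms one at a time, applying the matrix determinant lemma $\det(\vect{B}+\vect{v}\vect{v}^\textrm{T}) = \det(\vect{B})(1+\vect{v}^\textrm{T}\vect{B}^{-1}\vect{v})$ at each step. The crucial observation is that the pairs in $\mathcal{A}_\tau$ are distinct and $\vect{A}_\tau$ is diagonal, so for whichever partially updated matrix $\vect{B}$ arises, $\vect{e}_{ui}^\textrm{T}\vect{B}^{-1}\vect{e}_{ui}$ coincides with $\vect{e}_{ui}^\textrm{T}\vect{A}_\tau^{-1}\vect{e}_{ui} = (w^\tau_{ui})^2$, since the already-added terms only alter diagonal entries indexed by \emph{other} pairs of $\mathcal{A}_\tau$. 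Multiplying these factors yields $\det\vect{A}_{\tau+1} = \det\vect{A}_\tau\prod_{(u,i)\in\mathcal{A}_\tau}\bigl(1+(w^\tau_{ui})^2\bigr)$; taking logarithms and summing over $\tau=1,\dots,t$ telescopes to the claimed identity once $\log\det\vect{A}_1=NM\log\gamma$ is substituted. (Equivalently, one may bypass the determinant lemma and evaluate $\det\vect{A}_{t+1}=\prod_{(u,i)}(n^{t+1}_{ui}+\gamma)$ directly from diagonality, regrouping the product round by round and using $1+(w^\tau_{ui})^2 = (n^\tau_{ui}+\gamma+1)/(n^\tau_{ui}+\gamma)$.) There is no real obstacle here; the only point deserving a line of care is justifying the sequential rank-one updates within a single round, which is precisely where distinctness of the elements of $\mathcal{A}_\tau$ (together with diagonality of $\vect{A}_\tau$) is used.
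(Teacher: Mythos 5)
Your proof is correct and follows essentially the same route as the paper: both establish the per-round identity $\det\vect{A}_{\tau+1} = \det\vect{A}_\tau \prod_{(u,i)\in\mathcal{A}_\tau}\bigl(1+(w^{\tau}_{ui})^2\bigr)$ using the fact that the pairs in $\mathcal{A}_\tau$ index distinct diagonal entries, and then telescope from $\vect{A}_1=\gamma\vect{I}$. The only cosmetic difference is that you apply the matrix determinant lemma to the rank-one updates sequentially (or note the purely diagonal computation), whereas the paper factors out $\vect{A}_t^{1/2}$ and evaluates the batch update as a single determinant of an identity-plus-diagonal matrix; the underlying observation is identical.
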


\begin{proof} By the definition of $\vect{A}_t$, we have
\begin{align*}
    \det \vect{A}_{t+1} &= \det \left( \vect{A}_{t} + \sum_{(u, i) \in \mathcal{A}_t} \vect{e}_{ui} \vect{e}_{ui}^\textrm{T} \right)\\
    &= \det \left( \vect{A}_{t}^{1/2} \left( \vect{I} + \vect{A}_{t}^{-1/2}\left( \sum_{(u, i) \in \mathcal{A}_t} \vect{e}_{ui} \vect{e}_{ui}^\textrm{T} \right) \vect{A}_{t}^{-1/2} \right) \vect{A}_{t}^{1/2} \right) \\
    &= \det (\vect{A}_{t}) \det\left( \vect{I} + \sum_{(u, i) \in \mathcal{A}_t} \vect{A}_{t}^{-1/2}  \vect{e}_{ui} \vect{e}_{ui}^\textrm{T} \vect{A}_{t}^{-1/2}  \right)
\end{align*}

Each $\vect{A}_{t}^{-1/2}  \vect{e}_{ui} \vect{e}_{ui}^\textrm{T} \vect{A}_{t}^{-1/2}$ term has zeros everywhere except one entry on the diagonal and that non-zero entry is equal to $(w^{t}_{ui})^2$. Furthermore, the location of the non-zero entry is different in for each term. Hence,
\begin{equation*}
    \det\left( \vect{I} + \sum_{(u, i) \in \mathcal{A}_t} \vect{A}_{t}^{-1/2}  \vect{e}_{ui} \vect{e}_{ui}^\textrm{T} \vect{A}_{t}^{-1/2}  \right) = \prod_{(u, i) \in \mathcal{A}_t} (1 + (w^{t}_{ui})^2)
\end{equation*}

Therefore, we have
\begin{equation*}
    \log \det \vect{A}_{t+1} = \log\det \vect{A}_{t} + \sum_{(u, i) \in \mathcal{A}_t} \log(1 + (w^{t}_{ui})^2)
\end{equation*}
Since $\vect{A}_1 = \gamma \vect{I}$, we have $\log \det \vect{A}_1 = NM \log \gamma$ and the result follows by induction.
\end{proof}

\begin{lemma}
For all $t \in \mathds{N}$, $\log \det \vect{A}_t \leq N M \log (t + \gamma - 1)$.
\label{lemma_logdet_upper}
\end{lemma}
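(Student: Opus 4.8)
The plan is to exploit the fact that $\vect{A}_t$ is a \emph{diagonal} matrix, which makes its determinant factorize over the coordinates. By the definition of $\vect{A}_t$, its $NM$ diagonal entries are precisely the quantities $n^t_{ui} + \gamma$ as $(u,i)$ ranges over $\mathcal{N} \times \mathcal{I}$. Hence $\det \vect{A}_t = \prod_{(u,i)} (n^t_{ui} + \gamma)$, and taking logarithms gives $\log \det \vect{A}_t = \sum_{(u,i)} \log(n^t_{ui} + \gamma)$.

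The only remaining ingredient is a uniform bound on the counts. Since $n^t_{ui} = \sum_{\tau=1}^{t-1} \mathds{1}\{(u,i) \in \mathcal{A}_\tau\}$ is a sum of $t-1$ indicator terms, each in $\{0,1\}$, we have $n^t_{ui} \leq t-1$ for every pair $(u,i)$. Substituting this into each of the $NM$ summands yields $\log \det \vect{A}_t \leq NM \log(t - 1 + \gamma) = NM \log(t + \gamma - 1)$, which is the claimed bound.

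An alternative route, which does not rely on diagonality and would therefore generalize to regularized Gram matrices of the usual elliptical-potential type, is to apply AM--GM: $\det \vect{A}_t = \prod_{(u,i)}(n^t_{ui}+\gamma) \leq \bigl(\tfrac{1}{NM}\sum_{(u,i)}(n^t_{ui}+\gamma)\bigr)^{NM}$, and then bound $\sum_{(u,i)} n^t_{ui} = \sum_{\tau=1}^{t-1} |\mathcal{A}_\tau| \leq (t-1)NM$ using the trivial bound $|\mathcal{A}_\tau| \leq NM$; this gives the same conclusion. Either way the proof is a one-line computation, so there is no genuine obstacle here. The point of the lemma is purely instrumental: combined with Lemma~\ref{lemma_logdet_multiply} it controls the accumulated confidence widths $\sum_{\tau} \sum_{(u,i)\in\mathcal{A}_\tau} \log(1 + (w^\tau_{ui})^2)$, which is what ultimately drives the regret bound.
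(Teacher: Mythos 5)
Your proof is correct, and your main argument is a slightly different (and even more elementary) route than the paper's. You exploit diagonality directly: $\det \vect{A}_t = \prod_{(u,i)}(n^t_{ui}+\gamma)$ together with the per-entry bound $n^t_{ui} \le t-1$ immediately gives $\log\det\vect{A}_t \le NM\log(t+\gamma-1)$. The paper instead bounds the trace, $\tr \vect{A}_t \le NM(t+\gamma-1)$, and then invokes the fact that, for a positive definite matrix with fixed trace, the determinant is maximized when all eigenvalues are equal --- which is precisely the AM--GM argument you sketch as your ``alternative route.'' The two differ only in what they buy: your per-entry bound is the shortest path given that $\vect{A}_t$ is diagonal by construction, while the paper's trace/equal-eigenvalue argument (your alternative) does not use diagonality and is the form that generalizes to non-diagonal regularized Gram matrices of the elliptical-potential type; in this setting both yield exactly the same bound, since the paper also uses the loose estimate $\sum_{(u,i)} n^t_{ui} \le NM(t-1)$. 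Your remark about the lemma's role is also accurate: it is combined with Lemma~\ref{lemma_logdet_multiply} in Lemma~\ref{lemma_width_sum_ub} to control the accumulated confidence widths.
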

\begin{proof}
Noting that $\vect{A}_t$ is a diagonal matrix with diagonals $(n^{t}_{ui} + \gamma)$,
\begin{align*}
    \tr \vect{A}_t &= N M \gamma + \sum_{u = 1}^{N} \sum_{i = 1}^{M} n^{t}_{ui} \\
    &\leq N M \gamma + N M (t-1) \\
    &= N M (t + \gamma - 1)
\end{align*}
Now, recall that $\tr \vect{A}_t$ equals the sum of the eigenvalues of $\vect{A}_t$. On the other hand, $\det(\vect{A}_t)$ equals the product of the eigenvalues. Since $\vect{A}_t$ is positive definite, its eigenvalues are all positive. Subject to these constraints, $\det(\vect{A}_t)$ is maximized when all the eigenvalues are equal; the desired bound follows. 
\end{proof}

\begin{lemma}
Let $\gamma \geq 1$. Then, for all t, we have
\begin{equation*}
    \sum_{\tau = 1}^{t} w_{\tau}^2 \leq 2 N M \log \left(1 + \frac{t}{\gamma} \right)
\end{equation*}
\label{lemma_width_sum_ub}
\end{lemma}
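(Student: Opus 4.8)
The plan is to combine the two determinant facts established just above—Lemma~\ref{lemma_logdet_multiply}, which expands $\log\det\vect{A}_{t+1}$ as a telescoping sum of $\log(1+(w^\tau_{ui})^2)$ terms, and Lemma~\ref{lemma_logdet_upper}, which upper bounds $\log\det\vect{A}_{t+1}$ by $NM\log(t+\gamma)$—with the elementary scalar inequality $x \le 2\log(1+x)$ valid for $x \in [0,1]$. The role of the hypothesis $\gamma \ge 1$ is precisely to guarantee that each $(w^\tau_{ui})^2 = \langle \vect{A}_\tau^{-1}\vect{e}_{ui}, \vect{e}_{ui}\rangle = (n^\tau_{ui}+\gamma)^{-1} \le 1$, so that this scalar inequality applies termwise.

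First I would write, using Lemma~\ref{lemma_width_identity} to pass from $w_\tau^2$ to the per-pair widths,
\begin{equation*}
    \sum_{\tau=1}^{t} w_\tau^2 = \sum_{\tau=1}^{t} \sum_{(u,i)\in\mathcal{A}_\tau} (w^\tau_{ui})^2 .
\end{equation*}
Next, since $\gamma \ge 1$ forces $(w^\tau_{ui})^2 \le 1$, I would apply $x \le 2\log(1+x)$ to each summand to get
\begin{equation*}
    \sum_{\tau=1}^{t} w_\tau^2 \le 2 \sum_{\tau=1}^{t} \sum_{(u,i)\in\mathcal{A}_\tau} \log\bigl(1+(w^\tau_{ui})^2\bigr) = 2\bigl(\log\det\vect{A}_{t+1} - NM\log\gamma\bigr),
\end{equation*}
where the equality is exactly Lemma~\ref{lemma_logdet_multiply}. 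Finally, I would invoke Lemma~\ref{lemma_logdet_upper} (applied at index $t+1$) to bound $\log\det\vect{A}_{t+1} \le NM\log(t+\gamma)$, yielding
\begin{equation*}
    \sum_{\tau=1}^{t} w_\tau^2 \le 2NM\bigl(\log(t+\gamma) - \log\gamma\bigr) = 2NM\log\!\left(1+\frac{t}{\gamma}\right),
\end{equation*}
which is the claim.

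There is no real obstacle here—this is a standard "elliptical potential"/log-determinant telescoping argument adapted to the diagonal matrix $\vect{A}_t$. The only point requiring care is the termwise bound $(w^\tau_{ui})^2 \le 1$: it uses both that $\vect{A}_\tau$ is diagonal with entries $n^\tau_{ui}+\gamma \ge \gamma \ge 1$ and the explicit formula $(w^\tau_{ui})^2 = (n^\tau_{ui}+\gamma)^{-1}$ from the definition of $w^\tau_{ui}$. If one wanted a version without the $\gamma \ge 1$ assumption, one would instead use $x \le \frac{1}{\log(1+c^{-1})}\log(1+x)$ for $x \le c$ with $c = \gamma^{-1}$, but since the lemma only claims the result for $\gamma \ge 1$ the simpler constant $2$ suffices.
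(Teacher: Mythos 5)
Your proposal is correct and follows essentially the same route as the paper's proof: decompose $w_\tau^2$ via Lemma~\ref{lemma_width_identity}, use $(w^\tau_{ui})^2 \leq 1$ (from $\gamma \geq 1$) together with $y \leq 2\log(1+y)$, and then combine Lemmas~\ref{lemma_logdet_multiply} and~\ref{lemma_logdet_upper} to bound the resulting log-determinant. The explicit justification that $(w^\tau_{ui})^2 = (n^\tau_{ui}+\gamma)^{-1} \leq 1$ is a nice touch that the paper leaves implicit.
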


\begin{proof}
Note that $0 \leq (w^{\tau}_{ui})^2 \leq 1$, if $\gamma \geq 1$. Using the inequality $y \leq 2 \log(1 + y)$ for $0 \leq y \leq 1$, we have
\begin{align*}
    \sum_{\tau = 1}^{t} w_{\tau}^2 &= \sum_{\tau = 1}^{t} \sum_{(u, i) \in \mathcal{A}_\tau} (w^{\tau}_{ui})^2 \\
    &\leq 2 \sum_{\tau = 1}^{t} \sum_{(u, i) \in \mathcal{A}_\tau} \log (1 + (w^{\tau}_{ui})^2)\\
    &= 2 \log \det \vect{A}_{t+1} - 2 N M \log \gamma\\
    &\leq 2 N M \log \left(1 + \frac{t}{\gamma} \right)
\end{align*}
where the last two lines follow from Lemmas \ref{lemma_logdet_multiply} and \ref{lemma_logdet_upper} respectively.
\end{proof}

\begin{theorem}
If $\gamma \geq 1$ and $\vect{\Theta}^* \in \mathcal{Q}_t$ for all $t \leq T$, then the T period social welfare regret is bounded by
\begin{equation*}
    \mathcal{R}^{SW}(T, \pi) \leq \sqrt{ 8 N M \beta_T^* (\delta, \alpha, \gamma) T \log \left(1 + \frac{T}{\gamma} \right) }
\end{equation*}
\label{thm_regret}
\end{theorem}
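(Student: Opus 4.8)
The plan is to bound $\mathcal{R}^{SW}(T,\pi) = \sum_{t=1}^T r_t$ by combining the per-round regret bound of Lemma~\ref{lemma_regret_ub} with the aggregate control on confidence widths from Lemma~\ref{lemma_width_sum_ub}, tied together by Cauchy--Schwarz. First I would invoke Lemma~\ref{lemma_regret_ub}, which under the event $\vect{\Theta}^* \in \mathcal{Q}_t$ (assumed for all $t \leq T$) gives $r_t \leq 2 w_t \sqrt{\beta_t}$. Since $\beta_t^*(\delta,\alpha,\gamma)$ is nondecreasing in $t$, we can replace $\beta_t$ by $\beta_T^*(\delta,\alpha,\gamma)$ uniformly, so that $r_t \leq 2\sqrt{\beta_T^*(\delta,\alpha,\gamma)}\, w_t$ for every $t \leq T$.

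Next I would apply Cauchy--Schwarz to the sum $\sum_{t=1}^T r_t \leq 2\sqrt{\beta_T^*(\delta,\alpha,\gamma)} \sum_{t=1}^T w_t$, writing
\begin{equation*}
\sum_{t=1}^T w_t \leq \sqrt{T \sum_{t=1}^T w_t^2}.
\end{equation*}
Then Lemma~\ref{lemma_width_sum_ub} (valid since $\gamma \geq 1$) bounds $\sum_{t=1}^T w_t^2 \leq 2NM \log(1 + T/\gamma)$. Substituting this in yields
\begin{equation*}
\mathcal{R}^{SW}(T,\pi) \leq 2\sqrt{\beta_T^*(\delta,\alpha,\gamma)}\sqrt{T \cdot 2NM\log(1+T/\gamma)} = \sqrt{8NM\,\beta_T^*(\delta,\alpha,\gamma)\,T\log(1+T/\gamma)},
\end{equation*}
which is exactly the claimed bound.

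There is essentially no hard step here once Lemmas~\ref{lemma_regret_ub} and~\ref{lemma_width_sum_ub} are in hand — the main content of the argument has already been front-loaded into those lemmas (the confidence-set containment, the self-normalized martingale bound behind $\beta_t^*$, and the log-determinant potential argument controlling $\sum_t w_t^2$). The only minor point to be careful about is the monotonicity of $t \mapsto \beta_t^*(\delta,\alpha,\gamma)$, which is immediate from its explicit form since each summand is nondecreasing in $t$; this justifies pulling $\sqrt{\beta_T^*}$ out of the sum. I would also note explicitly that this statement is conditional on the high-probability event $\{\vect{\Theta}^* \in \mathcal{Q}_t \ \forall t \leq T\}$, whose probability is at least $1 - 2\delta$ by Lemma~\ref{lemma_confidence}; the unconditional high-probability regret bound then follows by combining the two.
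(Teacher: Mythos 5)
Your proposal is correct and matches the paper's argument: the paper likewise combines Lemma~\ref{lemma_regret_ub} ($r_t \leq 2 w_t \sqrt{\beta_t}$), Cauchy--Schwarz, and Lemma~\ref{lemma_width_sum_ub}, the only cosmetic difference being that the paper applies Cauchy--Schwarz to $\sum_t r_t$ before bounding $r_t^2$ while you bound $r_t$ first and apply it to $\sum_t w_t$. Your explicit remarks on the monotonicity of $\beta_t^*$ and on conditioning on the event $\{\vect{\Theta}^* \in \mathcal{Q}_t,\ \forall t \leq T\}$ are consistent with how the paper uses these facts.
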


\begin{proof}
Recall the definition of instantaneous regret $r_t = \langle \vect{X}_t^*, \vect{\Theta}^* \rangle - \langle \vect{X}_t, \vect{\Theta}^* \rangle$. Assuming that $\vect{\Theta}^* \in \mathcal{Q}_t$ for all $t \leq T$, we have
\begin{align*}
    \mathcal{R}^{SW}(T, \pi) &= \sum_{t = 1}^{T} r_t \\
    &\leq \left( T \sum_{t = 1}^{T} r_t^2 \right)^{1/2}\\
    &\leq \left( 4 \beta_T T \sum_{t = 1}^{T} w_t^2 \right)^{1/2}
\end{align*}
where the last step follows from Lemma \ref{lemma_regret_ub}. Then, using Lemma \ref{lemma_width_sum_ub}, we prove the result.
\end{proof}

In order to prove our instability bounds, we start with the following lemma.

\begin{lemma}
    If $\vect{\Theta}^* \in \mathcal{Q}_t$, we have
    \begin{equation*}
        \sum_{u=1}^{N} \left\{ \max_{\vect{x}_u \in \mathcal{X}_u^t} \sum_{i=1}^{M} x_{ui} (\Theta^*_{ui} - \bar{p}_i^t) \right\} \leq \sum_{u=1}^{N} \sum_{i=1}^{M} x_{ui}^t (\Theta^t_{ui} - \bar{p}_i^t)
    \end{equation*}
    where $\mathcal{X}_u^t = \{ x \in \mathds{R}^{M}_{+} : \mathds{1}^\textrm{T}x \leq d_u^t\}$.
\label{lemma_ofu_instability_bound}
\end{lemma}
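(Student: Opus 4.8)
The plan is to recast both sides in terms of the dual function $g$ of \eqref{eq:dual_function} and then close the resulting inequality using the optimism built into $\vect{\Theta}_t$. Start with the right-hand side. Since the relaxed allocation LP has integrality gap one (Appendix~\ref{appendix_num}), the OFU allocation $\vect{X}_t$ is an optimal primal solution of $\max\{\langle\vect{X},\vect{\Theta}_t\rangle : \vect{X}\in[0,1]^{N\times M},\,\vect{X}\mathds{1}\le\vect{d}_t,\,\vect{X}^{\mathrm{T}}\mathds{1}\le\vect{c}_t\}$ and $\bar{\vect{p}}_t$ is an optimal dual variable for the capacity constraints. Strong duality and complementary slackness then give $\langle\vect{X}_t,\vect{\Theta}_t\rangle=g(\bar{\vect{p}}_t;\vect{\Theta}_t)$ and $\bar{\vect{p}}_t^{\mathrm{T}}(\vect{c}_t-\vect{X}_t^{\mathrm{T}}\mathds{1})=0$, and because the maximization defining $g$ decouples across users, each row $\vect{x}^t_u$ of $\vect{X}_t$ maximizes $\langle\vect{x}_u,\vect{\theta}^t_u-\bar{\vect{p}}_t\rangle$ over $\mathcal{X}_u^t$, where $\vect{\theta}^t_u:=[\vect{\Theta}_t]_{u,:}$. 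Hence $\sum_u\sum_i x^t_{ui}(\Theta^t_{ui}-\bar p^t_i)=g(\bar{\vect{p}}_t;\vect{\Theta}_t)-\bar{\vect{p}}_t^{\mathrm{T}}\vect{c}_t$ (the $[0,1]$ box in \eqref{eq:dual_function} versus the $\mathds{R}_+$ in $\mathcal{X}_u^t$ being immaterial here).

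Next I would do the same bookkeeping for the left-hand side: the per-user surplus maximizations there are mutually decoupled --- only the demand constraint couples a single user's coordinates --- so $\sum_u\max_{\vect{x}_u\in\mathcal{X}_u^t}\langle\vect{x}_u,\vect{\theta}^*_u-\bar{\vect{p}}_t\rangle=\max\{\langle\vect{X},\vect{\Theta}^*-\mathds{1}\bar{\vect{p}}_t^{\mathrm{T}}\rangle : \vect{X}\ge0,\,\vect{X}\mathds{1}\le\vect{d}_t\}$, and adding and subtracting $\bar{\vect{p}}_t^{\mathrm{T}}\vect{c}_t$ identifies this with $g(\bar{\vect{p}}_t;\vect{\Theta}^*)-\bar{\vect{p}}_t^{\mathrm{T}}\vect{c}_t$. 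After these two reformulations the claim is exactly
\begin{equation*}
g(\bar{\vect{p}}_t;\vect{\Theta}^*)\ \le\ g(\bar{\vect{p}}_t;\vect{\Theta}_t)\ =\ \langle\vect{X}_t,\vect{\Theta}_t\rangle .
\end{equation*}
To prove this last inequality I would take $\vect{X}'$ attaining $g(\bar{\vect{p}}_t;\vect{\Theta}^*)$ --- it is demand-feasible but may over-subscribe some items --- and pass to $\vect{X}''\in\mathcal{X}_t$ by deleting, for every over-subscribed item, the surplus-producing assignments in excess of that item's capacity. The point is that the value $\mathcal{L}(\cdot,\bar{\vect{p}}_t;\vect{\Theta}^*)$ loses in this trimming is no larger than the capacity-penalty term it releases, so $\mathcal{L}(\vect{X}',\bar{\vect{p}}_t;\vect{\Theta}^*)\le\langle\vect{X}'',\vect{\Theta}^*\rangle$; since $\vect{X}''\in\mathcal{X}_t$ and $\vect{\Theta}^*\in\mathcal{Q}_t$, optimality of the OFU pair $(\vect{X}_t,\vect{\Theta}_t)$ over $\mathcal{X}_t\times\mathcal{Q}_t$ gives $\langle\vect{X}'',\vect{\Theta}^*\rangle\le\max_{\vect{X}\in\mathcal{X}_t}\langle\vect{X},\vect{\Theta}^*\rangle\le\langle\vect{X}_t,\vect{\Theta}_t\rangle$, closing the chain.

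The two reformulations are routine manipulations with LP duality and complementary slackness; the substantive step is the trimming argument. Its delicacy --- and what I expect to be the main obstacle --- is that $\bar{\vect{p}}_t$ is the equilibrium price vector of $\vect{\Theta}_t$ rather than of $\vect{\Theta}^*$, so one must use that optimism propagates to the dual variables, namely that $\bar{\vect{p}}_t$ already charges enough for every item that is over-demanded under the true rewards $\vect{\Theta}^*$, in order to guarantee that the released capacity penalty dominates the trimmed surplus. Once that is in place, the optimism inequality $\langle\vect{X}'',\vect{\Theta}^*\rangle\le\langle\vect{X}_t,\vect{\Theta}_t\rangle$ --- and hence the statement --- follows at once.
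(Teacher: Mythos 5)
Your two reformulations are fine, and they reduce the statement to exactly the inequality the paper's proof also hinges on, namely $g(\bar{\vect{p}}_t;\vect{\Theta}^*)\le g(\bar{\vect{p}}_t;\vect{\Theta}_t)=\langle\vect{X}_t,\vect{\Theta}_t\rangle$. The gap is the trimming argument, and it is not a deferrable technicality: as stated it goes the wrong way. A maximizer $\vect{X}'$ of $\mathcal{L}(\cdot,\bar{\vect{p}}_t;\vect{\Theta}^*)$ can be taken to assign only pairs with $\Theta^*_{ui}\ge\bar{p}^t_i$, so every assignment you delete from an over-subscribed item removes at least $\bar{p}^t_i$ of reward while releasing exactly $\bar{p}^t_i$ of penalty; hence $\mathcal{L}(\vect{X}',\bar{\vect{p}}_t;\vect{\Theta}^*)\ge\langle\vect{X}'',\vect{\Theta}^*\rangle$, the reverse of what you need (and under-demanded items with $\bar{p}^t_i>0$ add a further positive slack term $\bar{p}^t_i(c^t_i-\sum_u x'_{ui})$ to $\mathcal{L}(\vect{X}')$ with no counterpart in $\langle\vect{X}'',\vect{\Theta}^*\rangle$). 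The ingredient you flag as the "main obstacle" --- that the dual of the optimistic problem already prices every item over-demanded under $\vect{\Theta}^*$ --- is the entire content of the lemma, and it cannot be recovered from the hypotheses your chain uses (value optimism, dual optimality of $\bar{\vect{p}}_t$ for $\vect{\Theta}_t$, complementary slackness). Concretely: one item of unit capacity, two unit-demand users, $\vect{\Theta}_t=(1,0)^{\mathrm{T}}$, $\vect{\Theta}^*=(0.6,0.6)^{\mathrm{T}}\in\mathcal{Q}_t$; then $\bar{p}_t=0$ is a legitimate minimizer of $g(\cdot;\vect{\Theta}_t)$ and complementary slackness holds, yet $g(\bar{p}_t;\vect{\Theta}^*)=1.2>1=\langle\vect{X}_t,\vect{\Theta}_t\rangle$, so no repair-and-compare argument based only on $\langle\vect{X}'',\vect{\Theta}^*\rangle\le\langle\vect{X}_t,\vect{\Theta}_t\rangle$ can close the chain.

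The paper closes this step at the level of the joint problem rather than by repairing a best response under $\vect{\Theta}^*$: it writes $\langle\vect{X}_t,\vect{\Theta}_t\rangle=\max_{\vect{\Theta}\in\mathcal{Q}_t}\max_{\vect{X}:\vect{X}\mathds{1}\le\vect{d}_t}\min_{\vect{\lambda}\ge0}\mathcal{L}(\vect{X},\vect{\lambda};\vect{\Theta})$ and uses that $(\vect{\Theta}_t,\vect{X}_t,\bar{\vect{p}}_t)$ is a primal--dual optimum of this joint problem, so that fixing $\vect{\lambda}=\bar{\vect{p}}_t$ and maximizing over both $\vect{\Theta}\in\mathcal{Q}_t$ and $\vect{X}$ leaves the value unchanged; plugging in $\vect{\Theta}=\vect{\Theta}^*$ and invoking complementary slackness then yields the claim. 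In other words, optimism must be exploited jointly with the dualization --- $\bar{\vect{p}}_t$ is treated as (in effect) a minimizer of $\max_{\vect{\Theta}\in\mathcal{Q}_t}g(\cdot;\vect{\Theta})$, not merely of $g(\cdot;\vect{\Theta}_t)$, which is precisely what rules out the degenerate price selection in the example above and is the step your sketch leaves unproven.
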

\begin{proof}
    From the definition of $\vect{\Theta}_t$ and $\vect{X}_t$, we have
    \begin{align*}
         \sum_{u=1}^{N} \sum_{i=1}^{M} x_{ui}^t \Theta^t_{ui} &= \langle \vect{X}_t, \vect{\Theta}_t \rangle \\
        &= \max_{(\vect{X}, \vect{\Theta}) \in \mathcal{X}_t \times \mathcal{Q}_t} \langle \vect{X}, \vect{\Theta} \rangle \\
        &= \max_{\vect{\Theta} \in \mathcal{Q}_t}  \max_{\vect{X} \in \mathcal{X}_t} \langle \vect{X}, \vect{\Theta} \rangle \\
        &= \begin{aligned}
        \max_{\vect{\Theta} \in \mathcal{Q}_t} \max_{\vect{X} \in \mathds{R}^{N \times M}_{+}} & \min_{\vect{\lambda} \in \mathds{R}^{M}_{+}} \langle \vect{X}, \vect{\Theta} \rangle + \vect{\lambda}^\textrm{T} (\vect{c}_t - \vect{X}^\textrm{T} \mathds{1}) \\
        \text{s.t.} &\; \vect{X} \mathds{1} \leq \vect{d}^t\\
        \end{aligned}\\
        &= \begin{aligned}
        \max_{\vect{\Theta} \in \mathcal{Q}_t} \max_{\vect{X} \in \mathds{R}^{N \times M}_{+}} & \min_{\vect{\lambda} \in \mathds{R}^{M}_{+}} \langle \vect{X}, \vect{\Theta} -  \mathds{1} \vect{\lambda}^\textrm{T} \rangle + \vect{\lambda}^\textrm{T} \vect{c}_t \\
        \text{s.t.} &\; \vect{X} \mathds{1} \leq \vect{d}^t\\
        \end{aligned}\\
        &= \begin{aligned}
        \max_{\vect{\Theta} \in \mathcal{Q}_t} \max_{\vect{X} \in \mathds{R}^{N \times M}_{+}} & \min_{\vect{\lambda} \in \mathds{R}^{M}_{+}} \sum_{u=1}^{N}  \sum_{i=1}^{M} x_{ui} (\Theta_{ui} - \lambda_i) + \sum_{i=1}^{M} \lambda_i c^t_i \\
        \text{s.t.} &\; \vect{X} \mathds{1} \leq \vect{d}^t\\
        \end{aligned}\\
    \end{align*}

    Note that the optimum value of this optimization problem is achieved when $\vect{\lambda} = \bar{\vect{p}}_t$, $\vect{\Theta} = \vect{\Theta}_t$ and $\vect{X} = \vect{X}_t$. Therefore, if we fix $\vect{\lambda} = \bar{\vect{p}}_t$ and optimize over $\vect{\Theta}$ and $\vect{X}$, the optimum value of the objective does not change. Consequently, we can write
    
    \begin{align*}
        \sum_{u=1}^{N} \sum_{i=1}^{M} x_{ui}^t \Theta^t_{ui} &= \begin{aligned}
        \max_{\vect{\Theta} \in \mathcal{Q}_t} \max_{\vect{X} \in \mathds{R}^{N \times M}_{+}} & \sum_{u=1}^{N}  \sum_{i=1}^{M} x_{ui} (\Theta_{ui} - \bar{p}_i^t) + \sum_{i=1}^{M} \bar{p}_i^t c^t_i \\
        \text{s.t.} &\; \vect{X} \mathds{1} \leq \vect{d}^t\\
        \end{aligned}\\
        &= \max_{\vect{\Theta} \in \mathcal{Q}_t} \sum_{u=1}^{N} \left\{ \max_{\vect{x}_u \in \mathcal{X}_u^t} \sum_{i=1}^{M} x_{ui} (\Theta_{ui} - \bar{p}_i^t) \right \} + \sum_{i=1}^{M} \bar{p}_i^t c^t_i\\
        &\geq \sum_{u=1}^{N} \left\{ \max_{\vect{x}_u \in \mathcal{X}_u^t} \sum_{i=1}^{M} x_{ui} (\Theta^*_{ui} - \bar{p}_i^t) \right \} + \sum_{i=1}^{M} \bar{p}_i^t c^t_i
    \end{align*}
    where the last step uses the condition that $\vect{\Theta}^* \in \mathcal{Q}_t$. Noting that $\left (c^t_i - \sum_{u=1}^{N} x_{ui}^t \right) \bar{p}_i^t = 0$ by complementary slackness, we can write
    \begin{align*}
        \sum_{u=1}^{N} \left\{ \max_{\vect{x}_u \in \mathcal{X}_u^t} \sum_{i=1}^{M} x_{ui} (\Theta^*_{ui} - \bar{p}_i^t) \right \} &\leq \sum_{u=1}^{N} \sum_{i=1}^{M} x_{ui}^t \Theta^t_{ui} - \sum_{i=1}^{M} \bar{p}_i^t c^t_i\\
        &= \sum_{u=1}^{N} \sum_{i=1}^{M} x_{ui}^t (\Theta^t_{ui} - \bar{p}_i^t)
    \end{align*}

\end{proof}

\begin{lemma}
In the model without acceptance/rejection options, if the offered prices are $\vect{p}_t = \bar{\vect{p}}_t$, and $\vect{\Theta}^* \in \mathcal{Q}_t$ for all $t \leq T$, then
\begin{equation*}
    \sum_{u = 1}^{N} \mathcal{S}^{t}_{u} \leq \langle \vect{X}_t, \vect{\Theta}_{t} - \vect{\Theta}^{*} \rangle
\end{equation*}

\label{instability_to_r}
\end{lemma}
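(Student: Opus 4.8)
The plan is to reduce the statement to Lemma~\ref{lemma_ofu_instability_bound} by unfolding the definition of per-user instability. Since the prices offered are $\vect{p}_t = \bar{\vect{p}}_t$ and users have no accept/reject option, for each $u$ we have $\mathcal{S}^t_u = \psi^*_{t,u}(\bar{\vect{p}}_t) - \psi_u(\vect{x}^t_u, \bar{\vect{p}}_t)$ with $\psi_u(\vect{x},\vect{p}) = \langle \vect{x}, \vect{\theta}^*_u - \vect{p}\rangle$ and $\psi^*_{t,u}(\vect{p}) = \max_{\vect{x}\in\mathcal{X}^t_u}\psi_u(\vect{x},\vect{p})$. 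First I would observe that relaxing the per-user constraint set from $\{0,1\}^M$ to $\mathds{R}^M_+$ (keeping $\mathds{1}^\mathrm{T}\vect{x}\le d^t_u$) can only increase the maximizing surplus — and in fact for a single user with unit item costs the linear program optimum is attained at an integer point — so it suffices to upper bound $\sum_{u=1}^N \mathcal{S}^t_u$ with $\mathcal{X}^t_u$ taken as the relaxed set, which is precisely the quantity appearing on the left-hand side of Lemma~\ref{lemma_ofu_instability_bound}.

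Next, since $\vect{\Theta}^*\in\mathcal{Q}_t$, Lemma~\ref{lemma_ofu_instability_bound} gives
\[
\sum_{u=1}^N \Big\{\max_{\vect{x}_u\in\mathcal{X}^t_u}\sum_{i=1}^M x_{ui}(\Theta^*_{ui}-\bar p^t_i)\Big\} \;\le\; \sum_{u=1}^N\sum_{i=1}^M x^t_{ui}(\Theta^t_{ui}-\bar p^t_i),
\]
while by definition $\sum_{u=1}^N \psi_u(\vect{x}^t_u,\bar{\vect{p}}_t) = \sum_{u=1}^N\sum_{i=1}^M x^t_{ui}(\Theta^*_{ui}-\bar p^t_i)$. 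Subtracting this identity from the previous bound, the price terms $x^t_{ui}\bar p^t_i$ cancel term by term, leaving
\[
\sum_{u=1}^N \mathcal{S}^t_u \;\le\; \sum_{u=1}^N\sum_{i=1}^M x^t_{ui}(\Theta^t_{ui}-\Theta^*_{ui}) \;=\; \langle\vect{X}_t,\vect{\Theta}_t-\vect{\Theta}^*\rangle,
\]
which is the claim.

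There is essentially no obstacle at this step: all of the real work is already packaged in Lemma~\ref{lemma_ofu_instability_bound}, whose proof combines the OFU optimality of $(\vect{X}_t,\vect{\Theta}_t)$ over $\mathcal{X}_t\times\mathcal{Q}_t$, strong LP duality for the capacity-constrained allocation problem, and complementary slackness on the capacity constraints (so that $\sum_i\bar p^t_i c^t_i = \sum_i\bar p^t_i\sum_u x^t_{ui}$). The only points to watch here are that the max-surplus term enters $\mathcal{S}^t_u$ with a $+$ sign, so it is exactly the \emph{upper} bound of Lemma~\ref{lemma_ofu_instability_bound} that feeds in the right direction, and that the same allocation $\vect{X}_t$ appears both in that bound and in $\psi_u(\vect{x}^t_u,\bar{\vect{p}}_t)$, which is what makes the price contributions cancel exactly.
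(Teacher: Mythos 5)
Your proof is correct and follows essentially the same route as the paper's: unfold the instability definition at the offered prices $\bar{\vect{p}}_t$, apply Lemma \ref{lemma_ofu_instability_bound} to upper-bound the aggregate maximum surplus by $\sum_{u}\sum_{i} x^t_{ui}(\Theta^t_{ui}-\bar p^t_i)$, and cancel the price terms against the allocated surplus to leave $\langle\vect{X}_t,\vect{\Theta}_t-\vect{\Theta}^*\rangle$. Your explicit remark that the relaxed set $\{\vect{x}\in\mathds{R}^M_+:\mathds{1}^\mathrm{T}\vect{x}\le d^t_u\}$ in Lemma \ref{lemma_ofu_instability_bound} only enlarges the feasible region relative to the binary set in Definition \ref{def:user_instability}, so the inequality feeds in the right direction, is a small point the paper glosses over but you handle correctly.
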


\begin{proof}
Using the definition of the instability and Lemma \ref{lemma_ofu_instability_bound}, we have
\begin{align*}
    \sum_{u = 1}^{N} \mathcal{S}^{t}_{u} &= \sum_{u = 1}^{N} \left \{ \max_{\vect{x} \in \mathcal{X}_u^t} \mathds{E} \left[ \sum_{i=1}^{M} x_{ui} (R^t_{ui} - p_i^t) \right] - \mathds{E} \left[\sum_{i=1}^{M} x_{ui}^t (R^t_{ui} - p_i^t) \right ] \right\}\\
    &\leq \sum_{u = 1}^{N} \left \{ \max_{\vect{x} \in \mathcal{X}_u^t} \sum_{i=1}^{M} x_{ui} (\Theta^*_{ui} - p_i^t) - \sum_{i=1}^{M} x_{ui}^t (\Theta^*_{ui} - p_i^t) \right \} \\
    &\leq \sum_{u = 1}^{N} \left \{ \sum_{i=1}^{M} x_{ui}^t (\Theta^t_{ui} - p_i^t) - \sum_{i=1}^{M} x_{ui}^t (\Theta^*_{ui} - p_i^t) \right \} \\
    &\leq \sum_{u = 1}^{N} \sum_{i=1}^{M} x_{ui}^t (\Theta^t_{ui} - \Theta^*_{ui}) \\
    &= \langle \vect{X}_t, \vect{\Theta}_{t} - \vect{\Theta}^{*} \rangle \\
\end{align*} 
\end{proof}

\begin{theorem}
In the model without acceptance/rejection options, if the offered prices are $\vect{p}_t = \bar{\vect{p}}_t$, and $\vect{\Theta}^* \in \mathcal{Q}_t$ for all $t \leq T$, then the T period instability is bounded as
    \begin{equation}
        \mathcal{R}^{I}(T, \pi)  \leq \sqrt{ 8 NM \beta_T^*(\delta, \alpha, \gamma) T \log \left(1 + \frac{T}{\gamma} \right) }
    \end{equation}
\label{thm_instability}
\end{theorem}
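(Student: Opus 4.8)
The plan is to mirror, almost verbatim, the chain of inequalities used to prove the social-welfare regret bound in Theorem~\ref{thm_regret}, but now feeding in the instability-to-regret comparison of Lemma~\ref{instability_to_r} in place of the direct decomposition of $r_t$. All the needed ingredients are already in hand, so the argument is short.

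First I would expand the definition $\mathcal{R}^{I}(T,\pi) = \sum_{t=1}^{T} \sum_{u \in \mathcal{N}} \mathcal{S}^{t}_{u}(\vect{x}^t_u, \vect{p}_t)$ and apply Lemma~\ref{instability_to_r} round by round: since $\vect{p}_t = \bar{\vect{p}}_t$ and $\vect{\Theta}^* \in \mathcal{Q}_t$ for every $t \leq T$ by hypothesis, we get $\sum_{u \in \mathcal{N}} \mathcal{S}^{t}_{u} \leq \langle \vect{X}_t, \vect{\Theta}_t - \vect{\Theta}^* \rangle$, hence $\mathcal{R}^{I}(T,\pi) \leq \sum_{t=1}^{T} \langle \vect{X}_t, \vect{\Theta}_t - \vect{\Theta}^* \rangle$. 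Next I would bound each summand by $2 w_t \sqrt{\beta_t}$ using Lemma~\ref{lemma_regret_ub}, which is exactly the per-round confidence-width bound that also controls $r_t$.

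Then I would apply the Cauchy--Schwarz inequality to obtain $\sum_{t=1}^{T} 2 w_t \sqrt{\beta_t} \leq \bigl( 4 T \sum_{t=1}^{T} \beta_t\, w_t^2 \bigr)^{1/2}$, use the monotonicity of $t \mapsto \beta_t^*(\delta,\alpha,\gamma)$ to replace every $\beta_t$ by $\beta_T = \beta_T^*(\delta,\alpha,\gamma)$, and finally invoke Lemma~\ref{lemma_width_sum_ub} (valid since $\gamma \geq 1$) to bound $\sum_{t=1}^{T} w_t^2 \leq 2NM \log(1 + T/\gamma)$. Chaining these gives $\mathcal{R}^{I}(T,\pi) \leq \sqrt{8 NM\, \beta_T^*(\delta,\alpha,\gamma)\, T \log(1 + T/\gamma)}$, which is the claim.

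Since each ingredient (Lemmas~\ref{instability_to_r}, \ref{lemma_regret_ub}, \ref{lemma_width_sum_ub}) is already established, there is no genuine obstacle in this theorem itself; the only point worth double-checking is that the pricing hypothesis $\vect{p}_t = \bar{\vect{p}}_t$ is exactly what makes the per-round chain $\sum_{u} \mathcal{S}^{t}_{u} \leq \langle \vect{X}_t, \vect{\Theta}_t - \vect{\Theta}^* \rangle \leq 2 w_t\sqrt{\beta_t}$ applicable. The delicate work — showing via LP duality and complementary slackness that the optimistic dual prices $\bar{\vect{p}}_t$ keep the \emph{true-parameter} surplus loss bounded by the optimistic gap (Lemma~\ref{lemma_ofu_instability_bound}, and then Lemma~\ref{instability_to_r}) — has already been carried out upstream, so here it suffices to assemble those pieces.
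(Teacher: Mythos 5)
Your proposal is correct and follows essentially the same route as the paper's proof: Lemma~\ref{instability_to_r} to pass from instability to the optimistic gap $\langle \vect{X}_t, \vect{\Theta}_t - \vect{\Theta}^*\rangle$, Lemma~\ref{lemma_regret_ub} for the per-round bound $2w_t\sqrt{\beta_t}$, then Cauchy--Schwarz together with Lemma~\ref{lemma_width_sum_ub}. The only (immaterial) difference is that you apply Cauchy--Schwarz before replacing $\beta_t$ by $\beta_T$, whereas the paper bounds $\beta_t \leq \beta_T$ first.
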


\begin{proof}
Using Lemma \ref{instability_to_r} and \ref{lemma_regret_ub},
\begin{align*}
     \mathcal{R}^{I}(T, \pi) &\leq \sum_{t=1}^{T} \langle \vect{X}_t, \vect{\Theta}_{t} - \vect{\Theta}^{*} \rangle \\
    &\leq 2 \sqrt{\beta_T} \sum_{t=1}^{T} w_t \\
    &\leq 2 \sqrt{\beta_T T \sum_{t=1}^{T} w_t^2 } \\
    &\leq \sqrt{ 8 \beta_T NM T \log \left(1 + \frac{T}{\gamma} \right) }
\end{align*}

\end{proof}

\begin{corollary}
In the model without acceptance/rejection options, letting $\delta = \mathcal{O} ((NMT)^{-1})$, $\alpha = \mathcal{O} ((NMT)^{-1})$ and $\gamma = 1$, Algorithm \ref{alg_low} achieves regret that satisfy
\begin{align}
    \mathcal{R}^{SW}(T, \pi) &= \widetilde{\mathcal{O}} \left( \sqrt{N M (N+M) R T} \right) \\
    \mathcal{R}^{I}(T, \pi) &= \widetilde{\mathcal{O}} \left( \sqrt{N M (N+M) R T} \right)
\end{align}
\label{corr_regret_order}
\end{corollary}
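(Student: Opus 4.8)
The plan is to specialize Theorems~\ref{thm_regret} and~\ref{thm_instability} to the parameter choices in the statement, using Lemma~\ref{lemma_confidence} to secure the confidence-set inclusion and the low-rank covering estimate $\log\mathcal{N}(\mathcal{L},\alpha,\|\cdot\|_{\mathrm{F}})\leq(N+M+1)R\log(9\sqrt{NM}/\alpha)$ to control $\beta_T^*$. First I would set $\mathcal{E}:=\{\vect{\Theta}^*\in\mathcal{Q}_t\text{ for all }t\leq T\}$; since $\delta=\mathcal{O}((NMT)^{-1})$, Lemma~\ref{lemma_confidence} gives $\mathds{P}(\mathcal{E})\geq 1-2\delta=1-\mathcal{O}((NMT)^{-1})$. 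On $\mathcal{E}$, with $\gamma=1\geq 1$ and prices $\vect{p}_t=\bar{\vect{p}}_t$ (the no-accept/reject regime, $\nu=0$), Theorem~\ref{thm_regret} gives $\mathcal{R}^{SW}(T,\pi)\leq\sqrt{8NM\,\beta_T^*(\delta,\alpha,1)\,T\log(1+T)}$ and Theorem~\ref{thm_instability} gives the same bound for $\mathcal{R}^{I}(T,\pi)$; it then remains only to show this common bound is $\widetilde{\mathcal{O}}(\sqrt{NM(N+M)RT})$.

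Next I would bound $\beta_T^*(\delta,\alpha,1)$ term by term. With $\alpha=\Theta((NMT)^{-1})$ we have $\log(9\sqrt{NM}/\alpha)=\mathcal{O}(\log(NMT))$, so $\log\mathcal{N}(\mathcal{L},\alpha,\|\cdot\|_{\mathrm{F}})=\widetilde{\mathcal{O}}((N+M)R)$, and since $\log(1/\delta)=\mathcal{O}(\log(NMT))$, the leading term $8\eta^2\log(\mathcal{N}(\mathcal{L},\alpha,\|\cdot\|_{\mathrm{F}})/\delta)$ is $\widetilde{\mathcal{O}}((N+M)R)$. The discretization term $2\alpha T\sqrt{NM}\bigl(8+\sqrt{8\eta^2\log(4NMT^2/\delta)}\bigr)$ is, for $\alpha=\Theta((NMT)^{-1})$, of order $\widetilde{\mathcal{O}}(1/\sqrt{NM})=o(1)$ and hence negligible. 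The term $4\gamma G^2=4G^2$ I would treat as lower order, taking $\eta$ and the warm-start radius $G$ to be dimension-free constants (or, more permissively, assuming a warm start with $G^2=\widetilde{\mathcal{O}}((N+M)R)$). Collecting these, $\beta_T^*(\delta,\alpha,1)=\widetilde{\mathcal{O}}((N+M)R)$, and hence $\lambda_T:=8\beta_T^*(\delta,\alpha,1)\log(1+T)=\widetilde{\mathcal{O}}((N+M)R)$.

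Substituting back, on $\mathcal{E}$ we obtain $\mathcal{R}^{SW}(T,\pi)\leq\sqrt{\lambda_T\,NMT}=\widetilde{\mathcal{O}}\bigl(\sqrt{(N+M)R\cdot NMT}\bigr)=\widetilde{\mathcal{O}}(\sqrt{NM(N+M)RT})$, and identically for $\mathcal{R}^{I}(T,\pi)$, which is the claim. If an unconditional (expectation) version is wanted, I would also note that on $\mathcal{E}^c$ both quantities are deterministically at most $\mathcal{O}(NMT)$ (rewards lie in $[0,1]$, allocations are feasible, and the dual prices $\bar{\vect{p}}_t$ are $\mathcal{O}(1)$), so $\mathcal{E}^c$ contributes at most $\mathcal{O}(\delta\cdot NMT)=\mathcal{O}(1)$, which is absorbed into $\widetilde{\mathcal{O}}(\cdot)$.

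I do not expect a substantive obstacle here, since the corollary is a direct specialization; the work is entirely bookkeeping. The points needing care are: (i) using the covering number of the low-rank class $\mathcal{L}$ rather than the contextual class $\mathcal{F}$ inside $\beta_T^*$, which is exactly what turns the $NR$ of the contextual case into the $(N+M)R$ here; (ii) handling the failure event $\mathcal{E}^c$ if one wants more than a high-probability statement; and (iii) making explicit that $\eta$ and $G$ carry no hidden dimension dependence, without which the stated order would not hold.
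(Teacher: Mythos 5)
Your proposal is correct and follows essentially the same route as the paper: invoke Lemma~\ref{lemma_confidence} for the high-probability inclusion $\vect{\Theta}^*\in\mathcal{Q}_t$, apply Theorems~\ref{thm_regret} and~\ref{thm_instability} with $\gamma=1$, bound $\beta_T^*$ via the low-rank covering estimate of Lemma~\ref{lemma_covering} with $\alpha,\delta$ polynomially small, and absorb the failure event's $\mathcal{O}(\delta\, NMT)$ contribution. Your term-by-term accounting of $\beta_T^*$ (in particular flagging that $\eta$ and $G$ must be treated as dimension-free for the stated order) is actually slightly more explicit than the paper's own proof, which simply writes $\beta_T^*=\widetilde{\mathcal{O}}(\eta^2\log\mathcal{N}(\mathcal{L},\cdot,\cdot))$.
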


\begin{proof}
By Lemma \ref{lemma_confidence}, $\vect{\Theta}^* \in \mathcal{Q}_t$ for all $t \leq T$ with probability at least $1 - 2 \delta$. Therefore, the bounds of Theorem \ref{thm_regret} (or Theorem \ref{thm_instability} for instability) hold true with probability at least $1 - 2 \delta$. Then, both for social welfare regret and instability regret,
\begin{equation*}
    \mathcal{R}(T, \pi) \leq (1 - \delta) \sqrt{ 8 d \beta_T^* (\delta, \alpha, \gamma) T \log \left(1 + \frac{T}{\gamma} \right) } + 2 \delta BdT
\end{equation*}

Letting $\delta = \mathcal{O} (T^{-1})$, $\alpha = \mathcal{O} = (T^{-1})$ and $\gamma = 1$,
\begin{equation*}
    \mathcal{R}(T, \pi) = \widetilde{\mathcal{O}} \left( \sqrt{NM \beta_T^* (T, T^{-1}, 1) T} \right)
\end{equation*}

Then, we note that $\beta_T^* (T, T^{-1}, 1) = \widetilde{\mathcal{O}} \left(\eta^2 \log \left(\mathcal{N}(\mathcal{F}, T^{-1}, \| \cdot \|_{2})\right) \right)$. Using Lemma \ref{lemma_covering} to upper bound the covering number, we complete the proof.

\end{proof}

\subsection{Extension to the Model with Acceptance/Rejection Options}

\begin{lemma}
    For any $\nu \geq 0$, if $\vect{\Theta}^* \in \mathcal{Q}_t$, then
    \begin{equation*}
        \sum_{(u,i) \in \mathcal{A}_t} \mathbb{1} \left \{ \Delta^t_{ui} \geq \nu \right \} \leq \frac{2 w_t \sqrt{\beta_t}}{\nu}
    \end{equation*}
\end{lemma}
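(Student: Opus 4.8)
The plan is to avoid applying Lemma~\ref{lemma_regret_ub} (which only controls $\sum_{(u,i)\in\mathcal{A}_t}\Delta^t_{ui}$, a sum that may have negative terms) and instead invoke the confidence-width estimate of Lemma~\ref{lemma_conf_width} for the \emph{sub}-allocation supported on the large-gap pairs. Assume $\nu>0$ (the bound is otherwise vacuous). Set $\mathcal{B}_t := \{(u,i)\in\mathcal{A}_t : \Delta^t_{ui}\geq\nu\}$ and let $\widetilde{\vect{X}}_t := \vect{X}_t \circ \mathds{1}\{\vect{\Theta}_t - \vect{\Theta}^*\geq\nu\}$ be the $0$--$1$ matrix with support exactly $\mathcal{B}_t$. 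Since all constraints defining $\mathcal{X}_t$ are of the form $\leq$, dropping allocations preserves feasibility, so $\widetilde{\vect{X}}_t \in \mathcal{X}_t$.

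First I would lower-bound $\langle \vect{\Theta}_t - \vect{\Theta}^*, \widetilde{\vect{X}}_t \rangle$: every summand is at least $\nu$, hence
\begin{equation*}
    \langle \vect{\Theta}_t - \vect{\Theta}^*, \widetilde{\vect{X}}_t \rangle = \sum_{(u,i)\in\mathcal{B}_t} \Delta^t_{ui} \;\geq\; \nu\,|\mathcal{B}_t| = \nu \sum_{(u,i)\in\mathcal{A}_t} \mathds{1}\{\Delta^t_{ui}\geq\nu\}.
\end{equation*}
Next I would upper-bound the same quantity. Writing $\widetilde{w}_t := \|\widetilde{\vect{X}}_t\|_{\vect{A}_t^{-1}}$, inserting $\widehat{\vect{\Theta}}_t$, and running the same Cauchy--Schwarz step (in the $\vect{A}_t$-inner product) as in the proof of Lemma~\ref{lemma_conf_width},
\begin{equation*}
    |\langle \vect{\Theta}_t - \vect{\Theta}^*, \widetilde{\vect{X}}_t \rangle| \leq \big( \|\vect{\Theta}_t - \widehat{\vect{\Theta}}_t\|_{\vect{A}_t} + \|\vect{\Theta}^* - \widehat{\vect{\Theta}}_t\|_{\vect{A}_t} \big)\,\widetilde{w}_t \leq 2\widetilde{w}_t\sqrt{\beta_t},
\end{equation*}
where the last inequality uses $\vect{\Theta}_t\in\mathcal{Q}_t$ and the hypothesis $\vect{\Theta}^*\in\mathcal{Q}_t$. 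Finally, since $\vect{A}_t$ (and hence $\vect{A}_t^{-1}$) is diagonal, the computation in the proof of Lemma~\ref{lemma_width_identity} applies verbatim with $\mathcal{A}_t$ replaced by $\mathcal{B}_t\subseteq\mathcal{A}_t$, giving $\widetilde{w}_t^2 = \sum_{(u,i)\in\mathcal{B}_t}(w^t_{ui})^2 \leq \sum_{(u,i)\in\mathcal{A}_t}(w^t_{ui})^2 = w_t^2$, so $\widetilde{w}_t\leq w_t$. Chaining the two displays yields $\nu \sum_{(u,i)\in\mathcal{A}_t} \mathds{1}\{\Delta^t_{ui}\geq\nu\} \leq 2w_t\sqrt{\beta_t}$, and dividing by $\nu$ completes the argument.

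There is no substantive obstacle; the only subtlety is the one noted above, namely that one must restrict to the sub-allocation $\widetilde{\vect{X}}_t$ rather than work with $\vect{X}_t$ directly. Doing so simultaneously discards the (possibly negative) small-gap terms and shrinks the confidence width (because $\vect{A}_t^{-1}$ is diagonal with nonnegative entries), which is exactly what makes the bound come out with $w_t$ on the right-hand side.
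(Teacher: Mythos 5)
Your proof is correct, and it takes a slightly different route than the paper's. The paper's own argument is a two-line Markov-type bound: $\sum_{(u,i)\in\mathcal{A}_t}\mathds{1}\{\Delta^t_{ui}\geq\nu\}\leq \nu^{-1}\sum_{(u,i)\in\mathcal{A}_t}|\Delta^t_{ui}|$, followed by the claim $\sum_{(u,i)\in\mathcal{A}_t}|\Delta^t_{ui}|\leq 2w_t\sqrt{\beta_t}$, which it attributes to Lemma~\ref{lemma_regret_ub}. Your reservation about that lemma is literally justified---as stated it only controls the signed sum $\langle\vect{X}_t,\vect{\Theta}_t-\vect{\Theta}^*\rangle$---but the absolute-value version does follow from the same Cauchy--Schwarz step used in the proof of Lemma~\ref{lemma_conf_width} (apply it with $|\Delta_{ui}|$ entrywise; the $\vect{A}_t$- and $\vect{A}_t^{-1}$-norms are unchanged since $\vect{A}_t$ is diagonal), so the paper's proof is sound modulo this loose citation. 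Your alternative---restricting to the sub-allocation $\widetilde{\vect{X}}_t$ supported on the large-gap pairs, where every summand is at least $\nu$, bounding its inner product with $\vect{\Theta}_t-\vect{\Theta}^*$ by $2\widetilde{w}_t\sqrt{\beta_t}$ via the triangle inequality through $\widehat{\vect{\Theta}}_t$, and then using $\widetilde{w}_t\leq w_t$ from the diagonal structure of $\vect{A}_t$---sidesteps the sign issue entirely and is marginally sharper, since it actually proves the bound with $\widetilde{w}_t$ in place of $w_t$. One small remark: the feasibility observation $\widetilde{\vect{X}}_t\in\mathcal{X}_t$ is true but unnecessary, as the Cauchy--Schwarz estimate holds for any $0$--$1$ matrix regardless of membership in $\mathcal{X}_t$. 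Both routes deliver the same final bound with essentially the same confidence-width machinery.
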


\begin{proof}
     Using Lemma \ref{lemma_regret_ub},
    \begin{align*}
        \sum_{(u,i) \in \mathcal{A}_t} \mathbb{1} \left \{ \Delta^t_{ui} \geq \nu \right \}
        &\leq \frac{1}{\nu} \sum_{(u,i) \in \mathcal{A}_t} \left| \Delta^t_{ui} \right| \\
        &\leq \frac{2 w_t \sqrt{\beta_t}}{\nu}
    \end{align*}
\end{proof}

\begin{theorem}    
In the model with acceptance/rejection options, if the offered prices $\vect{p}_t$ satisfy $\vect{p}_t = \bar{\vect{p}}_t - \nu_t$ where $\nu_t = \nu \sqrt{w_t}$ for some $\nu \geq 0$, and $\vect{\Theta}^* \in \mathcal{Q}_t$ for all $t \leq T$, then the $T$ period social welfare regret is bounded as
\begin{equation*}
    \mathcal{R}^{SW}(T, \pi) \leq \left( 8 NM \beta_T T \log \left(1 + \frac{T}{\gamma} \right) \right)^{1/2} + \frac{2}{\nu} \left( 2 NM T^3 \beta_T^2 \log \left(1 + \frac{T}{\gamma} \right) \right)^{1/4}
\end{equation*}
\label{thm_regret_ar}
\end{theorem}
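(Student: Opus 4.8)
The plan is to split the social welfare regret into the allocation loss that would be incurred even if nobody rejected, plus the extra loss due to rejections triggered by the posted prices. For each round $t$, the modified definition of $\mathcal{W}_t$ gives the identity
\[
\langle\vect{X}^*_t,\vect{\Theta}^*\rangle-\mathcal{W}_t(\vect{X}_t)=r_t+\sum_{(u,i)\in\mathcal{A}_t}\Theta^*_{ui}\,\mathds{1}\{\Theta^*_{ui}<p^t_i\},
\]
where $r_t=\langle\vect{X}^*_t,\vect{\Theta}^*\rangle-\langle\vect{X}_t,\vect{\Theta}^*\rangle$ is the instantaneous allocation regret. First I would handle $\sum_t r_t$ exactly as in the proof of Theorem~\ref{thm_regret}: since $\vect{\Theta}^*\in\mathcal{Q}_t$ for all $t\le T$ and $\beta_t$ is non-decreasing in $t$, Lemma~\ref{lemma_regret_ub} gives $r_t\le 2w_t\sqrt{\beta_T}$, and then Cauchy--Schwarz together with Lemma~\ref{lemma_width_sum_ub} yields $\sum_{t=1}^T r_t\le\sqrt{8NM\beta_T T\log(1+T/\gamma)}$, the first summand in the claim. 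It then remains to bound the cumulative rejection loss $\sum_{t=1}^T\sum_{(u,i)\in\mathcal{A}_t}\Theta^*_{ui}\,\mathds{1}\{\Theta^*_{ui}<p^t_i\}$.

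Since $\Theta^*_{ui}\in[0,1]$, the round-$t$ rejection loss is at most the number of rejected pairs $\sum_{(u,i)\in\mathcal{A}_t}\mathds{1}\{\Theta^*_{ui}<p^t_i\}$. The crucial observation is that every recommended pair has non-negative optimistic surplus at the optimistic equilibrium price, i.e.\ $\Theta^t_{ui}\ge\bar p^t_i$ whenever $(u,i)\in\mathcal{A}_t$. This is exactly the saddle-point structure already used in the proof of Lemma~\ref{lemma_ofu_instability_bound}: because $\bar{\vect{p}}_t$ attains $\min_{\vect{p}\ge 0}g(\vect{p};\vect{\Theta}_t)$ and $\vect{X}_t$ attains the inner maximization of $g(\bar{\vect{p}}_t;\vect{\Theta}_t)$, each row $\vect{x}^t_u$ maximizes $\sum_i x_{ui}(\Theta^t_{ui}-\bar p^t_i)$ over $\{\vect{x}\in[0,1]^M:\mathds{1}^\mathrm{T}\vect{x}\le d^t_u\}$, so a pair with strictly negative reduced value $\Theta^t_{ui}-\bar p^t_i$ is never selected. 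Plugging in the pricing rule $p^t_i=\bar p^t_i-\nu\sqrt{w_t}$: if $(u,i)\in\mathcal{A}_t$ is rejected then $\Theta^*_{ui}<\bar p^t_i-\nu\sqrt{w_t}\le\Theta^t_{ui}-\nu\sqrt{w_t}$, hence $\Delta^t_{ui}=\Theta^t_{ui}-\Theta^*_{ui}\ge\nu\sqrt{w_t}$. Thus the rejected pairs at round $t$ all lie in $\{(u,i)\in\mathcal{A}_t:\Delta^t_{ui}\ge\nu\sqrt{w_t}\}$.

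Next I would apply the lemma immediately preceding this theorem, but with its threshold $\nu$ replaced by $\nu\sqrt{w_t}$, to get $\sum_{(u,i)\in\mathcal{A}_t}\mathds{1}\{\Delta^t_{ui}\ge\nu\sqrt{w_t}\}\le 2w_t\sqrt{\beta_t}/(\nu\sqrt{w_t})=2\sqrt{w_t\beta_t}/\nu\le 2\sqrt{w_t\beta_T}/\nu$. Summing over $t$ and applying Cauchy--Schwarz twice,
\[
\sum_{t=1}^T\sqrt{w_t}\;\le\;\sqrt{T}\Big(\sum_{t=1}^T w_t\Big)^{1/2}\;\le\;\sqrt{T}\Big(T\sum_{t=1}^T w_t^2\Big)^{1/4}\;\le\;T^{3/4}\big(2NM\log(1+T/\gamma)\big)^{1/4}
\]
by Lemma~\ref{lemma_width_sum_ub}, so the cumulative rejection loss is at most $\tfrac{2\sqrt{\beta_T}}{\nu}T^{3/4}\big(2NM\log(1+T/\gamma)\big)^{1/4}=\tfrac{2}{\nu}\big(2NMT^3\beta_T^2\log(1+T/\gamma)\big)^{1/4}$. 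Adding this to the first summand gives the stated bound.

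The one genuinely delicate step is the surplus inequality $\Theta^t_{ui}\ge\bar p^t_i$ for recommended pairs: getting the signs right in the partial Lagrangian defining $g(\cdot;\vect{\Theta}_t)$, and using that $\vect{X}_t$ may be taken to be an integral optimizer (so that deleting a negative-value item stays feasible and weakly improves the corresponding user's objective, forcing no such item to be allocated). Everything else --- the decomposition, the two Cauchy--Schwarz steps, and the invocation of the width bound --- is routine and parallels the proof of Theorem~\ref{thm_regret}.
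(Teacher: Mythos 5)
Your proposal is correct and follows essentially the same route as the paper: the same decomposition of $\widetilde{r}_t$ into the allocation regret $r_t$ plus the number of rejected pairs, the same key fact $\Theta^t_{ui}\ge\bar p^t_i$ for $(u,i)\in\mathcal{A}_t$ (which the paper also invokes, justified by the same primal--dual/saddle-point structure from Lemma~\ref{lemma_ofu_instability_bound}), the same application of the rejection-count lemma with threshold $\nu_t=\nu\sqrt{w_t}$, and the same Cauchy--Schwarz chain via Lemma~\ref{lemma_width_sum_ub}. No gaps; your extra care in spelling out why $\Theta^t_{ui}\ge\bar p^t_i$ only makes explicit what the paper leaves implicit.
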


\begin{proof}

In the model with acceptance/rejection options, let the instantaneous regret at time $t$ be defined as 
\begin{equation*}
    \widetilde{r}_t := \sum_{(u,i) \in \mathcal{A}^*_t} \Theta^*_{ui} - \sum_{(u,i) \in \mathcal{A}_t} \Theta^*_{ui} \mathds{1}\{ \Theta^*_{ui} \geq p_i^t \}
\end{equation*}
Then, recalling the definition of $r_t$, we can write
\begin{align*}
    \widetilde{r}_t &= \sum_{(u,i) \in \mathcal{A}^*_t} \Theta^*_{ui} - \sum_{(u,i) \in \mathcal{A}_t} \Theta^*_{ui} + \sum_{(u,i) \in \mathcal{A}_t} \Theta^*_{ui} \mathds{1}\{ \Theta^*_{ui} < p_i^t \} \\
    &\leq \langle \vect{X}_t^*, \vect{\Theta}^* \rangle - \langle \vect{X}_t, \vect{\Theta}^* \rangle + \sum_{(u,i) \in \mathcal{A}_t} \mathds{1}\{ \Theta^*_{ui} < p_i^t \} \\
    &\leq r_t + \sum_{(u,i) \in \mathcal{A}_t} \mathds{1}\{ \Theta^{t}_{ui} - \Theta^{*}_{ui} > \Theta^{t}_{ui} - p_i^t \} \\
    &\leq r_t + \sum_{(u,i) \in \mathcal{A}_t} \mathds{1}\{ \Delta^t_{ui} > \Theta^{t}_{ui} - \bar{p}_i^t + \nu_t \} \\
    &\leq r_t + \sum_{(u,i) \in \mathcal{A}_t} \mathds{1}\{ \Delta^t_{ui} > \nu_t \}
\end{align*}
where the last step uses the fact that $\Theta^{t}_{ui} \geq \bar{p}_i^t$ for $(u,i) \in \mathcal{A}_t$. Then, we have 
\begin{align*}
    \mathcal{R}^{SW}(T, \pi) &= \sum_{t = 1}^{T} \widetilde{r}_t \\
    &\leq \sum_{t = 1}^{T} r_t + \sum_{t = 1}^{T} \sum_{(u,i) \in \mathcal{A}_t} \mathds{1}\{ \Delta^t_{ui} > \nu_t \}\\
    &\leq \left( T \sum_{t = 1}^{T} r_t^2 \right)^{1/2} + 2 \sqrt{\beta_T} \sum_{t = 1}^{T} \frac{w_t}{\nu_t}\\
    &= \left( T \sum_{t = 1}^{T} r_t^2 \right)^{1/2} + \frac{2}{\nu} \sqrt{\beta_T} \sum_{t = 1}^{T} \sqrt{w_t}\\
    &\leq \left( T \sum_{t = 1}^{T} r_t^2 \right)^{1/2} + \frac{2}{\nu} \sqrt{\beta_T} \left( T^3 \sum_{t = 1}^{T} w_t^2 \right)^{1/4}\\
    &\leq \left( 8 NM \beta_T T \log \left(1 + \frac{T}{\gamma} \right) \right)^{1/2} + \frac{2}{\nu} \left( 2 NM T^3 \beta_T^2 \log \left(1 + \frac{T}{\gamma} \right) \right)^{1/4}
\end{align*}
\end{proof}

\begin{lemma}
In the model with acceptance/rejection options, if the offered prices $\vect{p}_t$ satisfy $\vect{p}_t = \bar{\vect{p}}_t - \nu_t$ where $\nu_t = \nu \sqrt{w_t}$ for some $\nu \geq 0$, and $\vect{\Theta}^* \in \mathcal{Q}_t$, then
\begin{equation*}
    \sum_{u = 1}^{N} \mathcal{S}^{t}_{u} \leq \langle \vect{X}_t, \vect{\Theta}_{t} - \vect{\Theta}^{*} \rangle + \nu_t n M
\end{equation*}
\label{instability_to_r_accept_reject}
\end{lemma}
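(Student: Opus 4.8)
The plan is to reduce to the no-rejection argument of Lemma~\ref{instability_to_r} by absorbing both the acceptance threshold and the price discount $\nu_t$ into an additive error of size $\nu_t d_u^t$ per user. Throughout I write $a_+ := \max(a,0)$, and recall from Section~\ref{section_ar} that under prices $\vect{p}_t = \bar{\vect{p}}_t - \nu_t \mathds{1}$ the surplus is $\psi_u(\vect{x},\vect{p}_t) = \sum_i x_{ui}(\Theta^*_{ui} - \bar p_i^t + \nu_t)_+$. Inactive users ($d_u^t=0$) have $\mathcal{S}^t_u = 0$ and contribute nothing, so only $u\in\mathcal{N}_t$ matter.

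First I would bound the maximal surplus $\psi^*_{t,u}(\vect{p}_t)$. Let $\vect{y}_u \in \mathcal{X}_u^t$ attain the maximum of $\sum_i x_{ui}(\Theta^*_{ui} - \bar p_i^t + \nu_t)_+$. Without loss of generality $\vect{y}_u$ puts no mass on coordinates where this thresholded term vanishes (deleting such a coordinate preserves feasibility and the objective); on the remaining coordinates the $(\cdot)_+$ is inactive, so
\[
\psi^*_{t,u}(\vect{p}_t) = \sum_i y_{ui}(\Theta^*_{ui} - \bar p_i^t) + \nu_t \sum_i y_{ui} \le \max_{\vect{x}\in\mathcal{X}_u^t}\sum_i x_{ui}(\Theta^*_{ui} - \bar p_i^t) + \nu_t d_u^t ,
\]
using $\sum_i y_{ui} \le d_u^t$ and $\vect{y}_u \in \mathcal{X}_u^t$. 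In the other direction, $(\cdot)_+ \ge (\cdot)$ gives $\psi_u(\vect{x}^t_u,\vect{p}_t) \ge \sum_i x^t_{ui}(\Theta^*_{ui} - \bar p_i^t) + \nu_t\sum_i x^t_{ui} \ge \sum_i x^t_{ui}(\Theta^*_{ui} - \bar p_i^t)$. Subtracting, for each active $u$,
\[
\mathcal{S}^t_u \le \max_{\vect{x}\in\mathcal{X}_u^t}\sum_i x_{ui}(\Theta^*_{ui} - \bar p_i^t) - \sum_i x^t_{ui}(\Theta^*_{ui} - \bar p_i^t) + \nu_t d_u^t .
\]

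Then I would sum over $u$ and apply Lemma~\ref{lemma_ofu_instability_bound} (valid since $\vect{\Theta}^*\in\mathcal{Q}_t$ and $\bar{\vect{p}}_t$ is the optimal dual for $\vect{\Theta}_t$) to the first two groups of terms exactly as in the proof of Lemma~\ref{instability_to_r}, which collapses them to $\sum_{u,i} x^t_{ui}(\Theta^t_{ui} - \Theta^*_{ui}) = \langle \vect{X}_t, \vect{\Theta}_t - \vect{\Theta}^*\rangle$. It remains to control $\sum_{u\in\mathcal{N}_t} \nu_t d_u^t$: since at most $n$ users are active and each demand satisfies $d_u^t \le M$, this is at most $\nu_t n M$, which yields the claimed bound.

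The step I expect to be the main obstacle is the first one --- passing from the accept/reject maximal surplus $\psi^*_{t,u}(\vect{p}_t)$, which contains the nonlinear threshold $(\cdot)_+$ evaluated at the discounted prices, to the linear maximal surplus at the equilibrium prices $\bar{\vect{p}}_t$ for which Lemma~\ref{lemma_ofu_instability_bound} is stated. The key observation making this clean is that a surplus-maximizing per-user allocation gains nothing from an item with zero thresholded surplus, so on its support the threshold can be dropped, leaving only the benign additive slack $\nu_t d_u^t$ produced by the price discount; the remaining manipulations are the same bookkeeping as in Lemma~\ref{instability_to_r}.
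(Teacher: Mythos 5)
Your proposal is correct and follows essentially the same route as the paper: bound the accept/reject maximal surplus under the discounted prices by the linear maximal surplus at $\bar{\vect{p}}_t$ plus a $\nu_t d_u^t$ slack, lower-bound the attained surplus by its linear counterpart, then invoke Lemma~\ref{lemma_ofu_instability_bound} and use $\sum_{u\in\mathcal{N}_t} d_u^t \leq nM$. The only difference is cosmetic — you remove the threshold by restricting to the support of the per-user maximizer, whereas the paper manipulates the indicator functions through a chain of inequalities — but both yield the same intermediate per-user bound.
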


\begin{proof}
Using the definition of the instability and Lemma \ref{lemma_ofu_instability_bound}, we have
\begin{align*}
    \sum_{u = 1}^{N} \mathcal{S}^{t}_{u} &=  \sum_{u = 1}^{N} \left \{ \max_{\vect{x} \in \mathcal{X}_u^t} \sum_{i=1}^{M} x_{ui} (\Theta^*_{ui} - p_i^t) \mathds{1}\{ \Theta^*_{ui} \geq p_i^t \} - \sum_{i=1}^{M} x_{ui}^t (\Theta^*_{ui} - p_i^t) \mathds{1}\{ \Theta^*_{ui} \geq p_i^t \} \right \} \\
    &\leq \sum_{u = 1}^{N} \left \{ \max_{\vect{x} \in \mathcal{X}_u^t} \sum_{i=1}^{M} x_{ui} (\Theta^*_{ui} - \bar{p}_i^t + \nu_t) \mathds{1}\{ \Theta^*_{ui} \geq \bar{p}_i^t -\nu_t \} - \sum_{i=1}^{M} x_{ui}^t (\Theta^*_{ui} - p_i^t) \mathds{1}\{ \Theta^*_{ui} \geq p_i^t \} \right \} \\
    &\leq \sum_{u = 1}^{N} \left \{ \max_{\vect{x} \in \mathcal{X}_u^t} \sum_{i=1}^{M} x_{ui} (\Theta^*_{ui} - \bar{p}_i^t ) \mathds{1}\{ \Theta^*_{ui} \geq \bar{p}_i^t -\nu_t \} +  \nu_t \max_{\vect{x} \in \mathcal{X}_u^t} \sum_{i=1}^{M} x_{ui} - \sum_{i=1}^{M} x_{ui}^t (\Theta^*_{ui} - p_i^t) \mathds{1}\{ \Theta^*_{ui} \geq p_i^t \} \right \} \\
    &\leq \sum_{u = 1}^{N} \left \{ \max_{\vect{x} \in \mathcal{X}_u^t} \sum_{i=1}^{M} x_{ui} (\Theta^*_{ui} - \bar{p}_i^t ) \mathds{1}\{ \Theta^*_{ui} \geq \bar{p}_i^t \} +  \nu_t d_u^t - \sum_{i=1}^{M} x_{ui}^t (\Theta^*_{ui} - p_i^t) \mathds{1}\{ \Theta^*_{ui} \geq p_i^t \} \right \} \\
    &\leq \sum_{u = 1}^{N} \left \{ \max_{\vect{x} \in \mathcal{X}_u^t} \sum_{i=1}^{M} x_{ui} (\Theta^*_{ui} - \bar{p}_i^t ) +  \nu_t d_u^t - \sum_{i=1}^{M} x_{ui}^t (\Theta^*_{ui} - p_i^t) \mathds{1}\{ \Theta^*_{ui} \geq p_i^t \} \right \} \\
    &\leq \sum_{u = 1}^{N} \left \{ \sum_{i=1}^{M} x_{ui}^t (\Theta^t_{ui} - \bar{p}_i^t) +  \nu_t d_u^t - \sum_{i=1}^{M} x_{ui}^t (\Theta^*_{ui} - p_i^t) \right \} \\
    &\leq \sum_{u = 1}^{N} \left \{ \sum_{i=1}^{M} x_{ui}^t (\Theta^t_{ui} - \Theta^*_{ui}) +  \nu_t d_u^t \right \} \\
    &\leq \sum_{u = 1}^{N} \sum_{i=1}^{M} x_{ui}^t \Delta^t_{ui} + \nu_t  \sum_{u = 1}^{N} d_u^t \\
    &\leq \langle \vect{X}_t, \vect{\Theta}_{t} - \vect{\Theta}^{*} \rangle + \nu_t n M
\end{align*}
    
\end{proof}

\begin{theorem}    
In the model with acceptance/rejection options, if the offered prices $\vect{p}_t$ satisfy $\vect{p}_t = \bar{\vect{p}}_t - \nu_t$ where $\nu_t = \nu \sqrt{w_t}$ for some $\nu \geq 0$, and $\vect{\Theta}^* \in \mathcal{Q}_t$ for all $t \leq T$, then the $T$ period instability is bounded as
\begin{equation*}
    \mathcal{R}^{I}(T, \pi) \leq \left(8 \beta_T NMT \log \left(1 + \frac{T}{\gamma} \right) \right)^{1/2} + \nu n M \left( 2 NM T^3 \log \left(1 + \frac{T}{\gamma} \right) \right)^{1/4}
\end{equation*}
\label{thm_instability_ar}
\end{theorem}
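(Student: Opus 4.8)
The plan is to combine Lemma~\ref{instability_to_r_accept_reject} with the confidence-width machinery exactly as in the proof of Theorem~\ref{thm_instability}, carrying along the extra price-slack term $\nu_t n M$ in the same way that Theorem~\ref{thm_regret_ar} handles it. I would begin by writing $\mathcal{R}^{I}(T,\pi) = \sum_{t=1}^{T}\sum_{u\in\mathcal{N}}\mathcal{S}^{t}_{u}$ and applying Lemma~\ref{instability_to_r_accept_reject} at each round (legitimate since $\vect{\Theta}^*\in\mathcal{Q}_t$ for all $t\le T$), which gives $\mathcal{R}^{I}(T,\pi)\le\sum_{t=1}^{T}\langle\vect{X}_t,\vect{\Theta}_t-\vect{\Theta}^*\rangle + nM\sum_{t=1}^{T}\nu_t$.

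For the first sum I would invoke Lemma~\ref{lemma_regret_ub} to get $\langle\vect{X}_t,\vect{\Theta}_t-\vect{\Theta}^*\rangle\le 2w_t\sqrt{\beta_t}\le 2w_t\sqrt{\beta_T}$ (using that $t\mapsto\beta_t^*(\delta,\alpha,\gamma)$ is nondecreasing), then Cauchy--Schwarz, $\sum_{t=1}^{T}w_t\le(T\sum_{t=1}^{T}w_t^2)^{1/2}$, followed by Lemma~\ref{lemma_width_sum_ub}, $\sum_{t=1}^{T}w_t^2\le 2NM\log(1+T/\gamma)$; this produces the first term $\sqrt{8\beta_T NMT\log(1+T/\gamma)}$.

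For the second sum I would substitute $\nu_t=\nu\sqrt{w_t}$ and bound $\sum_{t=1}^{T}\sqrt{w_t}$ by Hölder's inequality with exponents $4$ and $4/3$, namely $\sum_{t=1}^{T}w_t^{1/2}\le T^{3/4}\bigl(\sum_{t=1}^{T}w_t^2\bigr)^{1/4}$, and then apply Lemma~\ref{lemma_width_sum_ub} once more to obtain $(2NMT^3\log(1+T/\gamma))^{1/4}$. Multiplying by $nM\nu$ and adding the two contributions yields exactly the stated bound.

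There is no real obstacle in this final step: once Lemma~\ref{instability_to_r_accept_reject} is in hand, the theorem is a mechanical assembly of Cauchy--Schwarz, Hölder, and the log-determinant width estimate, structurally identical to Theorem~\ref{thm_regret_ar}. All the market-specific work---complementary slackness on the estimated equilibrium prices $\bar{\vect{p}}_t$, the OFU optimality of $(\vect{X}_t,\vect{\Theta}_t)$, and the inclusion $\vect{\Theta}^*\in\mathcal{Q}_t$ used to convert the per-user maximum surplus into a statement about $\langle\vect{X}_t,\vect{\Theta}_t-\vect{\Theta}^*\rangle$---has already been absorbed into that lemma and into Lemma~\ref{lemma_ofu_instability_bound}, so the only point meriting a moment's care is confirming the monotonicity of $\beta_t$ that licenses replacing $\beta_t$ by $\beta_T$ inside the sums.
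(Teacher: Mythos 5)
Your proposal is correct and follows essentially the same route as the paper: decompose per-round instability via Lemma~\ref{instability_to_r_accept_reject}, bound $\langle\vect{X}_t,\vect{\Theta}_t-\vect{\Theta}^*\rangle$ with Lemma~\ref{lemma_regret_ub} plus Cauchy--Schwarz and Lemma~\ref{lemma_width_sum_ub}, and handle $nM\sum_t\nu_t$ via $\sum_t\sqrt{w_t}\le T^{3/4}(\sum_t w_t^2)^{1/4}$. Your explicit remark on the monotonicity of $\beta_t$ is a detail the paper leaves implicit, but the argument is otherwise identical.
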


\begin{proof}
Using Lemma \ref{instability_to_r_accept_reject} and \ref{lemma_regret_ub},
\begin{align*}
     \mathcal{R}^{I}(T, \pi) &\leq \sum_{t=1}^{T} \left( \langle \vect{X}_t, \vect{\Theta}_{t} - \vect{\Theta}^{*} \rangle + \nu_t n M \right) \\
    &\leq 2 \sqrt{\beta_T} \sum_{t=1}^{T} w_t + n M \sum_{t=1}^{T} \nu_t\\
    &\leq \sqrt{ 8 \beta_T NMT \log \left(1 + \frac{T}{\gamma} \right) } + \nu n M \sum_{t=1}^{T} \sqrt{w_t}\\
    &\leq \sqrt{ 8 \beta_T NMT \log \left(1 + \frac{T}{\gamma} \right) } + \nu n M \sum_{t=1}^{T} \sqrt{w_t}\\
    &\leq \left(8 \beta_T NMT \log \left(1 + \frac{T}{\gamma} \right) \right)^{1/2} + \nu n M \left( 2 NM T^3 \log \left(1 + \frac{T}{\gamma} \right) \right)^{1/4}
\end{align*}

\end{proof}

\begin{theorem}  
In the model with acceptance/rejection options, if the offered prices $\vect{p}_t$ satisfy $\vect{p}_t = \bar{\vect{p}}_t - \nu_t$ where $\nu_t = \nu \sqrt{w_t}$ for $\nu = \left( \frac{4 \beta_T}{N^2 M^2} \right)^{1/4}$, then, with probability $1 - 2 \delta$, the algorithm results in regrets that satisfy
\begin{align*}
    \mathcal{R}^{SW}(T, \pi) &\leq \sqrt{\lambda_T NM T} + (\lambda_T)^{\frac{1}{4}} (NMT)^{\frac{3}{4}} \\
    \mathcal{R}^{I}(T, \pi) &\leq \sqrt{\lambda_T NM T} + (\lambda_T)^{\frac{1}{4}} (NMT)^{\frac{3}{4}}
\end{align*}
where $\lambda_T = 8 \beta_T \log (1 + T/\gamma)$.
\label{thm_regret_and_instability_ar}
\end{theorem}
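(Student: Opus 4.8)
The plan is to derive this as a direct consequence of the two parameter-dependent bounds already proved, namely Theorem~\ref{thm_regret_ar} for the social welfare regret and Theorem~\ref{thm_instability_ar} for the instability, together with the confidence-set guarantee of Lemma~\ref{lemma_confidence}; the only work is substituting the prescribed value of $\nu$ and simplifying exponents.

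First, I would invoke Lemma~\ref{lemma_confidence} to get that, with probability at least $1-2\delta$, the event $\{\vect{\Theta}^* \in \mathcal{Q}_t \text{ for all } t \leq T\}$ holds, and condition on this event for the rest of the argument, so that the hypotheses of both Theorem~\ref{thm_regret_ar} and Theorem~\ref{thm_instability_ar} are met.

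Next, substitute $\nu = (4\beta_T/(N^2M^2))^{1/4}$ into Theorem~\ref{thm_regret_ar}. Writing $\lambda_T = 8\beta_T\log(1+T/\gamma)$, the leading term $(8NM\beta_T T\log(1+T/\gamma))^{1/2}$ is exactly $\sqrt{\lambda_T NMT}$. For the exploration-penalty term, note that $2/\nu = \sqrt{2}\,(NM)^{1/2}\beta_T^{-1/4}$ and $8^{1/4} = 2^{3/4}$, so that
\[
\frac{2}{\nu}\bigl(2NMT^3\beta_T^2\log(1+T/\gamma)\bigr)^{1/4}
= 2^{3/4}\beta_T^{1/4}\bigl(\log(1+T/\gamma)\bigr)^{1/4}(NMT)^{3/4}
= (\lambda_T)^{1/4}(NMT)^{3/4},
\]
which yields the stated bound on $\mathcal{R}^{SW}(T,\pi)$. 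Doing the same substitution in Theorem~\ref{thm_instability_ar}, the leading term is again $\sqrt{\lambda_T NMT}$, and since $\nu = \sqrt{2}\,\beta_T^{1/4}(NM)^{-1/2}$ the price-discount term becomes
\[
\nu\, nM\bigl(2NMT^3\log(1+T/\gamma)\bigr)^{1/4}
= (\lambda_T)^{1/4}\,\frac{nM}{(NM)^{1/4}}\,T^{3/4}.
\]
Finally I would use $n = \max_t |\mathcal{N}_t| \leq N$, so that $nM \leq NM$ and hence $nM/(NM)^{1/4} \leq (NM)^{3/4}$, giving the bound $(\lambda_T)^{1/4}(NMT)^{3/4}$ on the excess term of $\mathcal{R}^{I}(T,\pi)$.

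There is no real obstacle here: the argument is essentially bookkeeping. The one conceptual point worth noting is that the prescribed $\nu$ is exactly the value that balances the $1/\nu$-growing exploration term of the regret bound against the $\nu$-growing discount term of the instability bound, so both collapse to the common order $\widetilde{\mathcal{O}}((NMT)^{3/4})$; the only mildly nonroutine step is the reduction $n \leq N$ used to absorb the active-user count into the stated dimension factors.
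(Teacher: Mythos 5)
Your proposal is correct and follows essentially the same route as the paper: plug the prescribed $\nu$ into Theorems~\ref{thm_regret_ar} and~\ref{thm_instability_ar} and simplify, with the high-probability event supplied by Lemma~\ref{lemma_confidence}. Your treatment is in fact slightly more careful than the paper's, since you make explicit both the conditioning on $\{\vect{\Theta}^*\in\mathcal{Q}_t,\ \forall t\le T\}$ and the step $n\le N$ used to absorb the $\nu\,nM$ term into $(\lambda_T)^{1/4}(NMT)^{3/4}$, which the paper performs silently by writing $\nu NM$.
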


\begin{proof}
By Theorem \ref{thm_regret_ar},
\begin{align*}
    \mathcal{R}^{SW}(T, \pi) &\leq \left( 8 NM \beta_T T \log \left(1 + \frac{T}{\gamma} \right) \right)^{1/2} + \frac{2}{\nu} \left( 2 NM T^3 \beta_T^2 \log \left(1 + \frac{T}{\gamma} \right) \right)^{1/4} \\
    &=  (\lambda_T NM T)^{\frac{1}{2}} + (\lambda_T)^{\frac{1}{4}} (NMT)^{\frac{3}{4}} \\
\end{align*}

Similarly, we have
\begin{align*}
    \mathcal{R}^{I}(T, \pi) &\leq  \left(8 \beta_T NMT \log \left(1 + \frac{T}{\gamma} \right) \right)^{1/2} + \nu N M \left( 2 NM T^3 \log \left(1 + \frac{T}{\gamma} \right) \right)^{1/4} \\
    &= (\lambda_T NM T)^{\frac{1}{2}} + (\lambda_T)^{\frac{1}{4}} (NMT)^{\frac{3}{4}}
\end{align*}

\end{proof}

\begin{theorem}    
In the model with acceptance/rejection options, Letting $\delta = \mathcal{O} ((NMT)^{-1})$, $\alpha = \mathcal{O} ((NMT)^{-1})$, $\gamma = 1$ and $\nu = \mathcal{O} \left(\left( \frac{\eta^2 (N+M) R}{N^2 M^2}\right)^{1/4}\right)$ results in regret bounds that satisfy
\begin{align*}
    \mathcal{R}^{SW}(T, \pi) = \widetilde{\mathcal{O}} \left( \left( \eta^2 (N+M) R \right)^{\frac{1}{4}} (NMT)^{\frac{3}{4}}  \right) \\
    \mathcal{R}^{I}(T, \pi) = \widetilde{\mathcal{O}} \left( \left( \eta^2 (N+M) R \right)^{\frac{1}{4}} (NMT)^{\frac{3}{4}}  \right)
\end{align*}
\end{theorem}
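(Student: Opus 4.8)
The plan is to obtain the result as a direct specialization of Theorem~\ref{thm_regret_and_instability_ar}: no new estimate is needed, only substitution of the stated parameter values together with the usual split on the confidence event. First I would invoke Lemma~\ref{lemma_confidence}: with the confidence sets $\mathcal{Q}_t$ of~\eqref{conf_sets_lr}, the event $\mathcal{E}:=\{\vect{\Theta}^*\in\mathcal{Q}_t\text{ for all }t\le T\}$ holds with probability at least $1-2\delta$. On $\mathcal{E}$, Theorem~\ref{thm_regret_and_instability_ar} applies with $\gamma=1$ and $\nu=(4\beta_T/(N^2M^2))^{1/4}$, giving $\mathcal{R}^{SW}(T,\pi),\ \mathcal{R}^{I}(T,\pi)\le\sqrt{\lambda_T NMT}+\lambda_T^{1/4}(NMT)^{3/4}$ with $\lambda_T=8\beta_T^*(\delta,\alpha,\gamma)\log(1+T/\gamma)$.

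Second, I would bound $\lambda_T$ under $\delta,\alpha=\mathcal{O}((NMT)^{-1})$ and $\gamma=1$. In $\beta_T^*$ the dominant contribution is $8\eta^2\log(\mathcal{N}(\mathcal{L},\alpha,\|\cdot\|_\mathrm{F})/\delta)$; plugging in the covering-number bound $\log\mathcal{N}(\mathcal{L},\alpha,\|\cdot\|_\mathrm{F})\le(N+M+1)R\log(9\sqrt{NM}/\alpha)$ from Lemma~\ref{lemma_covering} together with $\log(1/\delta)=\mathcal{O}(\log(NMT))$ yields $\widetilde{\mathcal{O}}(\eta^2(N+M)R)$. The discretization term $2\alpha T\sqrt{NM}\,[\,8+\sqrt{8\eta^2\log(4NMt^2/\delta)}\,]$ is of lower order because $\alpha T\sqrt{NM}=\mathcal{O}((NM)^{-1/2})$, and $4\gamma G^2$ is a constant, so $\beta_T^*=\widetilde{\mathcal{O}}(\eta^2(N+M)R)$ (exactly as $\beta_T^*$ is treated in Corollary~\ref{corr_regret_order}); since $\log(1+T)=\widetilde{\mathcal{O}}(1)$ we get $\lambda_T=\widetilde{\mathcal{O}}(\eta^2(N+M)R)$. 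This also shows $\nu=(4\beta_T/(N^2M^2))^{1/4}=\widetilde{\mathcal{O}}\big((\eta^2(N+M)R/(N^2M^2))^{1/4}\big)$, matching the $\nu$ in the statement.

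Third, substituting this $\lambda_T$ into the two-term bound, the term $\lambda_T^{1/4}(NMT)^{3/4}$ dominates $\sqrt{\lambda_T NMT}=\lambda_T^{1/2}(NMT)^{1/2}$ exactly when $NMT\ge\lambda_T$, i.e. for all sufficiently large $T$, so on $\mathcal{E}$ both quantities are $\widetilde{\mathcal{O}}\big((\eta^2(N+M)R)^{1/4}(NMT)^{3/4}\big)$. To make the bound unconditional I would then handle $\mathcal{E}^c$: there the per-round social-welfare regret is at most the number of allocated pairs, $\mathcal{O}(nM)$, and the per-round instability $\sum_{u}\mathcal{S}^t_u$ is a sum of surpluses lying in $[0,M]$ over the at most $n$ active users (inactive users contribute $0$), hence also $\mathcal{O}(nM)$; summed over $T$ rounds and weighted by $\mathds{P}(\mathcal{E}^c)\le2\delta=\mathcal{O}((NMT)^{-1})$ this contributes only $\mathcal{O}(1)$ and is absorbed into the $\widetilde{\mathcal{O}}(\cdot)$. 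The argument is essentially bookkeeping; the only points needing care are confirming that the $(NMT)^{3/4}$ term is the dominant one of the Theorem~\ref{thm_regret_and_instability_ar} bound in the regime of interest and checking that the discretization and regularization pieces of $\beta_T^*$ are genuinely lower order than the covering-number piece, both immediate from the chosen scalings of $\delta$ and $\alpha$.
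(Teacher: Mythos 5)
Your proposal is correct and follows essentially the same route as the paper: specialize Theorem~\ref{thm_regret_and_instability_ar} on the confidence event from Lemma~\ref{lemma_confidence}, absorb the complement event via the $2\delta \cdot \mathcal{O}(NMT)$ (or $\mathcal{O}(nMT)$) worst-case term, and bound $\lambda_T$ through $\beta_T^* = \widetilde{\mathcal{O}}(\eta^2(N+M)R)$ using the covering-number bound of Lemma~\ref{lemma_covering} under the stated choices of $\delta$, $\alpha$, $\gamma$. Your extra checks (that the discretization and regularization pieces of $\beta_T^*$ are lower order, and that the $(NMT)^{3/4}$ term dominates) are just the bookkeeping the paper leaves implicit.
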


\begin{proof}
By Theorem \ref{thm_regret_and_instability_ar}, with probability $1$,
\begin{equation*}
    \mathcal{R}^{SW}(T, \pi) \leq (1 - 2 \delta) \left((\lambda_T NM T)^{\frac{1}{2}} + (\lambda_T)^{\frac{1}{4}} (NMT)^{\frac{3}{4}} \right)  + 2 \delta NMT
\end{equation*}

Similarly, with probability $1$, we have
\begin{equation*}
    \mathcal{R}^{I}(T, \pi) \leq (1 - 2 \delta)  \left ( (\lambda_T NM T)^{\frac{1}{2}} + (\lambda_T)^{\frac{1}{4}} (NMT)^{\frac{3}{4}} \right) + 2 \delta NMT
\end{equation*}

Letting $\delta = \mathcal{O} ((NMT)^{-1})$, $\alpha = \mathcal{O} ((NMT)^{-1})$, $\gamma = 1$, and noting that $\beta_T^* (T, T^{-1}, 1) = \widetilde{\mathcal{O}} \left(\eta^2 \log \left(\mathcal{N}(\mathcal{L}, T^{-1}, \| \cdot \|_{2})\right) \right) = \widetilde{\mathcal{O}} \left( \eta^2 (N+M) R \right)$ we conclude the proof.

\end{proof}

\subsection{Covering Number for Low-rank Matrices}

\begin{lemma}
The covering number of $\mathcal{L}$ given in \eqref{low_l} obeys
\begin{equation}
    \log \mathcal{N}(\mathcal{L}, \alpha, \| \cdot \|_{\mathrm{F}}) \leq (N + M + 1) R \log \left( \frac{9 \sqrt{NM}}{\alpha} \right)
\end{equation}
\label{lemma_covering}
\end{lemma}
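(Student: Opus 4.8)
The plan is to reduce the claim to the classical covering bound for bounded-rank matrices, obtained through a singular value decomposition parametrization. First I would note that the entrywise bound $\|\vect{\Theta}\|_\infty \le 1$ forces $\|\vect{\Theta}\|_{\mathrm{F}}^2 = \sum_{u,i}\Theta_{ui}^2 \le NM$, so $\mathcal{L}$ is contained in $\mathcal{B}_R := \{\vect{\Theta}\in\mathbb{R}^{N\times M} : \mathrm{rank}(\vect{\Theta})\le R,\ \|\vect{\Theta}\|_{\mathrm{F}}\le \sqrt{NM}\}$. By homogeneity, rescaling an $\bar\epsilon$-net of the unit-Frobenius ball $\mathcal{B}_R^1 := \{\vect{\Theta} : \mathrm{rank}(\vect{\Theta})\le R,\ \|\vect{\Theta}\|_{\mathrm{F}}\le 1\}$ by $\sqrt{NM}$ yields an $\alpha$-net of $\mathcal{B}_R$ with $\bar\epsilon = \alpha/\sqrt{NM}$. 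Hence it suffices to prove $\log\mathcal{N}(\mathcal{B}_R^1,\bar\epsilon,\|\cdot\|_{\mathrm{F}}) \le (N+M+1)R\log(9/\bar\epsilon)$ and substitute at the end.

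Next I would build a net for $\mathcal{B}_R^1$ by covering the three SVD factors separately. Write $\vect{\Theta} = \vect{U}\vect{\Sigma}\vect{V}^\mathrm{T}$ with $\vect{U}\in\mathbb{R}^{N\times R}$, $\vect{V}\in\mathbb{R}^{M\times R}$ having orthonormal columns and $\vect{\Sigma}\in\mathbb{R}^{R\times R}$ diagonal, nonnegative, with $\|\vect{\Sigma}\|_{\mathrm{F}} = \|\vect{\Theta}\|_{\mathrm{F}}\le 1$. The diagonal part ranges over the unit $\ell_2$-ball of $\mathbb{R}^R$, which has a $(\bar\epsilon/3)$-net $\overline{\mathcal{D}}$ of size at most $(3/(\bar\epsilon/3))^R = (9/\bar\epsilon)^R$ by the standard volumetric bound $(3/\delta)^d$ for a $\delta$-net of the unit ball of any $d$-dimensional normed space. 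The matrix $\vect{U}$ lies in the operator-norm unit ball of $\mathbb{R}^{N\times R}$, a convex body in $NR$ dimensions, so it admits a $(\bar\epsilon/3)$-net $\overline{\mathcal{U}}$ in operator norm of size at most $(9/\bar\epsilon)^{NR}$, and likewise $\vect{V}$ has such a net $\overline{\mathcal{V}}$ of size at most $(9/\bar\epsilon)^{MR}$. The product set $\overline{\mathcal{S}} := \{\bar{\vect{U}}\bar{\vect{\Sigma}}\bar{\vect{V}}^\mathrm{T} : \bar{\vect{U}}\in\overline{\mathcal{U}},\ \bar{\vect{\Sigma}}\in\overline{\mathcal{D}},\ \bar{\vect{V}}\in\overline{\mathcal{V}}\}$ then has cardinality at most $(9/\bar\epsilon)^{(N+M+1)R}$.

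Finally I would verify $\overline{\mathcal{S}}$ is an $\bar\epsilon$-net. Given $\vect{\Theta}=\vect{U}\vect{\Sigma}\vect{V}^\mathrm{T}$, choose the nearest $\bar{\vect{U}},\bar{\vect{\Sigma}},\bar{\vect{V}}$ and telescope:
\begin{equation*}
\vect{\Theta} - \bar{\vect{U}}\bar{\vect{\Sigma}}\bar{\vect{V}}^\mathrm{T} = (\vect{U}-\bar{\vect{U}})\vect{\Sigma}\vect{V}^\mathrm{T} + \bar{\vect{U}}(\vect{\Sigma}-\bar{\vect{\Sigma}})\vect{V}^\mathrm{T} + \bar{\vect{U}}\bar{\vect{\Sigma}}(\vect{V}-\bar{\vect{V}})^\mathrm{T}.
\end{equation*}
Using $\|\vect{A}\vect{B}\|_{\mathrm{F}}\le\|\vect{A}\|_{\mathrm{op}}\|\vect{B}\|_{\mathrm{F}}$ (and its transpose), together with $\|\vect{\Sigma}\vect{V}^\mathrm{T}\|_{\mathrm{F}} = \|\vect{\Sigma}\|_{\mathrm{F}}\le 1$, $\|\bar{\vect{U}}\|_{\mathrm{op}}\le 1$, $\|\bar{\vect{\Sigma}}\|_{\mathrm{F}}\le 1$ and $\|\vect{V}\|_{\mathrm{op}}=1$, each term is at most $\bar\epsilon/3$, so $\|\vect{\Theta}-\bar{\vect{U}}\bar{\vect{\Sigma}}\bar{\vect{V}}^\mathrm{T}\|_{\mathrm{F}}\le\bar\epsilon$. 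Setting $\bar\epsilon = \alpha/\sqrt{NM}$ gives the stated bound. The step needing the most care is the middle term $\|\bar{\vect{U}}(\vect{\Sigma}-\bar{\vect{\Sigma}})\vect{V}^\mathrm{T}\|_{\mathrm{F}}$, where one must remember $\bar{\vect{\Sigma}}$ is only a diagonal matrix from the $\ell_2$-ball net (not necessarily nonnegative) and check that the submultiplicative estimate still closes; beyond that, the only delicate point is keeping the constant exactly at $9$ so the three volumetric net sizes multiply to $(9/\bar\epsilon)^{(N+M+1)R}$. Everything else is routine.
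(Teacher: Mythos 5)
Your proof is correct and takes essentially the same route as the paper (the Cand\`es--Plan argument): cover the three SVD factors at scale $\bar\epsilon/3$, telescope the difference $\vect{U}\vect{\Sigma}\vect{V}^\mathrm{T}-\bar{\vect{U}}\bar{\vect{\Sigma}}\bar{\vect{V}}^\mathrm{T}$, and rescale by $\sqrt{NM}$, the only cosmetic difference being that you cover $\vect{U},\vect{V}$ in operator norm via the generic volumetric bound while the paper covers them in the $\|\cdot\|_{1,2}$ (max-column) norm, both giving $(9/\bar\epsilon)^{NR}$ and $(9/\bar\epsilon)^{MR}$ and both closing the telescoping estimate. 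Your flagged concern about $\bar{\vect{\Sigma}}$ possibly having negative diagonal entries is harmless, since the argument only uses $\|\bar{\vect{\Sigma}}\|_{\mathrm{F}}\le 1$ and $\|\vect{\Sigma}-\bar{\vect{\Sigma}}\|_{\mathrm{F}}\le\bar\epsilon/3$ (and any external net can be turned into an internal one at the cost of a constant, a technicality the paper's own construction also glosses over).
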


\begin{proof}
This proof is modified from \citep{candes_2011}. Let $\mathcal{S} = \{ \vect{\Theta} \in \mathds{R}^{N \times M} : \text{rank}(\vect{\Theta}) \leq R, \|\vect{\Theta}\|_\mathrm{F} \leq 1\}$. We will first show that there exists an $\epsilon$-net $\mathcal{S}^\epsilon$ for the Frobenious norm obeying 
\begin{equation*}
    |\mathcal{S}^\epsilon| \leq \left(9 / \epsilon \right)^{(N+M+1)R}
\end{equation*} 

For any $\vect{\Theta} \in \mathcal{S}$, singular value decomposition gives $\vect{\Theta} = \vect{U} \vect{\Sigma} \vect{V}^\textrm{T}$, where $\|\vect{\Sigma}\|_{\mathrm{F}} \leq 1$. We will construct an $\epsilon$-net for $\mathcal{S}$ by covering the set of permisible $\vect{U}$, $\vect{\Sigma}$ and $\vect{V}$. Let $\mathcal{D}$ be the set of diagonal matrices with nonnegative diagonal entries and Frobenious norm less than or equal to one. We take $\mathcal{D}^{\epsilon/3}$ be an $\epsilon$/3-net for $\mathcal{D}$ with $|\mathcal{D}^{\epsilon/3}| \leq (9/\epsilon)^R$. Next, let $\mathcal{O}_{N,R} = \{\vect{U} \in \mathbb{R}^{N \times R} : \vect{U}^\textrm{T} \vect{U} = \vect{I} \}$. To cover $\mathcal{O}_{N,R}$, we use the $\|\cdot\|_{1,2}$ norm defined as
\begin{equation*}
    \|\vect{U}\|_{1,2} = \max_{i} \|\vect{u_i}\|_{\ell_2}
\end{equation*}
where $\vect{u_i}$ denotes the $i$th column of $\vect{U}$. Let $\mathcal{Q}_{N, R} = \{\vect{U} \in \mathbb{R}^{N, R} : \|\vect{U}\|_{1,2} \leq 1\}$. It is easy to see that $\mathcal{O}_{{N, R}} \subset \mathcal{Q}_{{N, R}}$ since the columns of an orthogonal matrix are unit normed. We see that there is an $\epsilon/3$-net $\mathcal{O}_{{N, R}}^{\epsilon/3}$ for $\mathcal{O}_{{N, R}}$ obeying $|\mathcal{O}_{{N, R}}^{\epsilon/3}| \leq (9/\epsilon)^{NR}$. Similarly, let $\mathcal{P}_{M,R} = \{\vect{V} \in \mathbb{R}^{M \times R} : \vect{V}^\textrm{T} \vect{V} = \vect{I} \}$.  By the same argument, there is an $\epsilon/3$-net $\mathcal{P}_{M, R}^{\epsilon/3}$ for $\mathcal{P}_{M, R}$ obeying $|\mathcal{P}_{{M, R}}^{\epsilon/3}| \leq (9/\epsilon)^{MR}$. We now let $\mathcal{S}^\epsilon = \{ \bar{\vect{U}} \bar{\vect{\Sigma}} \bar{\vect{V}}^\textrm{T} : \bar{\vect{U}} \in \mathcal{O}_{{N, R}}^{\epsilon/3}, \bar{\vect{V}} \in \mathcal{P}_{{M, R}}^{\epsilon/3}, \bar{\vect{\Sigma}} \in \mathcal{D}^{\epsilon/3}\}$, and remark $|\mathcal{S}^\epsilon| \leq |\mathcal{O}_{{N, R}}^{\epsilon/3}| |\mathcal{P}_{{M, R}}^{\epsilon/3}| |\mathcal{D}^{\epsilon/3}| \leq (9/\epsilon)^{(N+M+1)R}$. It remains to show that for all $\vect{\Theta} \in \mathcal{S}$, there exists $\bar{\vect{\Theta}} \in \mathcal{S}^\epsilon$ with $\|\vect{\Theta} - \bar{\vect{\Theta}}\|_\mathrm{F} \leq \epsilon$.

Fix $\vect{\Theta} \in \mathcal{S}$ and decompose it as $\vect{\Theta} = \vect{U} \vect{\Sigma} \vect{V}^\textrm{T}$. Then, there exists $\bar{\vect{\Theta}} = \bar{\vect{U}} \bar{\vect{\Sigma}} \bar{\vect{V}}^\textrm{T} \in \mathcal{S}^\epsilon$ with $\bar{\vect{U}} \in \mathcal{O}_{{N, R}}^{\epsilon/3}, \bar{\vect{V}} \in \mathcal{P}_{{M, R}}^{\epsilon/3}, \bar{\vect{\Sigma}} \in \mathcal{D}^{\epsilon/3}$ satisfying $\|\vect{U} - \bar{\vect{U}}\|_{1, 2} \leq \epsilon/3$, $\|\vect{V} - \bar{\vect{V}}\|_{1, 2} \leq \epsilon/3$ and $\|\vect{\Sigma} - \bar{\vect{\Sigma}}\|_{\mathrm{F}} \leq \epsilon/3$. This gives 
\begin{align*}
    \|\vect{\Theta} - \bar{\vect{\Theta}}\|_{\mathrm{F}} &= \|\vect{U} \vect{\Sigma} \vect{V}^\textrm{T} - \bar{\vect{U}} \bar{\vect{\Sigma}} \bar{\vect{V}}^\textrm{T}\|_{\mathrm{F}}\\
    &= \|\vect{U} \vect{\Sigma} \vect{V}^\textrm{T} - \bar{\vect{U}} \vect{\Sigma} \vect{V}^\textrm{T} + \bar{\vect{U}} \vect{\Sigma} \vect{V}^\textrm{T} - \bar{\vect{U}} \bar{\vect{\Sigma}} \vect{V}^\textrm{T} + \bar{\vect{U}} \bar{\vect{\Sigma}} \vect{V}^\textrm{T} - \bar{\vect{U}} \bar{\vect{\Sigma}} \bar{\vect{V}}^\textrm{T}\|_{\mathrm{F}} \\
    &\leq \|(\vect{U} - \bar{\vect{U}}) \vect{\Sigma} \vect{V}^\textrm{T}\|_{\mathrm{F}} + \|\bar{\vect{U}} (\vect{\Sigma} - \bar{\vect{\Sigma}}) \vect{V}^\textrm{T}\|_{\mathrm{F}} + \|\bar{\vect{U}} \bar{\vect{\Sigma}} (\vect{V} - \bar{\vect{V}})^\textrm{T} \|_{\mathrm{F}}
\end{align*}

For the first term, since $\vect{V}$ is an orthogonal matrix,
\begin{align*}
    \|(\vect{U} - \bar{\vect{U}}) \vect{\Sigma} \vect{V}^\textrm{T}\|^2_{\mathrm{F}} &= \|(\vect{U} - \bar{\vect{U}}) \vect{\Sigma} \|^2_{\mathrm{F}} \\
    &\leq \|\Sigma\|^2_{\mathrm{F}} \|\vect{U} - \bar{\vect{U}} \|^2_{1, 2} \leq (\epsilon/3)^2
\end{align*}
By the same argument, $\|\bar{\vect{U}} \bar{\vect{\Sigma}} (\vect{V} - \bar{\vect{V}})^\textrm{T} \|_{\mathrm{F}} \leq \epsilon/3$ as well. Lastly, $\|\bar{\vect{U}} (\vect{\Sigma} - \bar{\vect{\Sigma}}) \vect{V}^\textrm{T}\|_{\mathrm{F}} = \|\vect{\Sigma} - \bar{\vect{\Sigma}}\|_{\mathrm{F}} \leq \epsilon / 3$. Therefore, $\|\vect{\Theta} - \bar{\vect{\Theta}}\|_{\mathrm{F}} \leq \epsilon$, showing that $\mathcal{S}^\epsilon$ is an $\epsilon$-net for $\mathcal{S}$ with respect to the Frobenious norm.

Next, we will construct an $\alpha$-net for $\mathcal{L}$ given in equation \ref{low_l}. Let $\kappa = \sqrt{NM}$. We start by noting that for all $\vect{\Theta} \in \mathcal{L}$, the Frobenious norm obeys $\|\vect{\Theta}\|_\mathrm{F} \leq \kappa$. Then, define $\vect{X} = \frac{1}{\kappa} \vect{\Theta} \in \mathcal{S}$ and $\mathcal{L}^\alpha := \left\{ \kappa \bar{\vect{X}} : \bar{\vect{X}} \in \mathcal{S}^\epsilon \right\}$. We previously showed that for any $\vect{X} \in \mathcal{S}$,  there exists $\bar{\vect{X}} \in \mathcal{S}^\epsilon$ such that $\|\vect{X} - \bar{\vect{X}}\|_\mathrm{F} \leq \epsilon$. Therefore, for any $\vect{\Theta} \in \mathcal{L}$,  there exists $\bar{\vect{\Theta}} = \kappa \bar{\vect{X}} \in \mathcal{L}^\alpha$ such that $\|\vect{\Theta} - \bar{\vect{\Theta}}\|_\mathrm{F} \leq \kappa \epsilon$. Setting $\epsilon = \alpha / \kappa$, we obtain that $\mathcal{L}^\alpha$ is an $\alpha$-net for $\mathcal{L}$ with respect to the Frobenious norm. Finally, the size of $\mathcal{L}^\alpha$ obeys
\begin{equation*}
    |\mathcal{L}^\alpha| = |\mathcal{S}^{\alpha / \kappa}| \leq \left(9 \kappa / \alpha \right)^{(N+M+1)R}
\end{equation*} 
This completes the proof.
\end{proof}

\section{PROOFS FOR CONTEXTUAL ALLOCATION AND PRICING}

Let $\vect{\Theta}^* \in \mathcal{F} := \{ \vect{\Theta} \in \mathbb{R}^{M \times R} : \exists \vect{F} \in \mathbb{R}^{N \times R}, \vect{\Theta} = \vect{F} \vect{\Phi}^\mathrm{T}, \|\vect{F}\|_{2, \infty} \leq 1\}$ for some known item feature matrix $\vect{\Phi} \in \mathbb{R}^{M \times R}$ that satisfy $\|\vect{\Phi}\|_{2, \infty} \leq 1$. This condition is the same as having each row $[\vect{\Theta}^*]_{u,:}$ belonging to the set $ \widetilde{\mathcal{F}} := \{ \vect{\theta} : \exists \vect{f} \in \mathds{R}^{R}, \vect{\theta} = \vect{\Phi} \vect{f}, \|\vect{f}\|_2 \leq 1\}$. In other words,
\begin{equation*}
    \vect{\Theta}^* \in \mathcal{F} \iff  [\vect{\Theta}^*]_{u,:} \in \widetilde{\mathcal{F}} , \forall u \in \mathcal{N}
\end{equation*}

Hence, we can construct separate confidence sets for each row of the matrix $\vect{\Theta}^*$. The confidence sets that we construct are centered around the regularized least square estimates. For each user $u$, we let the cumulative squared prediciton error at time $t$ be
\begin{equation}
    L_{2,u}^{t}(\vect{\theta}) = \sum_{\tau=1}^{t-1} \sum_{i : (u,i) \in \mathcal{A}_\tau} (\theta_i  - R^{\tau}_{ui})^2
\end{equation}
and define the regularized least squares estimate at time $t$ as
\begin{equation}
    \widehat{\vect{\theta}}^t_u = \argmin_{\vect{\theta} \in \widetilde{\mathcal{F}}} \left \{ L^{t}_{2,u}(\vect{\theta}) + \gamma \|\vect{\theta} - \vect{\theta}^{\circ}_{u} \|_2^2 \right \}
\end{equation}

where $\vect{\theta}^{\circ}_{u} = [\vect{\Theta}_{\circ}]_{u, :}$ is the $u$-th row of $\vect{\Theta}_{\circ}$. Then, the confidence sets take the form $\mathcal{C}^t_u := \{ \vect{\theta} \in \widetilde{\mathcal{F}} : \|\vect{\theta} - \widehat{\vect{\theta}}^t_u \|_{E^{t}_{2,u}} \leq \sqrt{\rho_t}\}$ where $\rho_t$ is an appropriately chosen confidence parameter, and the regularized empirical $L_2$-norm $\| \cdot \|_{E^{t}_{2, u}}$ of user $u$ is defined by 
\begin{equation*}
    \| \vect{\delta} \|_{E^{t}_{2, u}}^2 := \sum_{\tau=1}^{t-1} \sum_{i : (u, i) \in \mathcal{A}_\tau} \langle \vect{\delta}, \vect{e}_{i} \rangle^2 + \gamma \| \vect{\delta} \|_{2}^2 =  \sum_{i=1}^{d} (n^{t}_{ui} + \gamma) (\delta_{i})^2
\end{equation*}

Similar to our analysis in the low-rank setting, we start with providing the following guarantee for the confidence regions $\mathcal{C}^t_u$. First, we let 
\begin{equation*}
    \rho_t^*(\delta, \alpha, \gamma) := 8 R \eta^2 \log \left(3 N / (\alpha \delta) \right) + 2 \alpha t M \left [ 8 B + \sqrt{8 \eta^2 \log(4MNt^2/\delta)} \right]  + 4 \gamma G^2.
\end{equation*}

\begin{lemma} For any $\delta > 0$, $\alpha > 0$ and $\gamma > 0$, if
\begin{equation}
    \mathcal{C}^t_{u} = \{ \vect{\theta} \in \widetilde{\mathcal{F}} : \|\vect{\theta} - \widehat{\vect{\theta}}^t_u \|_{E^{t}_{2,u}} \leq \sqrt{ \rho_t^*(\delta, \alpha, \gamma)}\}
\end{equation}
for all $t \in \mathbb{N}$, then
\begin{equation}
    \mathds{P} \left( \vect{\theta}_u^* \in \mathcal{C}^t_{u} \quad ,\forall t \in \mathbb{N} \right) \geq 1 - 2 \delta / N
\end{equation}
\label{lemma_confidence_contextual_user}

\end{lemma}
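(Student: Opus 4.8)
The strategy is to replay the three-step argument behind Lemmas~\ref{lemma_lower_bound}, \ref{discr_lemma} and~\ref{lemma_confidence}, but carried out separately for each row $\vect{\theta}^*_u$ of $\vect{\Theta}^*$ and with the confidence level sharpened to $\delta/N$, so that a union bound over the $N$ users recovers the global statement used in Lemma~\ref{conf_sets_lemma_context}. Fix a user $u$ and set $\epsilon^{\tau}_{ui} := R^{\tau}_{ui} - \Theta^*_{ui}$. The same exponential-supermartingale computation as in the proof of Lemma~\ref{lemma_lower_bound}, applied to the scalar process $Z^{\tau} := \sum_{i:(u,i)\in\mathcal{A}_\tau}\big[(R^{\tau}_{ui}-\Theta^*_{ui})^2-(R^{\tau}_{ui}-\theta_i)^2\big]$ and using that each $R^{\tau}_{ui}$ is $\eta$-sub-Gaussian given the history, yields: for every fixed $\vect{\theta}\in\widetilde{\mathcal{F}}$, with probability at least $1-\delta'$, $L^t_{2,u}(\vect{\theta}) \ge L^t_{2,u}(\vect{\theta}^*_u) + \tfrac12\|\vect{\theta}^*_u-\vect{\theta}\|^2_{\widetilde{E}^t_{2,u}} - 4\eta^2\log(1/\delta')$ simultaneously for all $t$, where $\|\vect{\delta}\|^2_{\widetilde{E}^t_{2,u}}=\sum_{\tau=1}^{t-1}\sum_{i:(u,i)\in\mathcal{A}_\tau}\delta_i^2$ is the unregularized per-user empirical $L_2$ norm.

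Next I would cover $\widetilde{\mathcal{F}}$. Since $\widetilde{\mathcal{F}}=\{\vect{\Phi}\vect{f}:\|\vect{f}\|_2\le 1\}$ and $\|\vect{\phi}_i\|_2\le 1$ for every $i$, we have $\|\vect{\Phi}(\vect{f}-\vect{f}')\|_\infty\le\|\vect{f}-\vect{f}'\|_2$, so the image under $\vect{f}\mapsto\vect{\Phi}\vect{f}$ of a Euclidean $\alpha$-cover of the unit ball of $\mathbb{R}^R$ (of size at most $(3/\alpha)^R$) is an $\alpha$-cover $\widetilde{\mathcal{F}}^\alpha$ of $\widetilde{\mathcal{F}}$ in the sup-norm. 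A union bound of the previous paragraph over $\widetilde{\mathcal{F}}^\alpha$ at level $\delta'=\delta/N$ gives, with probability $\ge 1-\delta/N$, the lower bound for all $\vect{\theta}^\alpha\in\widetilde{\mathcal{F}}^\alpha$ and all $t$. In parallel, the per-user analogue of Lemma~\ref{discr_lemma}---bounding $\sum_i|\langle\vect{\theta},\vect{e}_i\rangle^2-\langle\vect{\theta}^\alpha,\vect{e}_i\rangle^2|$ and $\sum_i|(R^{\tau}_{ui}-\theta_i)^2-(R^{\tau}_{ui}-\theta^\alpha_i)^2|$ coordinatewise, summing over $\tau=1,\dots,t-1$ and $i\in[M]$, and controlling $|R^{\tau}_{ui}|\le 1+|\epsilon^{\tau}_{ui}|$ through a sub-Gaussian tail bound union-bounded over $(\tau,i)$ at level $\delta/N$---shows that, on an event of probability $\ge 1-\delta/N$, for any $\vect{\theta}$ and a cover point $\vect{\theta}^\alpha$ within sup-distance $\alpha$ the quantity $\big|\tfrac12\|\vect{\theta}^*_u-\vect{\theta}^\alpha\|^2_{\widetilde{E}^t_{2,u}}-\tfrac12\|\vect{\theta}^*_u-\vect{\theta}\|^2_{\widetilde{E}^t_{2,u}}+L^t_{2,u}(\vect{\theta})-L^t_{2,u}(\vect{\theta}^\alpha)\big|$ is at most $\alpha t M\big[8 + \sqrt{8\eta^2\log(4MNt^2/\delta)}\big]$ (the factor $M$, rather than $\sqrt M$, appears because we exploit sup-norm closeness over $M$ coordinates).

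On the intersection of these two events (probability $\ge 1-2\delta/N$) I would then conclude exactly as in the proof of Lemma~\ref{lemma_confidence}: for all $\vect{\theta}\in\widetilde{\mathcal{F}}$ and all $t$, $L^t_{2,u}(\vect{\theta})-L^t_{2,u}(\vect{\theta}^*_u)\ge\tfrac12\|\vect{\theta}^*_u-\vect{\theta}\|^2_{\widetilde{E}^t_{2,u}}-D^u_t$ with $D^u_t:=4\eta^2\log(|\widetilde{\mathcal{F}}^\alpha|N/\delta)+\alpha tM\big[8+\sqrt{8\eta^2\log(4MNt^2/\delta)}\big]$. Adding $\gamma\|\cdot-\vect{\theta}^{\circ}_u\|_2^2$ to both sides and evaluating at $\vect{\theta}=\widehat{\vect{\theta}}^t_u$ makes the left-hand side nonpositive by the definition of the regularized least-squares estimate; then the triangle inequality $\|\widehat{\vect{\theta}}^t_u-\vect{\theta}^{\circ}_u\|_2^2+\|\vect{\theta}^*_u-\vect{\theta}^{\circ}_u\|_2^2\ge\tfrac12\|\vect{\theta}^*_u-\widehat{\vect{\theta}}^t_u\|_2^2$, the identity $\|\cdot\|^2_{E^t_{2,u}}=\|\cdot\|^2_{\widetilde{E}^t_{2,u}}+\gamma\|\cdot\|_2^2$, and the row bound $\|\vect{\theta}^*_u-\vect{\theta}^{\circ}_u\|_2\le\|\vect{\Theta}^*-\vect{\Theta}_\circ\|_{\mathrm{F}}\le G$ give $\|\vect{\theta}^*_u-\widehat{\vect{\theta}}^t_u\|^2_{E^t_{2,u}}\le 2D^u_t+4\gamma G^2$. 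Using $|\widetilde{\mathcal{F}}^\alpha|\le(3/\alpha)^R$, so that $\log(|\widetilde{\mathcal{F}}^\alpha|N/\delta)\le R\log(3N/(\alpha\delta))$ when $R\ge1$, the right side is at most $\rho^*_t(\delta,\alpha,\gamma)$, which is the claim.

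The routine ingredients (the supermartingale estimate and the coordinatewise discretization bounds) transfer essentially verbatim from the low-rank proofs; the only genuinely new point, and the step I would be most careful about, is the covering estimate---checking that parametrizing $\widetilde{\mathcal{F}}$ through $\vect{f}\mapsto\vect{\Phi}\vect{f}$ and measuring in $\|\cdot\|_\infty$ makes the cover cost only $(3/\alpha)^R$ with no $M$ inside the logarithm (this is exactly where $\|\vect{\phi}_i\|_2\le1$ is used), together with keeping the factor $1/N$ threaded through both high-probability events so that the eventual union bound over users closes at total failure probability $2\delta$.
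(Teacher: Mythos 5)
Your proof is correct, and its skeleton is the same as the paper's: the paper also reduces the claim to the single-row instance of Lemma~\ref{lemma_confidence} (i.e.\ the chain Lemma~\ref{lemma_lower_bound} $\to$ Lemma~\ref{discr_lemma} $\to$ regularized least-squares comparison), applied to user $u$'s data at confidence level $\delta/N$, with the only new ingredient being the covering-number bound $\mathcal{N}(\widetilde{\mathcal{F}},\alpha,\cdot)\leq(3/\alpha)^R$ obtained by pushing a Euclidean $\alpha$-cover of the unit ball in $\mathbb{R}^R$ through $\vect{f}\mapsto\vect{\Phi}\vect{f}$. The one place you genuinely diverge is the norm in which the cover is measured. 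The paper covers $\widetilde{\mathcal{F}}$ in the $L_2$ norm and asserts $\|\vect{\Phi}(\vect{f}-\vect{f}_\alpha)\|_2\leq\|\vect{f}-\vect{f}_\alpha\|_2$, which strictly speaking requires the spectral norm of $\vect{\Phi}$ to be at most one; the stated row bounds $\|\vect{\phi}_i\|_2\leq1$ only give a $\sqrt{M}$-Lipschitz map, so the paper's cover is really at scale $\alpha\sqrt{M}$ (a harmless $\log\sqrt{M}$ inflation hidden in the constants). You instead cover in the sup-norm, where $1$-Lipschitzness does follow from the row bounds, and you correctly track the consequence: the per-step discretization error picks up a factor $M$ rather than $\sqrt{M}$, yielding $2\alpha t M\bigl[8+\sqrt{8\eta^2\log(4MNt^2/\delta)}\bigr]$, which is exactly the form of $\rho_t^*$ as defined in the appendix immediately before Lemma~\ref{lemma_confidence_contextual_user} (with $B=1$), though not the $\sqrt{M}$ variant quoted in the main text's Lemma~\ref{conf_sets_lemma_context}. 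So your route is marginally more careful where the paper is loose, at the cost of a slightly larger (but still compliant with the appendix's $\rho_t^*$) discretization term; everything else, including the allocation of $\delta/N$ across the martingale and reward-tail events so the union bound over users closes at $2\delta$, matches the paper's argument.
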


\begin{proof}

We start by computing the covering number $\mathcal{N}(\widetilde{\mathcal{F}}, \alpha, \| \cdot \|_{\text{2}})$. Let $\mathcal{B}^{\alpha}$ be an $\alpha$ covering of the ball $\mathcal{B} := \{ \vect{f} : \|\vect{f}\|_2 \leq 1\}$ in $L_2$-norm. Then, for any $\vect{\theta} \in \widetilde{\mathcal{F}}$, there exists $\vect{f} \in \mathcal{B}$ such that $\vect{\theta} = \vect{\Phi}^T \vect{f}$ and $\vect{f}_{\alpha} \in \mathcal{B}^{\alpha}$ such that $\|\vect{f}_{\alpha} - \vect{f}\|_2 \leq \alpha$. Hence, for $\vect{\theta}_\alpha = \vect{\Phi}^T \vect{f}_\alpha$, we have $\|\vect{\theta} - \vect{\theta}_\alpha\|_2 = \|\vect{\Phi}^T (\vect{f} - \vect{f}_\alpha)\|_2 \leq \alpha$. Therefore, the set $\widetilde{\mathcal{F}}^{\alpha} := \{ \vect{\theta} : \vect{\theta} = \vect{\Phi}^T \vect{f}, \vect{f} \in  \mathcal{B}^{\alpha}\}$ is an $\alpha$ cover of $\widetilde{\mathcal{F}}$ in $L_2$-norm. Taking the minimum over the size of covers, we obtain $\mathcal{N}(\widetilde{\mathcal{F}}, \alpha, \| \cdot \|_{\text{2}}) \leq (3/\alpha)^R$. Then, the result follows from Lemma \ref{lemma_confidence} by setting $N = 1$ (since we are only considering a single row) and writing $\mathcal{N}(\widetilde{\mathcal{F}}, \alpha, \| \cdot \|_{\text{2}}) \leq (3/\alpha)^R$.

\end{proof}

Now, in order to combine the guarantees for confidence intervals of different users in a single expression, let us define empirical $L_{2, \infty}$-norm as 

\begin{equation*}
    \|\vect{\Delta}\|_{E^t_{2, \infty}} := \max_{u \in \mathcal{N}} \| [\vect{\Delta}]_{u,:} \|_{E^{t}_{2, u}}
\end{equation*}

In other words, $\|\vect{\Delta}\|_{E^t_{2, \infty}}$ is equal to the maximum of emprical $L_2$-norms for the rows of $\vect{\Delta}$.

Then, we have the following guarantee

\begin{lemma} For any $\delta > 0$, $\alpha > 0$ and $\gamma > 0$, if
\begin{equation}
    \mathcal{C}_t = \{ \vect{\Theta} \in \mathcal{F} : \|\vect{\Theta} - \widehat{\vect{\Theta}}_t \|_{E^t_{2, \infty}} \leq \sqrt{ \rho_t^*(\delta, \alpha, \gamma)}\}
\end{equation}
for all $t \in \mathbb{N}$, then
\begin{equation}
    \mathds{P} \left( \vect{\Theta}^* \in \mathcal{C}_t \quad ,\forall t \in \mathbb{N} \right) \geq 1 - 2\delta
\end{equation}
\label{lemma_confidence_contextual}
\end{lemma}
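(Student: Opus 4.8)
The plan is to lift the per-user confidence guarantee of Lemma~\ref{lemma_confidence_contextual_user} to a guarantee on the whole matrix $\vect{\Theta}^*$ by exploiting the fact that the contextual problem decouples across users, and then to pay a union bound over the $N$ users.

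First I would show that the regularized least squares estimate computed in line~\ref{alg:line:least_squares} of Algorithm~\ref{alg_low} separates row by row, so that $[\widehat{\vect{\Theta}}_t]_{u,:} = \widehat{\vect{\theta}}^t_u$ for every $u \in \mathcal{N}$. This follows from three observations: (i) the constraint $\vect{\Theta} \in \mathcal{F}$ is equivalent to $[\vect{\Theta}]_{u,:} \in \widetilde{\mathcal{F}}$ for all $u$, as recorded at the start of this section; (ii) the squared prediction error decomposes as $L_{2,t}(\vect{\Theta}) = \sum_{u \in \mathcal{N}} L^{t}_{2,u}([\vect{\Theta}]_{u,:})$, since the terms indexed by $\mathcal{A}_\tau$ can be grouped by user; and (iii) the regularizer decomposes as $\|\vect{\Theta} - \vect{\Theta}_\circ\|_{\mathrm{F}}^2 = \sum_{u \in \mathcal{N}} \|[\vect{\Theta}]_{u,:} - \vect{\theta}^{\circ}_{u}\|_2^2$. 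Hence the joint minimization splits into $N$ independent minimizations whose solutions are exactly the $\widehat{\vect{\theta}}^t_u$.

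Next I would unfold the definition of the empirical $L_{2,\infty}$-norm, $\|\vect{\Delta}\|_{E^t_{2,\infty}} = \max_{u \in \mathcal{N}} \|[\vect{\Delta}]_{u,:}\|_{E^{t}_{2,u}}$, to conclude that the membership $\vect{\Theta}^* \in \mathcal{C}_t$ is equivalent to the simultaneous membership $\vect{\theta}_u^* \in \mathcal{C}^t_u$ for every $u \in \mathcal{N}$ (here using both the row decomposition of the norm and the row characterization of $\mathcal{F}$ versus $\widetilde{\mathcal{F}}$). Therefore the event $\{\vect{\Theta}^* \in \mathcal{C}_t \text{ for all } t \in \mathbb{N}\}$ coincides with $\bigcap_{u \in \mathcal{N}} \{\vect{\theta}_u^* \in \mathcal{C}^t_u \text{ for all } t \in \mathbb{N}\}$. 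Applying Lemma~\ref{lemma_confidence_contextual_user} to each $u$, the complement of the $u$-th event has probability at most $2\delta/N$, and a union bound over the $N$ users gives that the complement of the intersection has probability at most $2\delta$, which is precisely the claim.

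The argument is essentially bookkeeping: the only place needing genuine attention is the row-wise separability in the second step --- in particular, that the feasible set $\mathcal{F}$ is a product of copies of $\widetilde{\mathcal{F}}$ so that the constrained minimization decouples --- after which the union bound is immediate. I do not anticipate any substantive obstacle here.
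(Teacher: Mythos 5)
Your proposal is correct and follows essentially the same route as the paper's proof: you decompose the regularized least-squares objective (loss, regularizer, and constraint set $\mathcal{F}$) row by row so that $[\widehat{\vect{\Theta}}_t]_{u,:} = \widehat{\vect{\theta}}^t_u$, observe that membership in $\mathcal{C}_t$ under the $\|\cdot\|_{E^t_{2,\infty}}$ norm is equivalent to simultaneous membership $\vect{\theta}^*_u \in \mathcal{C}^t_u$ for every user, and then apply a union bound over the $N$ users to Lemma~\ref{lemma_confidence_contextual_user}. Your write-up is in fact slightly more explicit than the paper's about the product structure of $\mathcal{F}$ and the row-wise reading of the norm, but the argument is the same.
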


\begin{proof}
We recall the definition of $\widehat{\vect{\Theta}}_t$:
\begin{equation}
    \widehat{\vect{\Theta}}_t = \argmin_{\vect{\Theta} \in \mathcal{F}} \left \{ L_{2,t}(\vect{\Theta}) + \gamma \|\vect{\Theta} - \overline{\vect{\Theta}} \|_\mathrm{F}^2 \right \}.
\end{equation}
Furthermore, note that $L_{2,t}(\vect{\Theta}) = \sum_{u \in \mathcal{N}} L^{t}_{2,u}([\vect{\Theta}]_{u,:})$ and $\|\vect{\Theta} - \overline{\vect{\Theta}} \|_\mathrm{F}^2 =  \sum_{u \in \mathcal{N}} \|[\vect{\Theta}]_{u, :} - \vect{\theta}^{\circ}_{u} \|_2^2$. Therefore, the rows of $\widehat{\vect{\Theta}}_t$ are equal to $\widehat{\vect{\theta}}^t_u$. Hence, by a union bound applied to Lemma \ref{lemma_confidence_contextual_user}, the result follows.
\end{proof}

Now, we continue with showing an analog of Lemma \ref{lemma_conf_width} for confidence regions $\mathcal{C}_t$

\begin{lemma}
Let $\vect{X}_t \in \mathcal{X}_t$ and $\vect{\Theta} \in \mathcal{C}_t$. Then,
\begin{equation}
    |\langle \vect{\Theta} - \widehat{\vect{\Theta}}^t, \vect{X}_t \rangle| \leq w_t \sqrt{|\mathcal{N}_t| \rho_t}
\end{equation}
where $w_t = \| \vect{X}_t \|_{\vect{A}_t^{-1}}$ is the "confidence width" of an action $\vect{X_t}$ at time $t$.
\label{conf_width_context}
\end{lemma}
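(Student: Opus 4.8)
The plan is to exploit the fact that the contextual confidence set $\mathcal{C}_t$ is defined through the $L_{2,\infty}$ empirical norm, i.e.\ a separate (per-user) constraint on each row. Write $\vect{\Delta} := \vect{\Theta} - \widehat{\vect{\Theta}}^t$ and expand the trace inner product row-wise, $\langle \vect{\Delta}, \vect{X}_t \rangle = \sum_{u \in \mathcal{N}} \langle [\vect{\Delta}]_{u,:}, [\vect{X}_t]_{u,:} \rangle$. Since $\vect{X}_t \in \mathcal{X}_t$ satisfies $\vect{X}_t \mathds{1} \le \vect{d}_t$ and $d^t_u = 0$ for every inactive user $u \notin \mathcal{N}_t$, the corresponding rows of $\vect{X}_t$ are identically zero, so the sum effectively runs only over $u \in \mathcal{N}_t$. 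This restriction is exactly what produces the factor $|\mathcal{N}_t|$ (rather than $N$) in the bound.

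The core is then a two-stage Cauchy--Schwarz argument. For each active user $u$, let $\vect{A}_{t,u} \in \mathbb{R}^{M\times M}$ be the diagonal matrix with entries $n^t_{ui} + \gamma$, so that $\|\vect{\delta}\|_{E^t_{2,u}} = \|\vect{\delta}\|_{\vect{A}_{t,u}}$ for $\vect{\delta}\in\mathbb{R}^M$. Inserting $\vect{A}_{t,u}^{1/2}\vect{A}_{t,u}^{-1/2} = \vect{I}$ and applying Cauchy--Schwarz gives $|\langle [\vect{\Delta}]_{u,:}, [\vect{X}_t]_{u,:} \rangle| \le \|[\vect{\Delta}]_{u,:}\|_{E^t_{2,u}}\,\|[\vect{X}_t]_{u,:}\|_{\vect{A}_{t,u}^{-1}}$, exactly as in the proof of Lemma~\ref{lemma_conf_width}. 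Because $\vect{\Theta}\in\mathcal{C}_t$ forces $\|[\vect{\Delta}]_{u,:}\|_{E^t_{2,u}} \le \sqrt{\rho_t}$ for every $u$, summing over $u\in\mathcal{N}_t$ yields $|\langle \vect{\Delta}, \vect{X}_t \rangle| \le \sqrt{\rho_t}\sum_{u\in\mathcal{N}_t}\|[\vect{X}_t]_{u,:}\|_{\vect{A}_{t,u}^{-1}}$.

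Finally, apply Cauchy--Schwarz a second time across the users, $\sum_{u\in\mathcal{N}_t}\|[\vect{X}_t]_{u,:}\|_{\vect{A}_{t,u}^{-1}} \le \sqrt{|\mathcal{N}_t|}\bigl(\sum_{u\in\mathcal{N}_t}\|[\vect{X}_t]_{u,:}\|_{\vect{A}_{t,u}^{-1}}^2\bigr)^{1/2}$, and identify the inner sum: $\sum_{u\in\mathcal{N}_t}\|[\vect{X}_t]_{u,:}\|_{\vect{A}_{t,u}^{-1}}^2 = \sum_{u,i}(X^t_{ui})^2/(n^t_{ui}+\gamma) = \|\vect{X}_t\|_{\vect{A}_t^{-1}}^2 = w_t^2$, where the last equalities use that the global diagonal matrix $\vect{A}_t$ is exactly the direct sum of the per-user blocks $\vect{A}_{t,u}$ and that $X^t_{ui}=0$ for $u\notin\mathcal{N}_t$. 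Chaining the three displays gives $|\langle \vect{\Theta} - \widehat{\vect{\Theta}}^t, \vect{X}_t \rangle| \le w_t\sqrt{|\mathcal{N}_t|\,\rho_t}$.

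I do not expect a genuine obstacle here; the only delicate points are bookkeeping ones: recognizing that the sum collapses to the active users $\mathcal{N}_t$ (which yields the sharper $|\mathcal{N}_t|$ factor in place of $N$) and verifying that the per-user weighting matrices $\vect{A}_{t,u}$ reassemble consistently into the matrix $\vect{A}_t$ that defines $w_t$. Conceptually this is the contextual analogue of Lemma~\ref{lemma_conf_width}, with the single global Cauchy--Schwarz replaced by a row-wise split followed by a second Cauchy--Schwarz over users, reflecting that the confidence radius $\sqrt{\rho_t}$ is now controlled row-by-row rather than globally.
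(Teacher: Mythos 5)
Your proposal is correct and follows essentially the same route as the paper: the paper's proof also decomposes the inner product over active users, applies a per-row Cauchy--Schwarz weighted by $(n^t_{ui}+\gamma)$, invokes the row-wise radius $\sqrt{\rho_t}$ from the $L_{2,\infty}$ confidence set, and finishes with a second Cauchy--Schwarz over $\mathcal{N}_t$ to produce $w_t\sqrt{|\mathcal{N}_t|\rho_t}$. The only cosmetic difference is that the paper works directly with the sums $\sum_{i:(u,i)\in\mathcal{A}_t}|\Delta_{ui}|$ rather than your block-matrix notation $\vect{A}_{t,u}$, which amounts to the same computation.
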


\begin{proof} Let $\vect{\Delta} = \vect{\Theta} - \widehat{\vect{\Theta}}^t$. Then,
\begin{align*}
    |\langle \vect{\Delta} , \vect{X}_t \rangle| &= |\sum_{u \in \mathcal{N}_t} \sum_{i : (u,i) \in \mathcal{A}_t} \Delta_{ui}| \\
    &\leq \sum_{u \in \mathcal{N}_t} \sum_{i : (u,i) \in \mathcal{A}_t} |\Delta_{ui}| \\
    &\leq \sum_{u \in \mathcal{N}_t} \left ( \sum_{i : (u,i) \in \mathcal{A}_t} (n_{ui}^t + \gamma) \Delta_{ui}^2 \right )^{1/2}\left ( \sum_{i : (u,i) \in \mathcal{A}_t} \frac{1}{(n_{ui}^t + \gamma)} \right )^{1/2} \\
    &\leq \sum_{u \in \mathcal{N}_t} \sqrt{\rho_t} \left (\sum_{i : (u,i) \in \mathcal{A}_t} \frac{1}{(n_{ui}^t + \gamma)} \right )^{1/2} \\
    &\leq \sqrt{\rho_t} \sum_{u \in \mathcal{N}_t} \left (\sum_{i : (u,i) \in \mathcal{A}_t} \frac{1}{(n_{ui}^t + \gamma)} \right )^{1/2} \\
    &\leq \sqrt{\rho_t} \left ( |\mathcal{N}_t| \sum_{u \in \mathcal{N}_t}  \sum_{i : (u,i) \in \mathcal{A}_t} \frac{1}{(n_{ui}^t + \gamma)} \right )^{1/2} \\
    &\leq  w_t \sqrt{|\mathcal{N}_t| \rho_t} \\
\end{align*}

\end{proof}

\begin{lemma}
Define the regret at time $t$ as $r_t := \langle \vect{X}_t^*, \vect{\Theta}^* \rangle - \langle \vect{X}_t, \vect{\Theta}^* \rangle$. If $\vect{\Theta}^* \in \mathcal{C}_t$, then
\begin{equation*}
    r_t \leq \langle \vect{X}_t, \vect{\Theta}_t - \vect{\Theta}^* \rangle \leq 2 w_t \sqrt{|\mathcal{N}_t| \rho_t}
\end{equation*}
\label{lemma_regret_ub_contextual}
\end{lemma}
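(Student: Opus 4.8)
The plan is to mirror the argument of Lemma~\ref{lemma_regret_ub} from the low-rank setting, substituting the confidence-width bound of Lemma~\ref{conf_width_context} for that of Lemma~\ref{lemma_conf_width}. First I would invoke the OFU selection rule: since $(\vect{X}_t, \vect{\Theta}_t) \in \argmax_{(\vect{X}, \vect{\Theta}) \in \mathcal{X}_t \times \mathcal{C}_t} \langle \vect{X}, \vect{\Theta} \rangle$ and, by the hypothesis $\vect{\Theta}^* \in \mathcal{C}_t$, the pair $(\vect{X}_t^*, \vect{\Theta}^*)$ is feasible for this maximization, we get $\langle \vect{X}_t, \vect{\Theta}_t \rangle \geq \langle \vect{X}_t^*, \vect{\Theta}^* \rangle$. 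Rearranging gives $r_t = \langle \vect{X}_t^*, \vect{\Theta}^* \rangle - \langle \vect{X}_t, \vect{\Theta}^* \rangle \leq \langle \vect{X}_t, \vect{\Theta}_t \rangle - \langle \vect{X}_t, \vect{\Theta}^* \rangle = \langle \vect{X}_t, \vect{\Theta}_t - \vect{\Theta}^* \rangle$, which is the first inequality.

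For the second inequality I would split through the least-squares center $\widehat{\vect{\Theta}}^t$:
\begin{equation*}
\langle \vect{X}_t, \vect{\Theta}_t - \vect{\Theta}^* \rangle = \langle \vect{X}_t, \vect{\Theta}_t - \widehat{\vect{\Theta}}^t \rangle + \langle \vect{X}_t, \widehat{\vect{\Theta}}^t - \vect{\Theta}^* \rangle .
\end{equation*}
Both $\vect{\Theta}_t$ and $\vect{\Theta}^*$ lie in $\mathcal{C}_t$ (the former by its definition, the latter by hypothesis), so Lemma~\ref{conf_width_context} applies to each term and bounds each of $|\langle \vect{X}_t, \vect{\Theta}_t - \widehat{\vect{\Theta}}^t \rangle|$ and $|\langle \vect{X}_t, \widehat{\vect{\Theta}}^t - \vect{\Theta}^* \rangle|$ by $w_t \sqrt{|\mathcal{N}_t| \rho_t}$. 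Adding the two bounds yields $\langle \vect{X}_t, \vect{\Theta}_t - \vect{\Theta}^* \rangle \leq 2 w_t \sqrt{|\mathcal{N}_t| \rho_t}$, completing the proof.

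There is no real obstacle here; the lemma is a direct transcription of Lemma~\ref{lemma_regret_ub} once Lemma~\ref{conf_width_context} is in hand. The only point requiring minor care is that the confidence-width bound must be applied twice, which is exactly why the hypothesis $\vect{\Theta}^* \in \mathcal{C}_t$ is needed — it ensures $\vect{\Theta}^*$ (not just $\vect{\Theta}_t$) is close to $\widehat{\vect{\Theta}}^t$ in the $\|\cdot\|_{E^t_{2,\infty}}$ metric, so the triangle-inequality split is legitimate. The factor $\sqrt{|\mathcal{N}_t|}$ (rather than a clean constant) is inherited from the Cauchy--Schwarz step inside Lemma~\ref{conf_width_context} that converts the per-user $L_{2,\infty}$ control into a bound on the aggregate inner product, and it will propagate into the final regret sum when $|\mathcal{N}_t| \le n$ is used downstream.
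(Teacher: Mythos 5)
Your proposal is correct and matches the paper's argument exactly: the paper proves this lemma by repeating the proof of Lemma~\ref{lemma_regret_ub} (OFU feasibility of $(\vect{X}_t^*, \vect{\Theta}^*)$ for the first inequality, then splitting through $\widehat{\vect{\Theta}}^t$ and applying the confidence-width bound twice), only with Lemma~\ref{conf_width_context} in place of Lemma~\ref{lemma_conf_width}, which is precisely what you did. The only trivial wording nit is that the split through $\widehat{\vect{\Theta}}^t$ is pure algebra and needs no hypothesis; $\vect{\Theta}^* \in \mathcal{C}_t$ is what licenses bounding the second term (and the OFU feasibility step), as you otherwise correctly note.
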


\begin{proof}
    The proof is similar to the proof of Lemma \ref{lemma_regret_ub} using the results of previous lemma.
\end{proof}

\begin{theorem}
If $\gamma \geq 1$ and $\vect{\Theta}^* \in \mathcal{C}_t$ for all $t \leq T$, then the T period social welfare regret is bounded by
\begin{equation*}
    \mathcal{R}^{SW}(T, \pi) \leq \sqrt{ 8 n N M \rho_T^* (\delta, \alpha, \gamma) T \log \left(1 + \frac{T}{\gamma} \right) }
\end{equation*}
\label{thm_regret_contextual}
\end{theorem}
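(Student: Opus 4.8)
The plan is to mirror exactly the argument used for Theorem~\ref{thm_regret} in the low-rank setting, substituting the contextual confidence-width bound. First I would recall that $\mathcal{R}^{SW}(T,\pi) = \sum_{t=1}^{T} r_t$ where $r_t = \langle \vect{X}_t^*, \vect{\Theta}^*\rangle - \langle \vect{X}_t, \vect{\Theta}^*\rangle$, and apply the Cauchy--Schwarz inequality to obtain $\sum_{t=1}^{T} r_t \le \bigl(T \sum_{t=1}^{T} r_t^2\bigr)^{1/2}$. The point of this step is to convert a sum of per-round regrets into a sum of their squares, which is what the width bounds control.

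Next I would invoke Lemma~\ref{lemma_regret_ub_contextual}, which under the event $\vect{\Theta}^* \in \mathcal{C}_t$ gives $r_t \le 2 w_t \sqrt{|\mathcal{N}_t|\,\rho_t}$, hence $r_t^2 \le 4 w_t^2 |\mathcal{N}_t|\,\rho_t$. Using $|\mathcal{N}_t| \le n = \max_t |\mathcal{N}_t|$ and the fact that $t \mapsto \rho_t^*(\delta,\alpha,\gamma)$ is nondecreasing in $t$ (the only $t$-dependent term, $2\alpha t M[\,8B + \sqrt{8\eta^2\log(4MNt^2/\delta)}\,]$, is increasing), we get $r_t^2 \le 4 n \rho_T^*(\delta,\alpha,\gamma)\, w_t^2$ for every $t \le T$. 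Substituting this bound,
\begin{equation*}
\mathcal{R}^{SW}(T,\pi) \le \left(4 n \rho_T^*(\delta,\alpha,\gamma)\, T \sum_{t=1}^{T} w_t^2\right)^{1/2}.
\end{equation*}

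Finally I would bound $\sum_{t=1}^{T} w_t^2$. Since the matrices $\vect{A}_t$ and the widths $w_t = \|\vect{X}_t\|_{\vect{A}_t^{-1}}$ are defined identically in the contextual and low-rank settings, Lemma~\ref{lemma_width_sum_ub} applies verbatim (it uses only $\gamma \ge 1$), yielding $\sum_{t=1}^{T} w_t^2 \le 2 N M \log(1 + T/\gamma)$. Plugging this in gives $\mathcal{R}^{SW}(T,\pi) \le \bigl(8 n N M \rho_T^*(\delta,\alpha,\gamma)\, T \log(1+T/\gamma)\bigr)^{1/2}$, which is the claim.

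There is no real obstacle here: every ingredient is already established earlier in the excerpt, and the proof is essentially bookkeeping. The only points requiring a moment of care are (i) checking that $\rho_t^*$ is monotone in $t$ so that $\rho_t \le \rho_T$, and (ii) noting that Lemma~\ref{lemma_width_sum_ub} transfers without modification because it depends only on the structure of $\vect{A}_t$, which is unchanged. Both are routine.
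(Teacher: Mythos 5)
Your proposal is correct and follows essentially the same route as the paper's own proof: Cauchy--Schwarz on $\sum_t r_t$, the contextual width bound $r_t \le 2 w_t \sqrt{|\mathcal{N}_t|\rho_t}$ from Lemma~\ref{lemma_regret_ub_contextual} with $|\mathcal{N}_t| \le n$ and $\rho_t \le \rho_T$, and then Lemma~\ref{lemma_width_sum_ub} to bound $\sum_t w_t^2$. Your extra remarks on the monotonicity of $\rho_t^*$ and the unchanged structure of $\vect{A}_t$ are points the paper leaves implicit, but they are routine and do not constitute a different argument.
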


\begin{proof}
Recall the definition of instantaneous regret at time $t$: $r_t = \langle \vect{X}_t^*, \vect{\Theta}^* \rangle - \langle \vect{X}_t, \vect{\Theta}^* \rangle$. Assuming that $\vect{\Theta}^* \in \mathcal{C}_t$ for all $t \leq T$, we have
\begin{align*}
    \mathcal{R}^{SW}(T, \pi) &= \sum_{t = 1}^{T} r_t \\
    &\leq \left( T \sum_{t = 1}^{T} r_t^2 \right)^{1/2}\\
    &\leq \left( 4 n \rho_T T \sum_{t = 1}^{T} w_t^2 \right)^{1/2}
\end{align*}
where the last step follows from Lemma \ref{thm_regret_contextual}. Then, using Lemma \ref{lemma_width_sum_ub}, we prove the result.

\end{proof}

\begin{theorem}
If the offered prices are $\vect{p}_t = \bar{\vect{p}}_t$, and $\vect{\Theta}^* \in \mathcal{C}_t$ for all $t \leq T$, then the T period instability is bounded as
    \begin{equation}
        \mathcal{R}^{I}(T, \pi)  \leq \sqrt{ 8 n NM \rho_T^*(\delta, \alpha, \gamma) T \log \left(1 + \frac{T}{\gamma} \right) }
    \end{equation}
\end{theorem}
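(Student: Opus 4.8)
The plan is to follow the template used for the instability bound in the low-rank case (Theorem~\ref{thm_instability}), substituting the contextual confidence machinery developed in Lemmas~\ref{lemma_confidence_contextual}, \ref{conf_width_context} and~\ref{lemma_regret_ub_contextual}. The three ingredients needed are: (i) a contextual analog of Lemma~\ref{lemma_ofu_instability_bound} relating the summed maximum surplus of the users, evaluated at the true parameters $\vect{\Theta}^*$ and the estimated dual prices $\bar{\vect{p}}_t$, to the value of the OFU allocation; (ii) a contextual analog of Lemma~\ref{instability_to_r}, turning the per-round instability into the estimation error $\langle \vect{X}_t, \vect{\Theta}_t - \vect{\Theta}^*\rangle$; and (iii) the width-summation bounds already in hand (Lemma~\ref{lemma_regret_ub_contextual} and Lemma~\ref{lemma_width_sum_ub}).

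For ingredient (i), I would point out that the proof of Lemma~\ref{lemma_ofu_instability_bound} uses only the saddle-point structure of the allocation LP together with three facts: $(\vect{X}_t, \vect{\Theta}_t)$ maximizes $\langle \vect{X}, \vect{\Theta}\rangle$ over $\mathcal{X}_t \times (\text{confidence set})$, $\bar{\vect{p}}_t$ are the optimal capacity multipliers for the problem defined by $\vect{\Theta}_t$, and $\vect{\Theta}^*$ lies in the confidence set (here via Lemma~\ref{lemma_confidence_contextual}). None of these depend on whether the confidence set is $\mathcal{Q}_t$ or $\mathcal{C}_t$, so the lemma transfers with $\mathcal{C}_t$ replacing $\mathcal{Q}_t$. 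Likewise, the chain of inequalities in Lemma~\ref{instability_to_r} goes through unchanged, giving $\sum_{u \in \mathcal{N}} \mathcal{S}^t_u(\vect{x}^t_u, \vect{p}_t) \le \langle \vect{X}_t, \vect{\Theta}_t - \vect{\Theta}^*\rangle$ for each $t \le T$, using that $\vect{p}_t = \bar{\vect{p}}_t$ and $\vect{\Theta}^* \in \mathcal{C}_t$.

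Then I would sum over $t$, apply Lemma~\ref{lemma_regret_ub_contextual} to get $\langle \vect{X}_t, \vect{\Theta}_t - \vect{\Theta}^*\rangle \le 2 w_t \sqrt{|\mathcal{N}_t|\rho_t} \le 2 w_t \sqrt{n \rho_T}$ (bounding $|\mathcal{N}_t| \le n$ and using monotonicity of $\rho_t$ in $t$), apply Cauchy--Schwarz as $\sum_{t=1}^T w_t \le (T \sum_{t=1}^T w_t^2)^{1/2}$, and finish with Lemma~\ref{lemma_width_sum_ub} (valid since $\gamma \ge 1$), which gives $\sum_{t=1}^T w_t^2 \le 2NM \log(1 + T/\gamma)$. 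Collecting constants yields $\mathcal{R}^{I}(T,\pi) \le \sqrt{8 n NM \rho_T^*(\delta,\alpha,\gamma)\, T \log(1 + T/\gamma)}$, exactly as claimed. There is no genuinely new obstacle here: the real work was front-loaded into Lemma~\ref{conf_width_context}, where the extra $\sqrt{|\mathcal{N}_t|}$ factor arises from the row-wise $E^t_{2,\infty}$ confidence norm. The only delicate point is bookkeeping: Lemmas~\ref{lemma_ofu_instability_bound} and~\ref{instability_to_r} are phrased in terms of $\mathcal{Q}_t$, so I must either restate and reprove them for a generic confidence set or explicitly note that their proofs are agnostic to the shape of the confidence region, legitimizing the substitution.
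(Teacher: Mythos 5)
Your proposal is correct and follows essentially the same route as the paper: the paper's proof simply invokes Lemma~\ref{instability_to_r} (tacitly with $\mathcal{C}_t$ in place of $\mathcal{Q}_t$) together with Lemma~\ref{lemma_regret_ub_contextual}, then applies Cauchy--Schwarz and Lemma~\ref{lemma_width_sum_ub} exactly as you do. Your explicit observation that the proofs of Lemmas~\ref{lemma_ofu_instability_bound} and~\ref{instability_to_r} are agnostic to the shape of the confidence set is precisely the bookkeeping step the paper leaves implicit.
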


\begin{proof}
Using Lemma \ref{instability_to_r} and \ref{lemma_regret_ub_contextual},
\begin{align*}
     \mathcal{R}^{I}(T, \pi) &\leq \sum_{t=1}^{T} \langle \vect{X}_t, \vect{\Theta}_{t} - \vect{\Theta}^{*} \rangle \\
    &\leq 2 \sqrt{n \rho_T T \sum_{t=1}^{T} w_t^2 } \\
    &\leq \sqrt{ 8 n \rho_T NM T \log \left(1 + \frac{T}{\gamma} \right) }
\end{align*}
\end{proof}

\begin{lemma}
    For any $\nu_t \geq 0$, if $\vect{\Theta}^* \in \mathcal{C}_t$, then
    \begin{equation*}
        \sum_{(u,i) \in \mathcal{A}_t} \mathbb{1} \left \{ \Delta^t_{ui} \geq \nu_t \right \} \leq \frac{2 w_t \sqrt{n \rho_t}}{\nu_t}
    \end{equation*}
\end{lemma}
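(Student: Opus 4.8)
The plan is to follow the same route as the corresponding lemma in the low-rank section, substituting the contextual confidence-width estimate of Lemma~\ref{conf_width_context} for Lemma~\ref{lemma_conf_width}. We may assume $\nu_t>0$, since otherwise the right-hand side is infinite and there is nothing to prove. The first step is the elementary pointwise inequality $\mathbb{1}\{\Delta^t_{ui}\ge\nu_t\}\le |\Delta^t_{ui}|/\nu_t$, valid for every pair $(u,i)$: if $\Delta^t_{ui}\ge\nu_t$ the right side is at least $1$, and otherwise the left side vanishes. Summing over $(u,i)\in\mathcal{A}_t$ thus reduces the claim to the estimate
\[
  \sum_{(u,i)\in\mathcal{A}_t} |\Delta^t_{ui}| \;\le\; 2\,w_t\sqrt{n\,\rho_t},
\]
where $\vect{\Delta}_t:=\vect{\Theta}_t-\vect{\Theta}^*$.

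To prove this I would rerun the chain of inequalities from the proof of Lemma~\ref{conf_width_context} with $|\Delta^t_{ui}|$ in place of the signed entries. Grouping the sum by active users and applying Cauchy--Schwarz within each row gives $\sum_{i:(u,i)\in\mathcal{A}_t}|\Delta^t_{ui}| \le \|[\vect{\Delta}_t]_{u,:}\|_{E^{t}_{2,u}}\,\bigl(\sum_{i:(u,i)\in\mathcal{A}_t}(n^t_{ui}+\gamma)^{-1}\bigr)^{1/2}$. Each row norm is at most $\|\vect{\Delta}_t\|_{E^t_{2,\infty}}$, and since $\vect{\Theta}_t$ (the OFU parameter) and $\vect{\Theta}^*$ both lie in $\mathcal{C}_t$, i.e.\ within $\|\cdot\|_{E^t_{2,\infty}}$-radius $\sqrt{\rho_t}$ of $\widehat{\vect{\Theta}}_t$, the triangle inequality bounds this by $2\sqrt{\rho_t}$. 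Pulling out this bound and applying Cauchy--Schwarz across the at most $|\mathcal{N}_t|$ active users yields $\sum_{(u,i)\in\mathcal{A}_t}|\Delta^t_{ui}| \le 2\sqrt{\rho_t}\,|\mathcal{N}_t|^{1/2}\bigl(\sum_{(u,i)\in\mathcal{A}_t}(n^t_{ui}+\gamma)^{-1}\bigr)^{1/2}$. The remaining sum equals $w_t^2$ by Lemma~\ref{lemma_width_identity}, and $|\mathcal{N}_t|\le n$, which closes the reduction.

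I do not expect a genuine obstacle here; the only point requiring care is that Lemma~\ref{lemma_regret_ub_contextual} only controls the \emph{signed} sum $\langle\vect{X}_t,\vect{\Theta}_t-\vect{\Theta}^*\rangle$, so it cannot be cited as a black box — the Cauchy--Schwarz computation must be carried out with $|\Delta^t_{ui}|$ from the start, exactly as in the proof of Lemma~\ref{conf_width_context}. The factor $2$ in the bound is inherited from the triangle inequality that places $\vect{\Theta}_t$ and $\vect{\Theta}^*$ within $\sqrt{\rho_t}$ of the regularized least-squares estimate $\widehat{\vect{\Theta}}_t$.
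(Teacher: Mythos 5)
Your proof is correct and matches the paper's argument: the paper likewise bounds the indicator by $|\Delta^t_{ui}|/\nu_t$ and then controls $\sum_{(u,i)\in\mathcal{A}_t}|\Delta^t_{ui}|$ by $2w_t\sqrt{n\rho_t}$ via the Cauchy--Schwarz chain of Lemma~\ref{conf_width_context}, with the factor $2$ coming from applying that bound relative to $\widehat{\vect{\Theta}}_t$ for both $\vect{\Theta}_t$ and $\vect{\Theta}^*$. Your remark that the computation must be run with absolute values (rather than citing the signed bound of Lemma~\ref{lemma_regret_ub_contextual}) is exactly the right point of care, and your triangle-inequality-first organization is only a cosmetic variant of the paper's two-fold application of the width lemma.
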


\begin{proof}
     Using Lemma \ref{conf_width_context},
    \begin{align*}
        \sum_{(u,i) \in \mathcal{A}_t} \mathbb{1} \left \{ \Delta^t_{ui} \geq \nu_t \right \}
        &\leq \frac{1}{\nu_t} \sum_{(u,i) \in \mathcal{A}_t} \left| \Delta^t_{ui} \right| \\
        &\leq \frac{2 w_t \sqrt{n\rho_t}}{\nu_t}
    \end{align*}

\end{proof}

\subsection{Extension to the Model with Acceptance/Rejection Options}

\begin{theorem}    
In the model with acceptance/rejection options, if the offered prices $\vect{p}_t$ satisfy $\vect{p}_t = \bar{\vect{p}}_t - \nu_t$ where $\nu_t = \nu \sqrt{w_t}$ for some $\nu \geq 0$, and $\vect{\Theta}^* \in \mathcal{C}_t$ for all $t \leq T$, then the $T$ period social welfare regret is bounded as
\begin{equation*}
    \mathcal{R}^{SW}(T, \pi) \leq \left( 8 NM n \rho_T T \log \left(1 + \frac{T}{\gamma} \right) \right)^{1/2} + \frac{2}{\nu}  \left( 2 NM T^3 n^2 \rho_T^2 \log \left(1 + \frac{T}{\gamma} \right) \right)^{1/4}
\end{equation*}
\end{theorem}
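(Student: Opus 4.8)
The plan is to mirror the proof of Theorem~\ref{thm_regret_ar} from the low-rank section, replacing the low-rank confidence-width estimates with their contextual counterparts: Lemma~\ref{conf_width_context}, Lemma~\ref{lemma_regret_ub_contextual}, and the contextual rejection-counting bound $\sum_{(u,i)\in\mathcal{A}_t}\mathds{1}\{\Delta^t_{ui}\geq\nu_t\}\leq 2w_t\sqrt{n\rho_t}/\nu_t$. First I would introduce the instantaneous regret under acceptance/rejection,
\[
\widetilde{r}_t := \sum_{(u,i)\in\mathcal{A}^*_t}\Theta^*_{ui} - \sum_{(u,i)\in\mathcal{A}_t}\Theta^*_{ui}\,\mathds{1}\{\Theta^*_{ui}\geq p^t_i\},
\]
and decompose it by adding and subtracting $\sum_{(u,i)\in\mathcal{A}_t}\Theta^*_{ui}$. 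The first two pieces recover the ordinary regret $r_t=\langle\vect{X}^*_t,\vect{\Theta}^*\rangle-\langle\vect{X}_t,\vect{\Theta}^*\rangle$, and the leftover term $\sum_{(u,i)\in\mathcal{A}_t}\Theta^*_{ui}\,\mathds{1}\{\Theta^*_{ui}< p^t_i\}$ is bounded above (using $\Theta^*_{ui}\leq 1$) by the number of rejected allocated pairs.

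The crucial observation is that a rejection $\Theta^*_{ui}<p^t_i=\bar p^t_i-\nu_t$ forces $\Delta^t_{ui}=\Theta^t_{ui}-\Theta^*_{ui}>\Theta^t_{ui}-\bar p^t_i+\nu_t\geq\nu_t$, where the last inequality uses that $\Theta^t_{ui}\geq\bar p^t_i$ for every allocated pair $(u,i)\in\mathcal{A}_t$ --- a consequence of the primal--dual optimality of $(\vect{X}_t,\bar{\vect{p}}_t)$ for the relaxed allocation problem described by $\vect{\Theta}_t$, exactly as exploited in the proof of Lemma~\ref{lemma_ofu_instability_bound}. Hence $\widetilde r_t\leq r_t+\sum_{(u,i)\in\mathcal{A}_t}\mathds{1}\{\Delta^t_{ui}>\nu_t\}$, and summing over $t\leq T$ splits $\mathcal{R}^{SW}(T,\pi)$ into a ``regret'' term and a ``rejection'' term.

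For the regret term, apply Cauchy--Schwarz and then Lemma~\ref{lemma_regret_ub_contextual} followed by Lemma~\ref{lemma_width_sum_ub}:
\[
\sum_{t=1}^T r_t\leq\Big(T\sum_{t=1}^T r_t^2\Big)^{1/2}\leq\Big(4n\rho_T T\sum_{t=1}^T w_t^2\Big)^{1/2}\leq\Big(8nNM\rho_T T\log(1+T/\gamma)\Big)^{1/2},
\]
which is the first summand of the claimed bound. For the rejection term, the contextual rejection-counting lemma together with $\nu_t=\nu\sqrt{w_t}$ gives $\sum_t\sum_{(u,i)\in\mathcal{A}_t}\mathds{1}\{\Delta^t_{ui}>\nu_t\}\leq\tfrac{2\sqrt{n\rho_T}}{\nu}\sum_t\sqrt{w_t}$; applying H\"older's inequality with exponents $4$ and $4/3$ yields $\sum_t\sqrt{w_t}\leq T^{3/4}\big(\sum_t w_t^2\big)^{1/4}\leq\big(2NMT^3\log(1+T/\gamma)\big)^{1/4}$, and collecting constants gives the second summand $\tfrac{2}{\nu}\big(2NMT^3 n^2\rho_T^2\log(1+T/\gamma)\big)^{1/4}$. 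Adding the two terms completes the proof.

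The two Cauchy--Schwarz/H\"older manipulations and the substitution of the width-sum bound are routine. The one step that requires care --- and the main conceptual obstacle --- is the rejection-to-$\Delta$ reduction: one must verify $\Theta^t_{ui}\geq\bar p^t_i$ for all allocated pairs, which rests on the complementary-slackness characterization of $(\vect{X}_t,\bar{\vect{p}}_t)$ as the optimal primal--dual solution of the relaxed allocation LP for $\vect{\Theta}_t$. Everything else transfers verbatim from the low-rank argument, the only structural difference being the extra $\sqrt{n}$ factors carried by the contextual confidence widths since $|\mathcal{N}_t|\leq n$.
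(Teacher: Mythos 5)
Your proposal is correct and follows essentially the same route as the paper's own proof: the same decomposition $\widetilde r_t \leq r_t + \sum_{(u,i)\in\mathcal{A}_t}\mathds{1}\{\Delta^t_{ui}>\nu_t\}$ (with the key fact $\Theta^t_{ui}\geq \bar p^t_i$ on allocated pairs, which the paper imports from the proof of Theorem~\ref{thm_regret_ar}), followed by the same Cauchy--Schwarz/H\"older steps, Lemma~\ref{lemma_regret_ub_contextual}, the contextual rejection-counting lemma, and Lemma~\ref{lemma_width_sum_ub}. Your explicit justification of $\Theta^t_{ui}\geq\bar p^t_i$ via complementary slackness is a slightly more careful rendering of a step the paper simply asserts, but it is not a different argument.
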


\begin{proof}

Recall the definition of instantaneous regret at time $t$ in the model with acceptance/rejection options:
\begin{equation*}
    \widetilde{r}_t := \sum_{(u,i) \in \mathcal{A}^*_t} \Theta^*_{ui} - \sum_{(u,i) \in \mathcal{A}_t} \Theta^*_{ui} \mathds{1}\{ \Theta^*_{ui} \geq p_i^t \}
\end{equation*}
Recalling the definition of $r_t$, similar to proof of Lemma \ref{thm_regret_ar}, we can write
\begin{align*}
    \widetilde{r}_t \leq r_t + \sum_{(u,i) \in \mathcal{A}_t} \mathds{1}\{ \Delta^t_{ui} > \nu_t \}
\end{align*}
Then, we have 
\begin{align*}
    \mathcal{R}^{SW}(T, \pi) &= \sum_{t = 1}^{T} \widetilde{r}_t \\
    &\leq \sum_{t = 1}^{T} r_t + \sum_{t = 1}^{T} \sum_{(u,i) \in \mathcal{A}_t} \mathds{1}\{ \Delta^t_{ui} > \nu_t \}\\
    &\leq \left( T \sum_{t = 1}^{T} r_t^2 \right)^{1/2} + 2 \sqrt{n \rho_T} \sum_{t = 1}^{T} \frac{w_t}{\nu_t}\\
    &\leq \left( T \sum_{t = 1}^{T} r_t^2 \right)^{1/2} + \frac{2}{\nu}  \left( T^3 n^2 \rho_T^2 \sum_{t = 1}^{T} w_t^2 \right)^{1/4}\\
    &\leq \left( 8 NM n \rho_T T \log \left(1 + \frac{T}{\gamma} \right) \right)^{1/2} + \frac{2}{\nu}  \left( 2 NM T^3 n^2 \rho_T^2 \log \left(1 + \frac{T}{\gamma} \right) \right)^{1/4}
\end{align*}
\end{proof}

\begin{theorem}    
In the model with acceptance/rejection options, if the offered prices $\vect{p}_t$ satisfy $\vect{p}_t = \bar{\vect{p}}_t - \nu_t$ where $\nu_t = \nu \sqrt{w_t}$ for some $\nu \geq 0$, and $\vect{\Theta}^* \in \mathcal{C}_t$ for all $t \leq T$, then the $T$ period instability is bounded as
\begin{equation*}
    \mathcal{R}^{I}(T, \pi) \leq \left(8 n \rho_T NM \log \left(1 + \frac{T}{\gamma} \right) \right)^{1/2} + \nu n M \left( 2 NM T^3 \log \left(1 + \frac{T}{\gamma} \right) \right)^{1/4}
\end{equation*}
\end{theorem}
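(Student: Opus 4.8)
The plan is to follow the template of the low-rank analysis — specifically the pair Lemma~\ref{instability_to_r_accept_reject} and Theorem~\ref{thm_instability_ar} — but with the contextual confidence region $\mathcal{C}_t$ in place of $\mathcal{Q}_t$, carrying through the extra $\sqrt{|\mathcal{N}_t|}\le\sqrt{n}$ factor that the contextual confidence-width bound (Lemma~\ref{conf_width_context}) introduces. The proof has two stages: a per-round bound on the aggregate instability, and a summation over $t\le T$.

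\emph{Stage 1 (per-round bound).} I would first record the contextual analogue of Lemma~\ref{instability_to_r_accept_reject}: if $\vect{\Theta}^*\in\mathcal{C}_t$ and $\vect{p}_t=\bar{\vect{p}}_t-\nu_t$ with $\nu_t=\nu\sqrt{w_t}\ge0$, then
\begin{equation*}
\sum_{u\in\mathcal{N}}\mathcal{S}^t_u(\vect{x}^t_u,\vect{p}_t)\;\le\;\langle\vect{X}_t,\vect{\Theta}_t-\vect{\Theta}^*\rangle+\nu_t\,nM .
\end{equation*}
The derivation is the same chain of inequalities as in the proof of Lemma~\ref{instability_to_r_accept_reject}: expand $\mathcal{S}^t_u$ via Definition~\ref{def:user_instability} with the acceptance-modified surplus; relax the acceptance indicator $\mathds{1}\{\Theta^*_{ui}\ge\bar p^t_i-\nu_t\}$ to $\mathds{1}\{\Theta^*_{ui}\ge\bar p^t_i\}$ at the cost of a $\nu_t\sum_i x_{ui}\le\nu_t d^t_u$ slack term; drop the indicator to pass to the unconstrained surplus $\langle\vect{x},\vect{\Theta}^*-\bar{\vect{p}}_t\rangle$; apply Lemma~\ref{lemma_ofu_instability_bound}, whose statement and proof hold verbatim with $\mathcal{C}_t$ replacing $\mathcal{Q}_t$ since they use only the OFU optimality of $(\vect{X}_t,\vect{\Theta}_t)$ over $\mathcal{X}_t\times\mathcal{C}_t$, complementary slackness for $\bar{\vect{p}}_t$, and $\vect{\Theta}^*\in\mathcal{C}_t$; and finally bound $\sum_{u}d^t_u\le nM$.

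\emph{Stage 2 (summation).} Summing the per-round inequality over $t\le T$ gives $\mathcal{R}^I(T,\pi)\le\sum_t\langle\vect{X}_t,\vect{\Theta}_t-\vect{\Theta}^*\rangle+nM\sum_t\nu_t$. For the first sum, Lemma~\ref{lemma_regret_ub_contextual} yields $\langle\vect{X}_t,\vect{\Theta}_t-\vect{\Theta}^*\rangle\le 2w_t\sqrt{|\mathcal{N}_t|\rho_t}\le 2w_t\sqrt{n\rho_T}$ (using monotonicity of $\rho_t$ in $t$), and Cauchy--Schwarz gives $\sum_t w_t\le(T\sum_t w_t^2)^{1/2}$; for the second sum, $\sum_t\nu_t=\nu\sum_t\sqrt{w_t}\le\nu(T^3\sum_t w_t^2)^{1/4}$ by H\"older. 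Lemma~\ref{lemma_width_sum_ub} (with $\gamma\ge1$) then bounds $\sum_t w_t^2\le 2NM\log(1+T/\gamma)$, and substituting gives
\begin{equation*}
\mathcal{R}^I(T,\pi)\le\bigl(8n\rho_T\,NMT\log(1+T/\gamma)\bigr)^{1/2}+\nu nM\bigl(2NM\,T^3\log(1+T/\gamma)\bigr)^{1/4},
\end{equation*}
which is the claimed bound.

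The main obstacle is Stage~1: one must re-establish Lemma~\ref{lemma_ofu_instability_bound} for $\mathcal{C}_t$ (straightforward but it has to be written out) and then carefully propagate the acceptance indicators $\mathds{1}\{\Theta^*_{ui}\ge p^t_i\}$ through the chain, since lowering each price by $\nu_t$ both enlarges the acceptance set — which must be charged to the $\nu_t d^t_u$ slack — and shifts the surplus terms; arranging the inequalities so that dropping an indicator only inflates a \emph{maximized} surplus and never the realized one is the delicate bookkeeping. Stage~2 is then routine, mirroring Theorem~\ref{thm_instability_ar} up to the $\sqrt{n}$ factor carried over from Lemma~\ref{conf_width_context}.
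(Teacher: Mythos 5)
Your proposal is correct and follows essentially the same route as the paper: a per-round instability bound of the form $\sum_u \mathcal{S}^t_u \le \langle\vect{X}_t,\vect{\Theta}_t-\vect{\Theta}^*\rangle + \nu_t nM$ (the paper simply invokes Lemma~\ref{instability_to_r_accept_reject} in the contextual setting, whereas you explicitly note that it and Lemma~\ref{lemma_ofu_instability_bound} carry over with $\mathcal{C}_t$ in place of $\mathcal{Q}_t$), followed by Lemma~\ref{lemma_regret_ub_contextual}, Cauchy--Schwarz/H\"older, and Lemma~\ref{lemma_width_sum_ub}. The only discrepancy is that your first term correctly carries the factor $T$ inside the square root, i.e.\ $(8 n \rho_T NMT \log(1+T/\gamma))^{1/2}$, which the theorem statement (and the paper's final display) omits, apparently a typo.
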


\begin{proof}
Using Lemma \ref{instability_to_r_accept_reject} and \ref{lemma_regret_ub_contextual},
\begin{align*}
     \mathcal{R}^{I}(T, \pi) &\leq \sum_{t=1}^{T} \langle \vect{X}_t, \vect{\Theta}_{t} - \vect{\Theta}^{*} \rangle +  \nu n M \sum_{t=1}^{T} \sqrt{w_t} \\
    &\leq 2 \sqrt{n \rho_T} \sum_{t=1}^{T} w_t +  \nu n M \sum_{t=1}^{T} \sqrt{w_t}\\
    &\leq \left(8 n \rho_T NM \log \left(1 + \frac{T}{\gamma} \right) \right)^{1/2} + \nu n M \left( 2 NM T^3 \log \left(1 + \frac{T}{\gamma} \right) \right)^{1/4}
\end{align*}

\end{proof}

\begin{theorem}  
In the model with acceptance/rejection options, if the offered prices $\vect{p}_t$ satisfy $\vect{p}_t = \bar{\vect{p}}_t - \nu_t$ where $\nu_t = \nu \sqrt{w_t}$ for $\nu = \left( \frac{4 \rho_T}{n M^2} \right)^{1/4}$, then, with probability $1 - 2 \delta$, the algorithm results in regrets that satisfy
\begin{align*}
    \mathcal{R}^{SW}(T, \pi) &\leq \sqrt{\kappa_T nM T} + (\kappa_T)^{\frac{1}{4}} (nMT)^{\frac{3}{4}} \\
    \mathcal{R}^{I}(T, \pi) &\leq \sqrt{\kappa_T nM T} + (\kappa_T)^{\frac{1}{4}} (nMT)^{\frac{3}{4}}
\end{align*}
where $\kappa_T = 8 N \rho_T \log (1 + T/\gamma)$.
\label{thm_regret_and_instability_ar_context}
\end{theorem}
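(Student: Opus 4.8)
The plan is to assemble the result from three ingredients already in place for the contextual model: the confidence-set guarantee of Lemma~\ref{lemma_confidence_contextual}, the social-welfare regret bound for the acceptance/rejection model, and its companion instability bound; one then substitutes the prescribed value of $\nu$ so that the two error terms in each bound collapse into the claimed closed form. Concretely, I would first condition on the event $\mathcal{E} := \{\vect{\Theta}^* \in \mathcal{C}_t \text{ for all } t \le T\}$, which by Lemma~\ref{lemma_confidence_contextual} has probability at least $1-2\delta$; on $\mathcal{E}$ all of the preceding regret and instability bounds hold deterministically, and this is precisely the source of the $1-2\delta$ in the statement.

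On $\mathcal{E}$, the two preceding theorems for the acceptance/rejection contextual model give
\[
\mathcal{R}^{SW}(T,\pi) \le \big(8 N M n \rho_T T \log(1+T/\gamma)\big)^{1/2} + \frac{2}{\nu}\big(2 N M T^3 n^2 \rho_T^2 \log(1+T/\gamma)\big)^{1/4}
\]
and
\[
\mathcal{R}^{I}(T,\pi) \le \big(8 n \rho_T N M T \log(1+T/\gamma)\big)^{1/2} + \nu n M \big(2 N M T^3 \log(1+T/\gamma)\big)^{1/4}.
\]
Recalling $\kappa_T = 8 N \rho_T \log(1+T/\gamma)$, the leading term of each is immediately $\sqrt{\kappa_T\, n M T}$, since $8 N M n \rho_T T \log(1+T/\gamma) = \kappa_T\, n M T$.

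It then remains to substitute $\nu = (4\rho_T/(n M^2))^{1/4}$ into the trailing terms. A short computation shows $\frac{2}{\nu}\big(2 N M T^3 n^2 \rho_T^2 \log(1+T/\gamma)\big)^{1/4} = 2^{3/4}\big(N \rho_T \log(1+T/\gamma)\big)^{1/4}(nMT)^{3/4} = \kappa_T^{1/4}(nMT)^{3/4}$, and the same value of $\nu$ makes $\nu n M \big(2 N M T^3 \log(1+T/\gamma)\big)^{1/4}$ equal to the same quantity. Adding these to the respective leading terms yields the two displayed inequalities.

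The only delicate point — and the reason for the specific constant $4/(nM^2)$ inside $\nu$ — is that a single choice of $\nu$ must simultaneously balance the $1/\nu$ dependence in the social-welfare bound against the $\nu$ dependence in the instability bound, so that both trailing terms reduce to exactly $\kappa_T^{1/4}(nMT)^{3/4}$. Verifying that the powers of $n$, $M$, $N$, $\rho_T$, and $\log(1+T/\gamma)$ line up on both sides is the main (though entirely routine) obstacle; everything else is a direct appeal to the cited theorems.
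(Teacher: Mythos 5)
Your proposal is correct and follows essentially the same route as the paper: condition on the event $\{\vect{\Theta}^* \in \mathcal{C}_t,\ \forall t \le T\}$ (probability at least $1-2\delta$ by Lemma~\ref{lemma_confidence_contextual}), invoke the two preceding acceptance/rejection bounds for the contextual setting, and substitute $\nu = (4\rho_T/(nM^2))^{1/4}$, with the arithmetic matching since $8^{1/4} = 2^{3/4}$. Your version is in fact slightly more careful than the paper's, which leaves the high-probability conditioning implicit and has a typographical omission of the factor $T$ in the leading term of the intermediate instability bound that you correctly restore.
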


\begin{proof}
We have
\begin{align*}
    \mathcal{R}^{SW}(T, \pi) &\leq \left( 8 NM n \rho_T T \log \left(1 + \frac{T}{\gamma} \right) \right)^{1/2} + \frac{2}{\nu}  \left( 2 NM T^3 n^2 \rho_T^2 \log \left(1 + \frac{T}{\gamma} \right) \right)^{1/4} \\
    &=  (\kappa_T nM T)^{\frac{1}{2}} + (\kappa_T)^{\frac{1}{4}} (nMT)^{\frac{3}{4}}
\end{align*}

Similarly, we have
\begin{align*}
    \mathcal{R}^{I}(T, \pi) &\leq \left(8 n \rho_T NM \log \left(1 + \frac{T}{\gamma} \right) \right)^{1/2} + \nu n M \left( 2 NM T^3 \log \left(1 + \frac{T}{\gamma} \right) \right)^{1/4}\\
    &= (\kappa_T nM T)^{\frac{1}{2}} + (\kappa_T)^{\frac{1}{4}} (nMT)^{\frac{3}{4}}
\end{align*}

\end{proof}

\begin{theorem}    
In the model with acceptance/rejection options, Letting $\delta = \mathcal{O} ((NMT)^{-1})$, $\alpha = \mathcal{O} ((NMT)^{-1})$, $\gamma = 1$ and $\nu = \mathcal{O} \left(\left( \frac{\eta^2 n R}{N^2 M^2}\right)^{1/4}\right)$ results in regret bounds that satisfy
\begin{align*}
    \mathcal{R}^{SW}(T, \pi) = \widetilde{\mathcal{O}} \left( \left( \eta^2 n R \right)^{\frac{1}{4}} (NMT)^{\frac{3}{4}}  \right) \\
    \mathcal{R}^{I}(T, \pi) = \widetilde{\mathcal{O}} \left( \left( \eta^2 n R \right)^{\frac{1}{4}} (NMT)^{\frac{3}{4}}  \right)
\end{align*}
\end{theorem}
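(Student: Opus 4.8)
The plan is to derive this as a short corollary of Theorem~\ref{thm_regret_and_instability_ar_context}, mirroring the proof of the analogous low-rank result: substitute the prescribed $\delta,\alpha,\gamma,\nu$, account for the failure probability, and estimate the confidence radius $\rho_T^*$. First I would note that Theorem~\ref{thm_regret_and_instability_ar_context} (whose hypothesis $\vect{\Theta}^*\in\mathcal{C}_t$ for all $t\le T$ holds with probability at least $1-2\delta$ by Lemma~\ref{lemma_confidence_contextual}, and whose requirement $\gamma\ge 1$ is met since $\gamma=1$) already delivers, on that good event,
\begin{equation*}
\mathcal{R}^{SW}(T,\pi),\ \mathcal{R}^{I}(T,\pi)\ \le\ \sqrt{\kappa_T\,nMT}+\kappa_T^{1/4}(nMT)^{3/4},\qquad \kappa_T=8N\,\rho_T^*(\delta,\alpha,\gamma)\log(1+T/\gamma),
\end{equation*}
with $\nu$ set to $(4\rho_T^*/(nM^2))^{1/4}$. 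On the complementary event (probability $\le 2\delta$) both quantities are at most $\mathcal{O}(NMT)$, since per round the social welfare is at most $NM$ and each active user's instability is $\mathcal{O}(M)$. Combining gives $\mathcal{R}\le(1-2\delta)[\sqrt{\kappa_T nMT}+\kappa_T^{1/4}(nMT)^{3/4}]+2\delta\,NMT$, and $\delta=\mathcal{O}((NMT)^{-1})$ reduces the second term to $\mathcal{O}(1)$.

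The remaining work is to show $\kappa_T=\widetilde{\mathcal{O}}(\eta^2 NR)$, which reduces to bounding $\rho_T^*(\delta,\alpha,\gamma)$ with $\alpha=\mathcal{O}((NMT)^{-1})$, $\gamma=1$. Recall that $\rho_T^*$ is a sum of a leading term $8R\eta^2\log(3N/(\alpha\delta))$ — here I would invoke the covering estimate $\mathcal{N}(\widetilde{\mathcal{F}},\alpha,\|\cdot\|_2)\le(3/\alpha)^R$ proved inside Lemma~\ref{lemma_confidence_contextual_user}, which is what injects the factor $R$ rather than anything polynomial in $N$ or $M$ — a discretization term of order $\alpha TM\cdot\mathrm{polylog}(M,N,T)$, and the regularization term $4\gamma G^2$. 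Under the chosen scalings the discretization term is $\widetilde{\mathcal{O}}(1/N)$, the regularization term is $\mathcal{O}(1)$ (treating $G$ as a constant), and the leading term is $\widetilde{\mathcal{O}}(\eta^2 R)$; hence $\rho_T^*=\widetilde{\mathcal{O}}(\eta^2 R)$ and $\kappa_T=8N\rho_T^*\log(1+T)=\widetilde{\mathcal{O}}(\eta^2 NR)$. This also confirms that the prescribed $\nu$ is precisely $(4\rho_T^*/(nM^2))^{1/4}$ evaluated at $\rho_T^*=\widetilde{\mathcal{O}}(\eta^2 R)$.

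Finally, plugging $\kappa_T=\widetilde{\mathcal{O}}(\eta^2 NR)$ into the displayed bound and using that for large $T$ the $(nMT)^{3/4}$ term dominates $\sqrt{nMT}$ gives social-welfare regret and instability of order $\widetilde{\mathcal{O}}\big((\eta^2 NR)^{1/4}(nMT)^{3/4}\big)$; since $n\le N$ this is itself bounded by $\widetilde{\mathcal{O}}\big((\eta^2 nR)^{1/4}(NMT)^{3/4}\big)$, the stated form. I do not expect any genuinely hard step: the whole argument is bookkeeping, and the only place demanding care is the additive discretization term $\alpha TM\cdot\mathrm{polylog}$, which forces $\alpha$ to be polynomially small in $NMT$. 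This is affordable only because $\alpha$ enters $\rho_T^*$ through $\log(1/\alpha)$, so an $\alpha$-cover of $\widetilde{\mathcal{F}}$ of size $(3/\alpha)^R$ costs merely a $\mathrm{polylog}$ penalty — and one must double-check that, after all substitutions, no stray polynomial factor of $N$, $M$, or $T$ remains hidden inside the $\widetilde{\mathcal{O}}$.
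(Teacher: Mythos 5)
Your proposal is correct and follows essentially the same route as the paper: invoke Theorem~\ref{thm_regret_and_instability_ar_context}, absorb the failure event via the additive $2\delta NMT$ term, and substitute the prescribed $\delta,\alpha,\gamma,\nu$ so that $\rho_T^*=\widetilde{\mathcal{O}}(\eta^2 R)$ and $\kappa_T=\widetilde{\mathcal{O}}(\eta^2 NR)$. You in fact spell out two steps the paper leaves implicit — the explicit bound on $\rho_T^*$ via the $(3/\alpha)^R$ cover and the $n\le N$ comparison reconciling $(\eta^2 NR)^{1/4}(nMT)^{3/4}$ with the stated $(\eta^2 nR)^{1/4}(NMT)^{3/4}$ — which is a welcome clarification rather than a deviation.
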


\begin{proof}
By Theorem \ref{thm_regret_and_instability_ar_context}, with probability $1$,
\begin{equation*}
    \mathcal{R}^{SW}(T, \pi) \leq (1 - 2 \delta) \left((\kappa_T NM T)^{\frac{1}{2}} + (\kappa_T)^{\frac{1}{4}} (NMT)^{\frac{3}{4}} \right)  + 2 \delta NMT
\end{equation*}

Similarly, with probability $1$, we have
\begin{equation*}
    \mathcal{R}^{I}(T, \pi) \leq (1 - 2 \delta)  \left ( (\kappa_T NM T)^{\frac{1}{2}} + (\kappa_T)^{\frac{1}{4}} (NMT)^{\frac{3}{4}} \right) + 2 \delta NMT
\end{equation*}

Letting $\delta = \mathcal{O} ((NMT)^{-1})$, $\alpha = \mathcal{O} ((NMT)^{-1})$, $\gamma = 1$, we conclude the proof.

\end{proof}

\section{MARTINGALE EXPONENTIAL INEQUALITIES}

Consider a sequence of random variables $(Z_n)_{n \in \mathds{N}}$ adapted to the filtration $(\mathcal{H}_n)_{n \in \mathds{N}}$. Assume $\mathds{E}[\exp (\lambda Z_i)]$ is finite for all $\lambda$. Define the conditional mean $\mu_i = \mathds{E}[Z_i | \mathcal{H}_{i-1}]$, and define the conditional cumulant generating function of the centered random variable $[Z_i - \mu_i]$ by $\psi_i(\lambda) := \log \mathds{E}[ \exp (\lambda [Z_i - \mu_i]) | \mathcal{H}_{i-1}]$. Let 
\begin{equation*}
    M_n(\lambda) = \exp \left \{ \sum_{i=1}^{n} \lambda [Z_i - \mu_i] - \psi_i(\lambda) \right \} 
\end{equation*}

\begin{lemma}
$(M_n (\lambda))_{n \in \mathds{N}}$ is a martingale with respect to the filtration $(\mathcal{H}_n)_{n \in \mathds{N}}$, and $\mathds{E}[M_n (\lambda)] = 1$.
\label{martingale}
\end{lemma}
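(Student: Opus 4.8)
The plan is to write $M_n(\lambda)$ as a telescoping product of one-step factors, each of which has conditional mean one. Define $D_i(\lambda) := \exp\{\lambda[Z_i - \mu_i] - \psi_i(\lambda)\}$, so that $M_n(\lambda) = \prod_{i=1}^{n} D_i(\lambda)$ and, with the convention $M_0(\lambda) := 1$ (the empty product), we have the recursion $M_n(\lambda) = M_{n-1}(\lambda)\, D_n(\lambda)$.

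The key step is the identity $\mathds{E}[D_n(\lambda) \mid \mathcal{H}_{n-1}] = 1$. Since $\mu_n = \mathds{E}[Z_n \mid \mathcal{H}_{n-1}]$ and $\psi_n(\lambda) = \log \mathds{E}[\exp(\lambda[Z_n - \mu_n]) \mid \mathcal{H}_{n-1}]$ are both $\mathcal{H}_{n-1}$-measurable, the factor $e^{-\psi_n(\lambda)}$ can be pulled out of the conditional expectation, giving
\[
\mathds{E}[D_n(\lambda)\mid \mathcal{H}_{n-1}] \;=\; e^{-\psi_n(\lambda)}\,\mathds{E}\bigl[e^{\lambda[Z_n - \mu_n]}\mid \mathcal{H}_{n-1}\bigr] \;=\; e^{-\psi_n(\lambda)}\, e^{\psi_n(\lambda)} \;=\; 1,
\]
where the middle equality is exactly the definition of $\psi_n$. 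The assumed finiteness of $\mathds{E}[\exp(\lambda Z_i)]$ guarantees that $\psi_n(\lambda)$ is finite and that these conditional expectations are well-defined; iterating this also shows $\mathds{E}|M_n(\lambda)| < \infty$, so $M_n(\lambda)$ is integrable.

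Next, observe that $M_{n-1}(\lambda)$ is $\mathcal{H}_{n-1}$-measurable, being a product of the factors $D_1(\lambda),\dots,D_{n-1}(\lambda)$, each of which is $\mathcal{H}_{n-1}$-measurable. Applying the tower property together with the previous display yields
\[
\mathds{E}[M_n(\lambda)\mid \mathcal{H}_{n-1}] \;=\; \mathds{E}[M_{n-1}(\lambda)\, D_n(\lambda)\mid \mathcal{H}_{n-1}] \;=\; M_{n-1}(\lambda)\,\mathds{E}[D_n(\lambda)\mid \mathcal{H}_{n-1}] \;=\; M_{n-1}(\lambda),
\]
which is precisely the martingale property. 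Taking unconditional expectations and iterating down to $n=0$ (or equivalently applying the $n=1$ case with $\mathcal{H}_0 = \sigma(\emptyset,\Omega)$) gives $\mathds{E}[M_n(\lambda)] = \mathds{E}[M_0(\lambda)] = 1$.

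This lemma is essentially bookkeeping, so there is no substantive obstacle. The only points requiring (routine) care are the $\mathcal{H}_{n-1}$-measurability of $\mu_n$ and $\psi_n(\lambda)$, which hold by the definition of conditional expectation, and the integrability of $M_n(\lambda)$, which follows inductively from the hypothesis that the moment generating functions $\mathds{E}[\exp(\lambda Z_i)]$ are finite for all $\lambda$.
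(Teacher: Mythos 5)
Your proof is correct and follows essentially the same route as the paper's: factor $M_n(\lambda)$ into one-step terms, note that $M_{n-1}(\lambda)$ is $\mathcal{H}_{n-1}$-measurable, and use the definition of $\psi_n(\lambda)$ to show the last factor has conditional expectation one, then iterate to get $\mathds{E}[M_n(\lambda)]=1$. Your added remarks on the measurability of $\mu_n,\psi_n(\lambda)$ and on integrability are routine details the paper leaves implicit.
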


\begin{proof}
By definition, we have
\begin{equation*}
    \mathds{E}[M_1 (\lambda) | \mathcal{H}_0] = \mathds{E}[\exp \{ \lambda [Z_1 - \mu_1] - \psi_1(\lambda) \} | \mathcal{H}_0] = 1 
\end{equation*}
Then, for any $n \geq 2$,
\begin{align*}
    \mathds{E}[M_n (\lambda) | \mathcal{H}_{n-1}] &= \mathds{E}[M_{n-1}(\lambda) \exp \{ \lambda [Z_n - \mu_n] - \psi_n(\lambda) \} | \mathcal{H}_{n-1}] \\
    &= M_{n-1}(\lambda) \mathds{E}[\exp \{ \lambda [Z_n - \mu_n] - \psi_n(\lambda) \} | \mathcal{H}_{n-1}]\\
    &= M_{n-1}(\lambda)\\
\end{align*}
since $M_{n-1}(\lambda)$ is a measurable function of the filtration $\mathcal{H}_{n-1}$.
\end{proof}

\begin{lemma}
For all $x \geq 0$ and $\lambda \geq 0$, 
\begin{equation*}
\mathds{P} \left(\sum_{i=1}^{n} \lambda Z_i \leq x + \sum_{i=1}^{n} [\lambda \mu_i + \psi_i(\lambda)] \quad ,\forall t \in \mathds{N} \right) \geq 1 - e^{-x}
\end{equation*}
\label{exp_martingale}
\end{lemma}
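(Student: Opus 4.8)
The plan is to deduce the claim from the exponential martingale $M_n(\lambda)$ of Lemma~\ref{martingale} together with a maximal inequality. For each fixed $\lambda \geq 0$, Lemma~\ref{martingale} tells us that $(M_n(\lambda))_{n \in \mathds{N}}$ is a nonnegative martingale adapted to $(\mathcal{H}_n)$ with $\mathds{E}[M_n(\lambda)] = 1$; setting $M_0(\lambda) := 1$ (the empty sum) keeps this consistent and lets us anchor the filtration at $\mathcal{H}_0$.

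First I would invoke Ville's maximal inequality for nonnegative supermartingales (a martingale is in particular a supermartingale), which gives, for any $a > 0$,
\begin{equation*}
    \mathds{P}\left( \exists n \in \mathds{N} : M_n(\lambda) \geq a \right) \leq \frac{\mathds{E}[M_0(\lambda)]}{a} = \frac{1}{a}.
\end{equation*}
Choosing $a = e^{x}$ for $x \geq 0$ yields $\mathds{P}\left(\exists n : M_n(\lambda) \geq e^{x}\right) \leq e^{-x}$, so with probability at least $1 - e^{-x}$ we have $M_n(\lambda) \leq e^{x}$ for all $n$ simultaneously.

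Next I would unwind the definition of $M_n(\lambda)$. Taking logarithms, the event $\{ M_n(\lambda) \leq e^{x} \}$ is exactly
\begin{equation*}
    \sum_{i=1}^{n} \bigl( \lambda [Z_i - \mu_i] - \psi_i(\lambda) \bigr) \leq x,
\end{equation*}
which rearranges to $\sum_{i=1}^{n} \lambda Z_i \leq x + \sum_{i=1}^{n} \bigl[ \lambda \mu_i + \psi_i(\lambda) \bigr]$. Hence the event appearing in the statement is implied by the complement of $\{\exists n : M_n(\lambda) \geq e^{x}\}$, and therefore has probability at least $1 - e^{-x}$, as required.

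The only point requiring care is the uniform-in-$n$ (``$\forall t \in \mathds{N}$'') nature of the bound: a one-shot Markov inequality applied to $M_n(\lambda)$ at a single $n$ would not suffice, so I would be explicit about citing the maximal (Ville / Doob) inequality, whose applicability rests precisely on the nonnegativity and martingale property established in Lemma~\ref{martingale}. Everything else is elementary rearrangement of logarithms.
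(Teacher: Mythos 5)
Your proposal is correct and follows essentially the same route as the paper: both apply a maximal inequality to the nonnegative exponential martingale $M_n(\lambda)$ with $\mathds{E}[M_n(\lambda)]=1$, then take logarithms and rearrange. The only difference is that you cite Ville's inequality as a known result, whereas the paper derives it inline via the stopping time $\tau_x$, Markov's inequality applied to $M_{\tau_x \wedge n}(\lambda)$, and monotone convergence --- a cosmetic rather than substantive distinction.
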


\begin{proof}
For any $\lambda$, $(M_n (\lambda))_{n \in \mathds{N}}$ is a martingale with respect to $(\mathcal{H}_n)_{n \in \mathds{N}}$ and $\mathds{E}[M_n (\lambda)] = 1$ by Lemma \ref{martingale}. For arbitrary $x \geq 0$, define $\tau_x = \inf \{ n\geq 0 | M_n(\lambda) \geq x \}$ and note that $\tau_x$ is a stopping time corresponding to the first time $M_n$ crosses the boundary $x$. Since $\tau$ is a stopping time with respect to $(\mathcal{H}_n)_{n \in \mathds{N}}$, we have $\mathds{E}[M_{\tau_x \wedge n}(\lambda)] = 1$. Then, by Markov's inequality
\begin{equation*}
    x \mathds{P} (M_{\tau_x \wedge n}(\lambda) \geq x) \leq \mathds{E}[M_{\tau_x \wedge n}(\lambda)] = 1
\end{equation*}

Noting that the event $\{ M_{\tau_x \wedge n}(\lambda) \geq x \} = \bigcup_{k=1}^n \{M_{k}(\lambda) \geq x\} $, we have
\begin{equation*}
\mathds{P} \left( \bigcup_{k=1}^n \{M_{k}(\lambda) \geq x\} \right) \leq \frac{1}{x}
\end{equation*}

Taking the limit as $n \to \infty$, and applying monotone convergence theorem shows that $\mathds{P} \left( \bigcup_{k=1}^\infty \{M_{k}(\lambda) \geq x\} \right) \leq \frac{1}{x}$ or $\mathds{P} \left( \bigcup_{k=1}^\infty \{M_{k}(\lambda) \geq e^x\} \right) \leq e^{-x}$. Then, by definition of $M_k (\lambda)$, we conclude
\begin{equation*}
    \mathds{P} \left( \bigcup_{k=1}^\infty \left \{ \sum_{i=1}^{n} \lambda [Z_i - \mu_i] - \psi_i(\lambda) \geq x \right \}  \right) \leq e^{-x}
\end{equation*}

\end{proof}

\section{ADDITIONAL EXPERIMENTAL RESULTS AND DETAILS}
\label{sect_additional_exp}

All experiments are implemented in Python and carried out on a cluster with Lenovo NeXtScale nx360m5 nodes equipped with two Intel Xeon 12-core Haswell processors (24 cores per node) with core frequency 2.3 GHz and 64 GB of memory. We solve the allocation program \eqref{integer_num} using large-scale mixed integer programming (MIP) solver packages to have efficient computations.

\textbf{Parameter setup}:
\begin{itemize}[labelindent= 0pt, align= left, labelsep=0.4em, leftmargin=*]
\item In all data sets, $\vect{\Theta}^*$ is scaled such that $\|\vect{\Theta}^*\|_{\infty} \leq 1$.
\item Standard deviation of the rewards: $\eta = 0.2$.
\item In the static setting, $d^{t}_u = 1$ for all $u \in [N]$.
\item In the dynamic setting, $d^{t}_u = 1$ with probability $0.2$, $0$ otherwise, independently for each $u \in [N]$.
\item $C_\text{max} = \text{ceil} \left( \frac{1}{M} \sum_{u = 1}^{N} d^{t}_u \right)$.
\item $c_{t, i}$ are uniformly sampled over $\{1, \dots, C_\text{max}\}$ independently for each $t \in [T]$ and $i \in [M]$.

\end{itemize}



In Figures \ref{exp_1}, \ref{exp_2}, \ref{exp_3} and \ref{exp_4}, we provide detailed  results for different experimental settings described in Section \ref{sect_exp}.

\begin{figure}[ht]
\center
\includegraphics{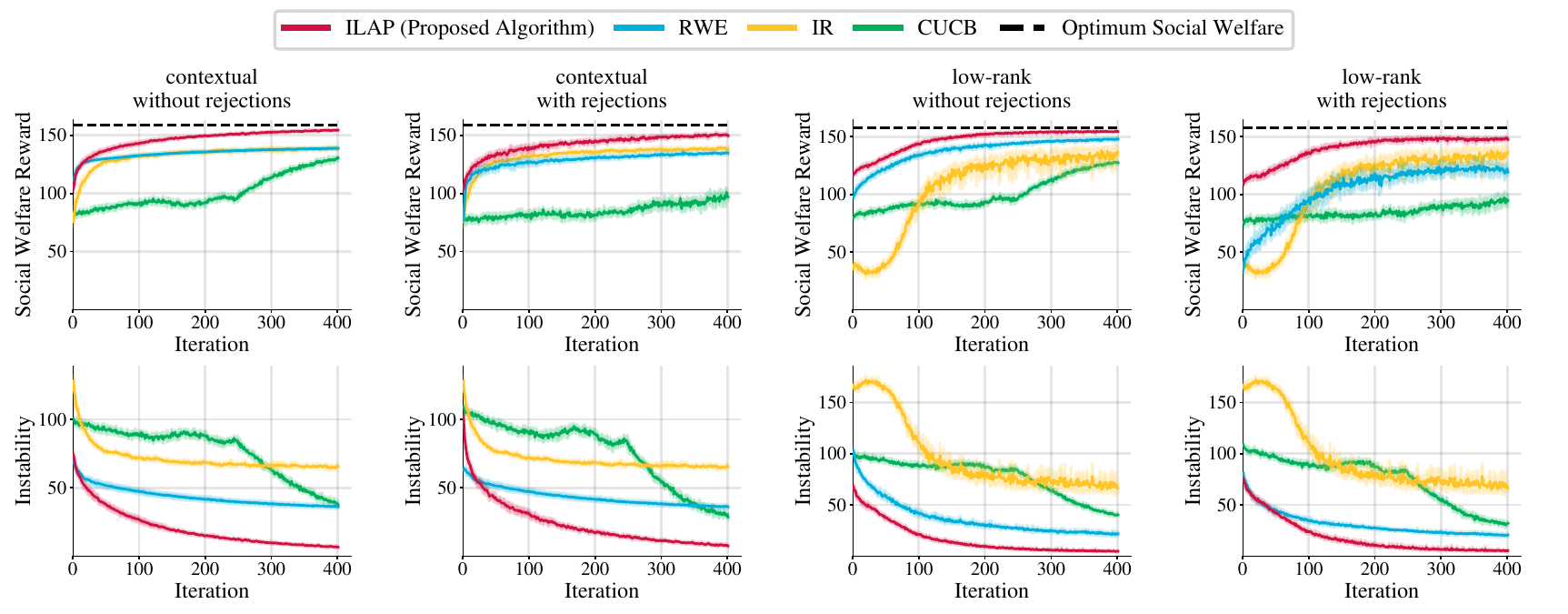}
\caption{Experimental results for synthetic data in a static setting with $N = 250$, $M = 200$, $R = 20$. The experiments are run on $20$ problem instances and means are reported together with error regions that indicate one standard deviation of uncertainty.}
\label{exp_1}
\end{figure}

\begin{figure}[ht]
\center
\includegraphics{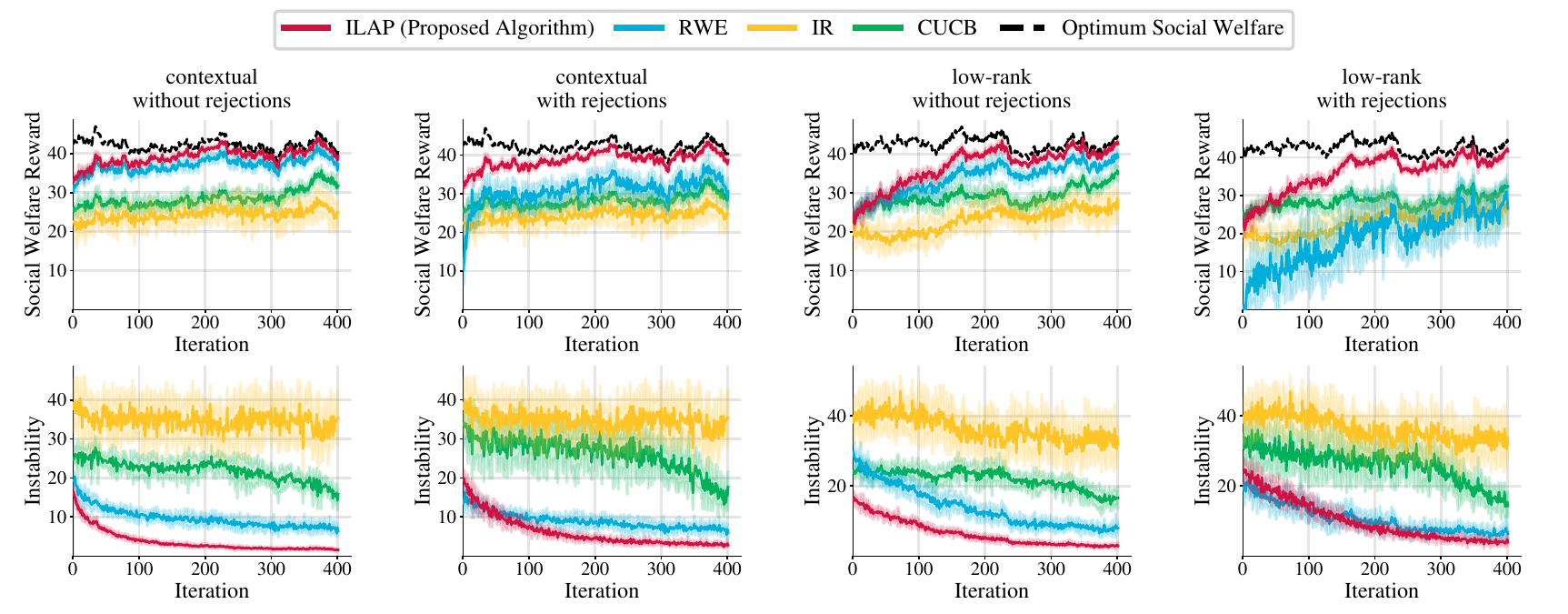}
\caption{Experimental results for synthetic data in a dynamic setting with $N = 350$, $M = 50$, $R = 10$, probability of activity $0.2$. The experiments are run on $20$ problem instances and means are reported together with error regions that indicate one standard deviation of uncertainty.}
\label{exp_2}
\end{figure}

\begin{figure}[ht]
\center
\includegraphics{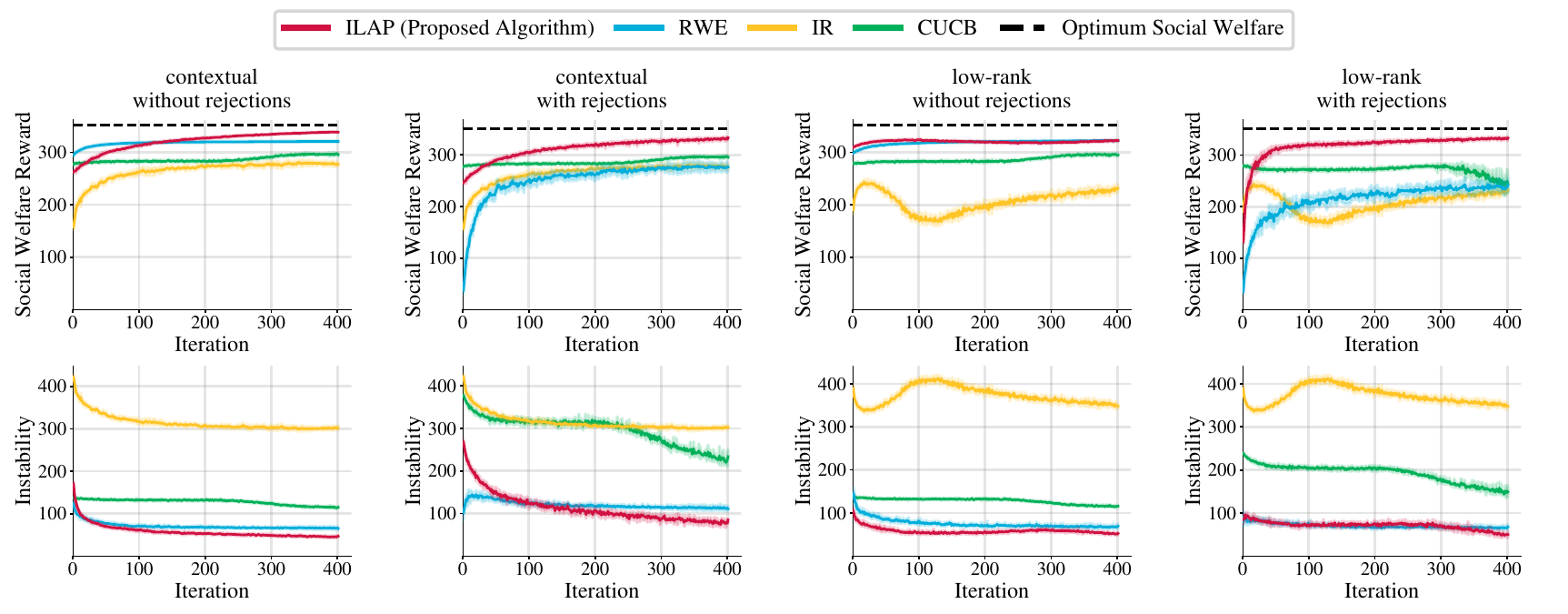}
\caption{Experimental results for MovieLens 100k data in a static setting. The experiments are run on $20$ problem instances and means are reported together with error regions that indicate one standard deviation of uncertainty.}
\label{exp_3}
\end{figure}

\begin{figure}[ht]
\center
\includegraphics{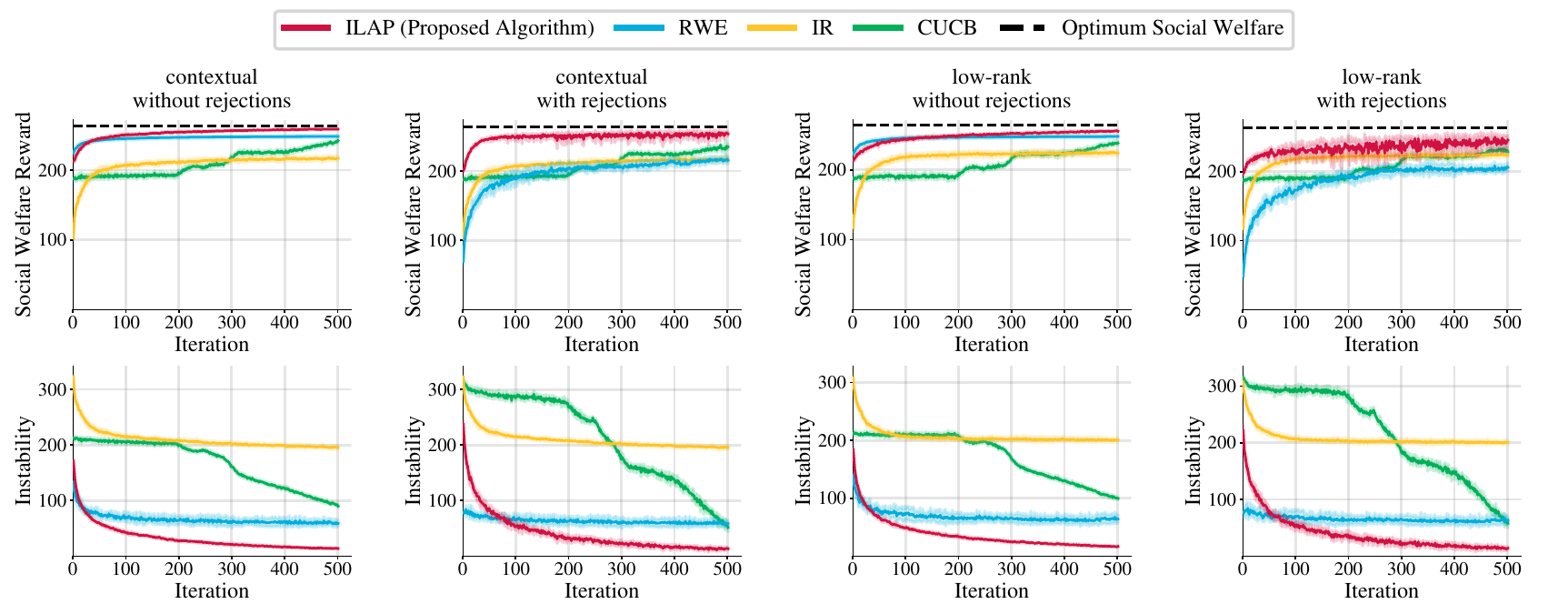}
\caption{Experimental results for Yelp data in a static setting. The experiments are run on $20$ problem instances and means are reported together with error regions that indicate one standard deviation of uncertainty.}
\label{exp_4}
\end{figure}

\end{document}